\def\eqref#1{equation~\ref{#1}}
\def\1{\bm{1}}
\DeclareMathAlphabet{\mathsfit}{\encodingdefault}{\sfdefault}{m}{sl}
\SetMathAlphabet{\mathsfit}{bold}{\encodingdefault}{\sfdefault}{bx}{n}
\newcommand{\E}{\mathbb{E}}
\definecolor{gray94}{gray}{.94}
\definecolor{gray90}{gray}{.90}
\definecolor{basegray}{RGB}{245,245,245}
\definecolor{rlvrblue}{RGB}{230,245,255}
\definecolor{lightred}{RGB}{255,240,240}
\definecolor{lightgreen}{RGB}{240,255,240}
\definecolor{lightblue}{RGB}{240,245,255}
\definecolor{lightgray}{RGB}{245,245,245}
\definecolor{MyGreen}{rgb}{0.13, 0.55, 0.13}
\newtcolorbox{AIbox}[2][]{aibox,title=#2,#1}
\definecolor{rliableolive}{HTML}{BBCC33}
\definecolor{rliableblue}{HTML}{77AADD}
\definecolor{rliablered}{HTML}{EE8866}
\definecolor{SDEblue}{RGB}{28 58 88}
\definecolor{cc1}{rgb}{1.0, 0.44, 0.37}
\definecolor{cc2}{rgb}{0.0, 0.2, 0.6}
\definecolor{cc3}{RGB}{255, 191, 0}
\definecolor{cc4}{RGB}{0, 128, 128}
\definecolor{myblue}{RGB}{0,77,64} 
\definecolor{mybg}{RGB}{240,248,247} 
\newtcolorbox{takeawaybox}{
  enhanced,
  colback=mybg,
  colframe=myblue,
  coltitle=myblue,
  boxrule=0.6pt,
  arc=6pt,
  left=6pt,
  right=6pt,
  top=6pt,
  bottom=6pt,
  title=\textsc{Takeaway}
}
\newtheorem{theorem}{Theorem}[section]
\newtheorem{proposition}[theorem]{Proposition}
\newtheorem{lemma}[theorem]{Lemma}
\newtheorem{corollary}[theorem]{Corollary}
\newtheorem{definition}[theorem]{Definition}
\title{
The Path Not Taken: \\
RLVR Provably Learns Off the Principals
}
\author[1,2, *]{Hanqing Zhu}
\author[2]{Zhenyu Zhang}
\author[1]{Hanxian Huang}
\author[1]{DiJia Su}
\author[1]{Zechun Liu}
\author[1]{Jiawei Zhao}
\author[1]{\newline Igor Fedorov}
\author[1]{Hamed Pirsiavash}
\author[2]{Zhizhou Sha}
\author[1]{Jinwon Lee}
\author[2]{David Z. Pan} 
\author[2, \dagger]{\newline Zhangyang Wang}
\author[1, \dagger]{Yuandong Tian}
\author[1, \dagger]{Kai Sheng Tai}
\affiliation[1]{Meta AI}
\affiliation[2]{The University of Texas at Austin}
\abstract{
Reinforcement Learning with Verifiable Rewards (RLVR) reliably improves large-language-model reasoning while apparently modifying only a small fraction of parameters. We revisit this paradox and show that sparsity is a surface artifact of a \textbf{model-conditioned optimization bias}: for a fixed pretrained model, updates consistently localize to model-preferred parameter regions, remain highly consistent across runs, and are largely invariant to datasets and RL recipes. We mechanistically explain these dynamics with a \textbf{Three-Gate Theory}:
Gate I (KL Anchor) imposes a KL-constrained update;
Gate II (Model Geometry) steers steps \emph{off principal directions} into low-curvature, spectrum-preserving subspaces;
Gate III (Precision) hides micro-updates in non-preferred regions, making the off-principal bias appear as sparsity.
We then validate this theory and, to our knowledge, provide the first parameter-level characterization of RLVR’s learning dynamics: \textbf{RLVR learns off-principal directions in weight space}, achieving gains via minimal spectral drift, reduced principal-subspace rotation, and off-principal update alignment.
In contrast, SFT targets principal weights, distorts the spectrum, and even lags RLVR.

Together, these results provide the first parameter-space account of RLVR’s training dynamics, revealing clear regularities in how parameters evolve.
Crucially, we show that \textbf{RL operates in a distinct optimization regime from SFT}, so directly adapting SFT-era parameter-efficient fine-tuning (PEFT) methods can be flawed, as evidenced by our case studies on advanced sparse fine-tuning and LoRA variants.
We hope this work charts a path toward a \textit{white-box} understanding of RLVR and the design of \textit{geometry-aware, RLVR-native} learning algorithms, rather than repurposed SFT-era heuristics.
}
\begin{document}\maketitle

\section{Introduction}

Large Reasoning Models (LRMs), such as OpenAI-o3~\citep{jaech2024openai} and DeepSeek-R1~\citep{guo2025deepseek},
have advanced the ability of large language models to solve complex mathematical and programming tasks. 
A key driver is large-scale Reinforcement Learning with Verifiable Rewards (RLVR), 
which uses simple, easy-to-verify rewards to incentivize complex, multi-step reasoning.

Yet, despite these advances, the mechanisms by which RL shapes model representations and behavior remain poorly understood. 
Given the substantial computational resources devoted to RL~\citep{grok2025}, especially relative to supervised fine-tuning (SFT), and the emergence of striking new behaviors, 
one might naturally assume that such progress arises from significant parameter changes. 
However, recent evidence points in the opposite direction: RL induces \emph{sparse} parameter updates, whereas SFT yields \emph{dense} ones~\citep{mukherjee2025reinforcement}. 
This counterintuitive finding reveals a paradox, \textit{a high-cost, high-gain process that relies on surprisingly minimal weight modification.}

\begin{figure}[h]
    \centering
    \includegraphics[width=0.95\linewidth]{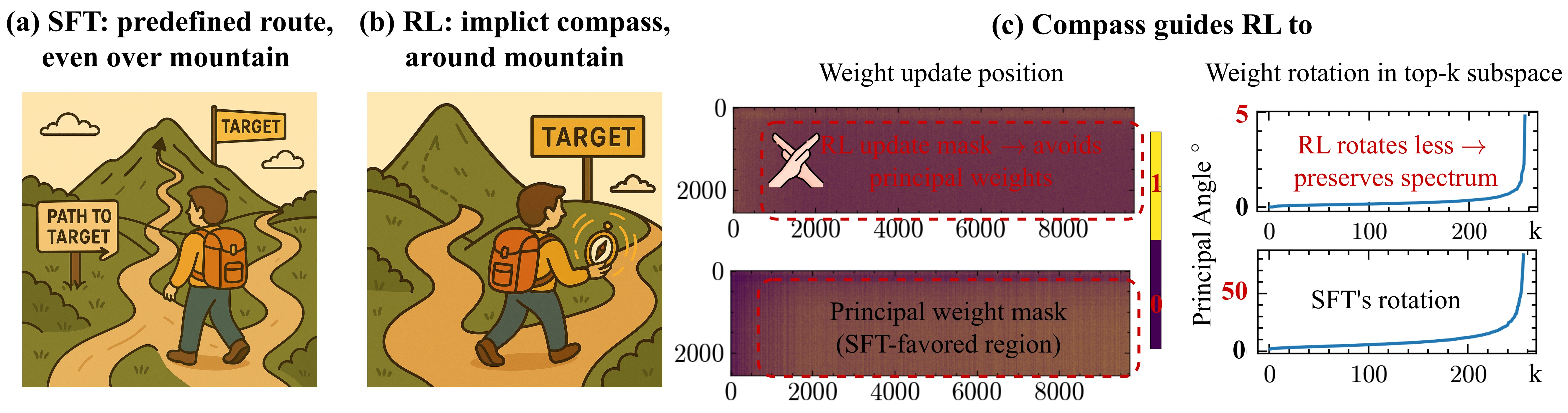}
    \vspace{-10pt}
    \caption{\textbf{SFT vs.\ RLVR: optimization geometry and evidence.}
    (a) \textbf{SFT} follows an externally guided route and traverses high-curvature directions (``over the mountain'') to reach the target.
    (b) \textbf{RLVR}, without an explicit teacher, behaves as if steered by an \emph{implicit compass} (a model-conditioned optimization bias), taking a low-curvature detour.
    (c) \textbf{Evidence.} \textit{Left:} positional maps comparing the \emph{update mask} (non-zero parameter updates) with the \emph{principal mask} (positions aligned with top-$k$ singular subspaces, defined by the largest-magnitude entries of the rank-$k$ SVD reconstruction~\cite{liu2025lift}; details in Sec.~\ref{sec:princ}). RLVR updates avoid principal-weight positions, whereas SFT targets them~\citep{meng2024pissa,liu2025lift}.
    \textit{Right:} principal-angle curves of the top-$k$ subspaces show that RLVR rotates less (spectrum preserved), while SFT rotates more.}
    \vspace{-10pt}
    \label{fig:teaser}
\end{figure}

\paragraph{\textbf{Key observation.}}
We resolve this paradox by uncovering a deeper mechanism behind the apparent sparsity: a \textbf{persistent, model-conditioned optimization bias}.
For a fixed pretrained model, this bias concentrates visible updates into a narrow, stable subset of parameters and remains strikingly invariant across diverse algorithms and datasets—\emph{a model-conditioned feature}.
\texttt{bfloat16} precision further accentuates the apparent sparsity by attenuating micro-updates in non-preferred regions.
As illustrated in Fig.~\ref{fig:teaser}, we depict this bias as an \emph{implicit compass}: unlike SFT with an explicit teacher, RLVR is subtly guided during optimization even without one.

\paragraph{\textbf{Research Question.}}
These phenomena raise a central question about RL’s learning dynamics:

\begin{tcolorbox}[colback=pink!10!white,colframe=black,boxrule=0.9pt,boxsep=2pt,top=3pt,bottom=3pt,left=3pt,right=3pt]
\begin{center}
\textbf{\emph{Where does this optimization bias originate, and how does it shape parameter evolution during training?}}
\end{center}
\end{tcolorbox}

\paragraph{\textbf{Mechanistic explanation.}}
We formalize the mechanism behind RLVR’s optimization dynamics through a \textbf{Three-Gate Theory}.
Gate I (KL Anchor) enforces a KL-constrained update at each on-policy step.
Gate II (Model Geometry) then steers this update \textit{off the principal directions} toward low-curvature, spectrum-preserving subspaces embedded in the structured optimization landscape of a \emph{pretrained} model, unlike training from a \emph{randomly initialized} model.
This geometry gate explains the \emph{model-conditioned} nature of the bias: it arises from the pretrained landscape rather than particular datasets or RL recipes.
Gate III (Precision) acts as a realization filter by hiding those micro-updates in non-preferred regions, making the off-principal bias \emph{appear} sparse.

\paragraph{\textbf{Experimental validation.}}
We validate this theory with a comprehensive suite of experiments, uncovering \textbf{striking optimization dynamics}: \emph{RLVR learns off the principal directions}, operating in a regime \emph{disjoint} from SFT’s. We show that
(i) \emph{RLVR preserves} the pretrained spectral structure with , whereas \emph{SFT distorts} it;
(ii) \emph{RLVR avoids} principal weights—the high-energy directions indicated by rank-$k$ SVD reconstructions—whereas parameter-efficient \emph{SFT} \emph{targets} them~\citep{liu2025lift}; and
(iii) \emph{RLVR depends on} the pretrained geometry: function-preserving orthogonal rotations \emph{abolish} the effect of update locality overlap, consistent with a \emph{model-conditioned} optimization bias.

\paragraph{\textbf{Rethinking learning algorithms for RLVR.}}
Beyond characterizing learning dynamics, our findings reveal that RLVR operates in a regime fundamentally distinct from SFT.
Consequently, direct carry-over of SFT-era parameter-efficient fine-tuning (PEFT) methods can be flawed, especially those over-aligned with SFT’s optimization geometry.
\emph{(1) Sparse fine-tuning.}
Restricting updates to \emph{principal} weights, an SFT prior~\citep{liu2025lift}, yields the weakest optimization trajectory and markedly degrades performance, as reflected by both forward-KL drift~\citep{shenfeld2025rl} and accuracy.
Conversely, updating \emph{non-principal, low-magnitude} weights, precisely the off-principal regime predicted by our theory, closely tracks the dense RLVR trajectory, validating our parameter-level understanding.
\emph{(2) LoRA variants.}
A recent report~\citep{schulman2025lora} finds that low-rank LoRA (even rank-1) can match full-parameter performance in RL.
However, our analysis challenges their belief that advanced LoRA variants such as PiSSA~\citep{meng2024pissa} bring further gains: PiSSA explicitly targets principal weights-suitable for SFT but \emph{misaligned} with RLVR’s off-principal dynamics.
Empirically, PiSSA offers no obvious gain over standard LoRA.
Moreover, enforcing principal-direction updates-e.g., via learning-rate scaling, a common requirement to match full-parameter performance with low-rank adapters, often \emph{destabilizes} training and precipitates early collapse.

\paragraph{\textbf{Contributions.}}
Our work makes the following key contributions:
\begin{itemize}[leftmargin=*]
  \item \textbf{Observation (Sec.~\ref{sec:obs-bias}).} We identify a \textbf{persistent, model-conditioned optimization bias} in RLVR fine-tuning that is largely invariant to datasets and RL variants, yet highly consistent for a fixed pretrained model.
  \item \textbf{Theory (Sec.~\ref{sec:theory}).} We propose the \textbf{Three-Gate Theory}—\emph{KL Anchor, Model Geometry, and Precision}—which mechanistically explains how RL updates are constrained, steered, and filtered to produce the observed optimization pattern.
  \item \textbf{Evidence (Sec.~\ref{sec:gate2-validation}).} We provide \textbf{a parameter-level validation} contrasting the training dynamics of RL and SFT, including reduced spectral drift, smaller principal-subspace rotation, low overlap with principal weights, and basis-rotation interventions that isolate geometry as the steering core.
  \item \textbf{Insight (Sec.~\ref{sec:peft-implications}).} We show that SFT-era sparse and low-rank priors (e.g., principal-targeted variants) are misaligned with RLVR’s off-principal dynamics, motivating geometry-aware, RLVR-native learning algorithms.
\end{itemize}

Together, these findings provide the \textbf{first parameter-space account} linking RL optimization dynamics to weight evolution, complementing concurrent work that focuses primarily on policy-level or distributional effects~\citep{wu2025invisible, shenfeld2025rl}.
\textbf{Crucially}, RLVR operates in a distinct, geometry-driven optimization regime from SFT, calling for the development of \emph{RL-native, geometry-aware} PEFT methods (see Sec.~\ref{sec:peft-implications}) and marking a step toward a white-box understanding of RLVR training.

\section{A Persistent, Model-Conditioned Optimization Bias in RLVR}
\label{sec:obs-bias}

We begin with the observation that RL induces sparse parameter updates, and go beyond quantification to ask \emph{where RL localizes these changes} in order to understand the underlying mechanism.
Our analysis reveals a \textbf{model-conditioned optimization bias}: for a fixed pretrained model, RL consistently routes visible updates to specific regions of the network, highly consistent across runs and largely invariant to datasets and RL variants.
We further find that the observed sparsity is a \emph{superficial readout} of this bias, amplified by \texttt{bfloat16} precision, which attenuates micro-updates in non-preferred regions.

\paragraph{\textbf{Model suite.}}
We analyze publicly released checkpoints, as shown in Tab.~\ref{tab:sparsity_joint}. 
The suite spans multiple RLVR variants (e.g., GRPO, DAPO, Reinforcement++), diverse data domains, and several model families and types (dense and Mixture-of-Experts).
We place particular emphasis on \texttt{DeepSeek-R1-Distill-Qwen-1.5B} (\texttt{DS-Qwen-1.5B}), for which a long-horizon RL checkpoint is available~\citep{liu2025prorl}.
This model serves as a robust case study given its extensive training for over 3,000 steps on a diverse data mixture encompassing mathematics, coding, STEM, logic puzzles, and instruction-following tasks.

\subsection{A Robust, \texttt{bfloat16}-aware Analysis of Update Sparsity}


\definecolor{SFTrow}{HTML}{F3F3F3}   
\definecolor{RLrow}{HTML}{EAF2FF}    

\begin{table}[t]
\centering
\caption{\small
\textbf{Update sparsity in SFT vs.\ RLVR.}
\textit{Higher} $\mathrm{sparsity}_{\mathrm{bf16}}$ indicates more weights unchanged.
RLVR is consistently much sparser than SFT.
$\dagger$ \textit{Mixed} denotes a diverse data source combining math, coding, STEM, logic puzzles, and instruction-following~\cite{liu2025prorl}.
}

\label{tab:sparsity_joint}
\setlength{\tabcolsep}{4pt}
\resizebox{0.8\textwidth}{!}{%
\vspace{-10pt}
\begin{tabular}{@{} l l l l r @{}}
\toprule
\textbf{Base Model} & \textbf{Finetuned (FT) Model} & \textbf{Algorithm} & \textbf{Data} & \textbf{$\mathrm{sparsity}_{\mathrm{bf16}}$} \\
\midrule
\rowcolor{SFTrow}
\href{https://huggingface.co/Qwen/Qwen2.5-Math-1.5B}{Qwen-1.5B} &
\href{https://huggingface.co/deepseek-ai/DeepSeek-R1-Distill-Qwen-1.5B}{DS-R1-Distill-Qwen-1.5B} &
SFT & Mixed & 2.8\% \\
\rowcolor{RLrow}
\href{https://huggingface.co/deepseek-ai/DeepSeek-R1-Distill-Qwen-1.5B}{DS-R1-Distill-Qwen-1.5B} &
\href{https://huggingface.co/agentica-org/DeepScaleR-1.5B-Preview}{DeepScaleR-1.5B-Preview} &
GRPO & Math & 53.8\% \\
\rowcolor{RLrow}
\href{https://huggingface.co/deepseek-ai/DeepSeek-R1-Distill-Qwen-1.5B}{DS-R1-Distill-Qwen-1.5B} &
\href{https://huggingface.co/agentica-org/DeepCoder-1.5B-Preview}{DeepCoder-1.5B-Preview} &
GRPO & Code & 45.5\% \\
\rowcolor{RLrow}
\href{https://huggingface.co/deepseek-ai/DeepSeek-R1-Distill-Qwen-1.5B}{DS-R1-Distill-Qwen-1.5B} &
\href{https://huggingface.co/Fate-Zero/Archer-Code-1.5B}{Archer-Code-1.5B} &
GRPO & Code & 52.5\% \\
\rowcolor{RLrow}
\href{https://huggingface.co/deepseek-ai/DeepSeek-R1-Distill-Qwen-1.5B}{DS-R1-Distill-Qwen-1.5B} &
\href{https://huggingface.co/nvidia/Nemotron-Research-Reasoning-Qwen-1.5B}{NV-ProRL} &
GRPO & Mixed$\dagger$ & 38.4\% \\
\rowcolor{RLrow}
\href{https://huggingface.co/deepseek-ai/DeepSeek-R1-Distill-Qwen-1.5B}{DS-R1-Distill-Qwen-1.5B} &
\href{https://huggingface.co/nvidia/Nemotron-Research-Reasoning-Qwen-1.5B}{NV-ProRL-v2} &
Reinforcement++ & Mixed$\dagger$ & 36.3\% \\
\midrule
\rowcolor{SFTrow}
\href{https://huggingface.co/Qwen/Qwen3-8B-Base}{Qwen3-8B-Base} &
\href{https://huggingface.co/Kwai-Klear/Klear-Reasoner-8B-SFT}{Klear-Reasoner-8B-SFT} &
SFT & Math+Code & 0.6\% \\
\rowcolor{RLrow}
\href{https://huggingface.co/Kwai-Klear/Klear-Reasoner-8B-SFT}{Klear-Reasoner-8B-SFT} &
\href{https://huggingface.co/Kwai-Klear/Klear-Reasoner-8B}{Klear-Reasoner-8B} &
GRPO & Math+Code & 69.5\% \\
\rowcolor{RLrow}
\href{https://huggingface.co/Qwen/Qwen3-8B-Base}{Qwen3-8B-Base} &
\href{https://huggingface.co/TMLR-Group-HF/GT-Qwen3-8B-Base}{GT-Qwen3-8B-Base} &
GRPO & Math & 79.9\% \\
\rowcolor{RLrow}
\href{https://huggingface.co/Qwen/Qwen3-8B-Base}{Qwen3-8B-Base} &
OURS &
DAPO & Math & 79.7\% \\
\midrule
\rowcolor{SFTrow}
\href{https://huggingface.co/Qwen/Qwen3-14B-Base}{Qwen3-14B-Base} &
\href{https://huggingface.co/ReasoningTransferability/UniReason-Qwen3-14B-think-SFT}{UniReason-Qwen3-14B-think-SFT} &
SFT & Math & 18.8\% \\
\rowcolor{RLrow}
\href{https://huggingface.co/Qwen/Qwen3-14B-Base}{Qwen3-14B-Base} &
\href{https://huggingface.co/ReasoningTransferability/UniReason-Qwen3-14B-RL}{UniReason-Qwen3-14B-RL} &
GRPO & Math & 68.3\% \\
\midrule
\rowcolor{RLrow}
\href{https://huggingface.co/Qwen/Qwen3-4B}{Qwen3-4B} &
\href{https://huggingface.co/POLARIS-Project/Polaris-4B-Preview}{Polaris-4B-Preview} &
DAPO & Math & 79.3\% \\
\rowcolor{RLrow}
\href{https://huggingface.co/deepseek-ai/DeepSeek-R1-Distill-Qwen-7B}{DS-R1-Distill-Qwen-7B} &
\href{https://huggingface.co/POLARIS-Project/Polaris-7B-Preview}{Polaris-7B-Preview} &
DAPO & Math & 61.7\% \\
\rowcolor{RLrow}
\href{https://huggingface.co/Qwen/Qwen3-30B-A3B}{Qwen3-30B-A3B} &
\href{https://huggingface.co/forestliutc/UloRL}{UloRL-A3B} &
GRPO & Math & 91.7\% \\
\bottomrule
\end{tabular}}
\vspace{-10pt}
\end{table}

\textbf{A \texttt{bfloat16}-aware probe for unchanged weights.}
\texttt{bfloat16} (bf16) is standard in modern RL frameworks like verl~\citep{sheng2024hybridflow}, to improve throughput without compromising performance.
However, analyzing parameter changes under bf16 requires a careful probe.
Its unique numerical format, with only 7 mantissa bits for precision, means that the smallest representable difference between two numbers scales with their magnitude.
Consequently, a fixed absolute-tolerance check as used in~\citep{mukherjee2025reinforcement}, is \textit{unreliable}, which can over- or under-report the fraction of unchanged weights (see Appendix~\ref{app:bf16diff}). 

To ensure a rigorous report, 
we adopt a numerically robust, \texttt{bfloat16}-aware probe to define the update sparsity $\mathrm{sparsity}_{\mathrm{bf16}}$ as the fraction of parameters that remain unchanged.
\begin{definition}[Unchanged Weight in bf16]
\label{def:bf16-unchanged}
Let $w_i,\widehat w_i\in\mathbb{R}$ be scalars stored in bf16 (finite, nonzero).
We say $w_i$ is \emph{unchanged} with respect to $\widehat w_i$ iff
\begin{equation}
\small
\label{eq:bf16-close}
\bigl|\,\widehat{w}_i - w_i\,\bigr|
\;\le\;
\eta\,\max \bigl(|w_i|,\;|\widehat{w}_i|\bigr),
\qquad \eta=10^{-3}.
\end{equation}
Choosing $\eta{=}10^{-3}<2^{-9}$ makes \eqref{eq:bf16-close} equivalent to bitwise equality (See Appendix~\ref{app:bf16diff-therom},).
\end{definition}

\begin{definition}[bf16-aware Update Sparsity]
\label{def:bf16-sparsity}
Write $x\approx^{\mathrm{bf16}}_{\eta}y$ for Def.~\ref{def:bf16-unchanged}.
Define the bf16 change count
$\|\theta^1-\theta^0\|^{\mathrm{bf16}}_{0,\eta}
\;:=\;
\bigl|\{\,i:\ \theta^1_i\not\approx^{\mathrm{bf16}}_{\eta}\theta^0_i\,\}\bigr|
$ and the corresponding sparsity
\begin{equation}
\small
\mathrm{sparsity}_{\mathrm{bf16}}(\theta^0,\theta^1;\eta)
\;:=\;
1-\|\theta^1-\theta^0\|^{\mathrm{bf16}}_{0,\eta}/n.
\end{equation}
where $n$ is the total number of parameters.
Values near $1$ indicate few stored changes, while values near $0$ indicate dense apparent change.
\end{definition}

\textbf{RLVR update sparsity results.}
As shown in Tab.~\ref{tab:sparsity_joint}, our analysis confirms that RL yields substantially higher update sparsity than  SFT. 
Across models, SFT sparsity is consistently low (typically 0.6\%–18.8\%), whereas RL sparsity is an order of magnitude higher, ranging from 36\% to 92\%. 
However, absolute levels on recent checkpoints are lower than earlier reports~\citep{mukherjee2025reinforcement}, underscoring the need for bf16‑aware probes and re‑evaluation on current models.

\subsection{RLVR Exhibits Model-Conditioned Update Locality}

Magnitude alone does not reveal \emph{where} changes occur, impeding deep analysis of \emph{how} sparse changes arise.
\begin{wraptable}{r}{7.5cm}
\vspace{-10pt}
\caption{\small Cross-run stability for 13th block.}
\vspace{-10pt}
\label{tab:jaccard_layers}
\resizebox{0.98\linewidth}{!}{%
\begin{tabular}{@{} lcc @{}}
\toprule
\textbf{Layer} & \textbf{Jaccard Overlap} & \textbf{Random Baseline} \\
\midrule
Q & 0.580 & 0.430 \\
K & 0.580 & 0.413 \\
V & 0.597 & 0.467 \\
O & 0.552 & 0.373 \\
MLP-down & 0.585 & 0.453 \\
MLP-up & 0.578 & 0.443 \\
MLP-gate & 0.575 & 0.437 \\
\bottomrule
\end{tabular}
}
\vspace{-15pt}
\end{wraptable}
We therefore examine the \emph{updated subnetwork}.
We use 5 independent RLVR checkpoints from the same \texttt{DS-Qwen-1.5B} in Tab.~\ref{tab:sparsity_joint}, trained on different datasets and RLVR algorithms.
For each layer $\ell$ and run $r$, we first
form the bf16-aware \emph{changed} mask $M^{(r)}_\ell := \mathbf{1} \big[\,W^{(r)}_\ell \not\approx^{\mathrm{bf16}}_{\eta} W^0_\ell\,\big]$ (Def.~\ref{def:bf16-sparsity}) against the base weights $W^0_\ell$.

\textbf{Stability across runs (\emph{Is the bias persistent?})}~
We first analyze their spatial agreement using \textit{Jaccard Overlap}.
For runs $r,s$, let $A=\{(i,j):M^{(r)}_{\ell,ij}=1\}$ and $B=\{(i,j):M^{(s)}_{\ell,ij}=1\}$.
We report the mean off-diagonal of the pairwise Jaccard matrix $J(A,B)=\frac{|A\cap B|}{|A\cup B|}$ and compare it to the independent Bernoulli baseline
$\mathbb{E}[J]=\tfrac{pq}{p+q-pq}$.
As summarized in Tab.~\ref{tab:jaccard_layers}, Jaccard is consistently high across runs, confirming a shared footprint when trained from the same base model, with Jaccard matrix shown in Fig.~\ref{fig:jaccard}.

\begin{figure}[t]
    \centering
    \includegraphics[width=0.95\linewidth]{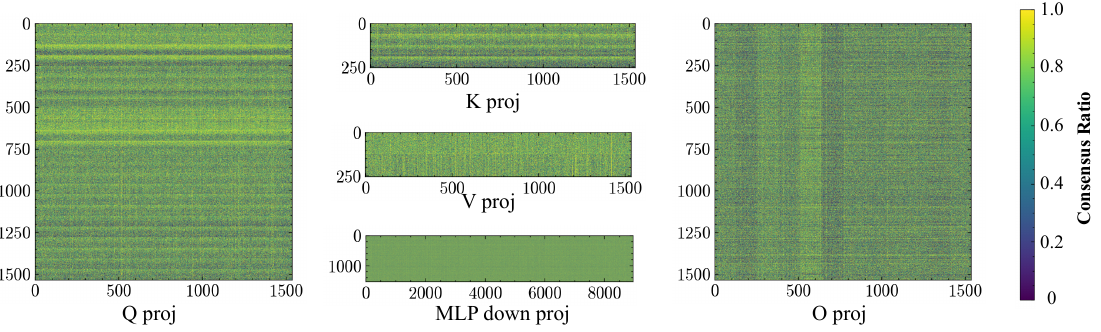}
    \vspace{-10pt}
    \caption{\textbf{Consensus ratio of weight updates.}
        Across five RLVR runs, we plot the 13th layer’s projections (Q/K/V/O) and the MLP down projection.
        Lighter bands mark coordinates updated in most runs, revealing a stable, stripe-like routing pattern rather than random scatter (zoom in to see fine structure).
    }
    \label{fig:strips}
    \vspace{-15pt}
\end{figure}

\textbf{Consensus ratio (Where do updates land?)}~
Stability alone does not indicate \emph{where} updates land. We therefore visualize and analyze the consensus ratio
$C_{\ell,ij} = \tfrac{1}{R}\sum_{r=1}^{R} M^{(r)}_{\ell,ij}$,
the fraction of runs realizing a \emph{weight update} at coordinate $(i,j)$.
Values near $1$ indicate that \emph{all} runs consistently change that weight; values near $0$ indicate
that none do.
As shown in Fig.~\ref{fig:strips}, consensus maps reveal contiguous row/column bands, stripe-like, localized routing rather than scattered noise.
Especially, there are obvious \emph{row-wise stripes in} Q/K/V projections and \emph{column-wise stripes in} O projections.
This exposes a clear \textbf{optimization bias}: 
\emph{RLVR consistently concentrates updates in specific regions of the parameter matrices for a fixed pretrained model,  
even though the five runs use disjoint data and RL variants.}

\textbf{Temporal stability (How does the bias emerge over time?)}
To examine \emph{within‑run} dynamics, we track the row‑wise ratio
$\rho_{\ell,i}(t)=\tfrac{1}{n_\ell}\sum_j M_{\ell,ij}(t)$
and column‑wise ratio
$\kappa_{\ell,j}(t)=\tfrac{1}{m_\ell}\sum_i M_{\ell,ij}(t)$
across checkpoints at $t$ steps. 
On \texttt{DS‑Qwen‑1.5B} (training setting in Appendix~\ref{appx:training}), the \emph{relative} profiles $\rho_{\ell,\cdot}(t)$
and $\kappa_{\ell,\cdot}(t)$ remain aligned while
\begin{wrapfigure}{r}{0.75\textwidth}
    \vspace{-10pt}
    \centering
    \includegraphics[width=\linewidth]{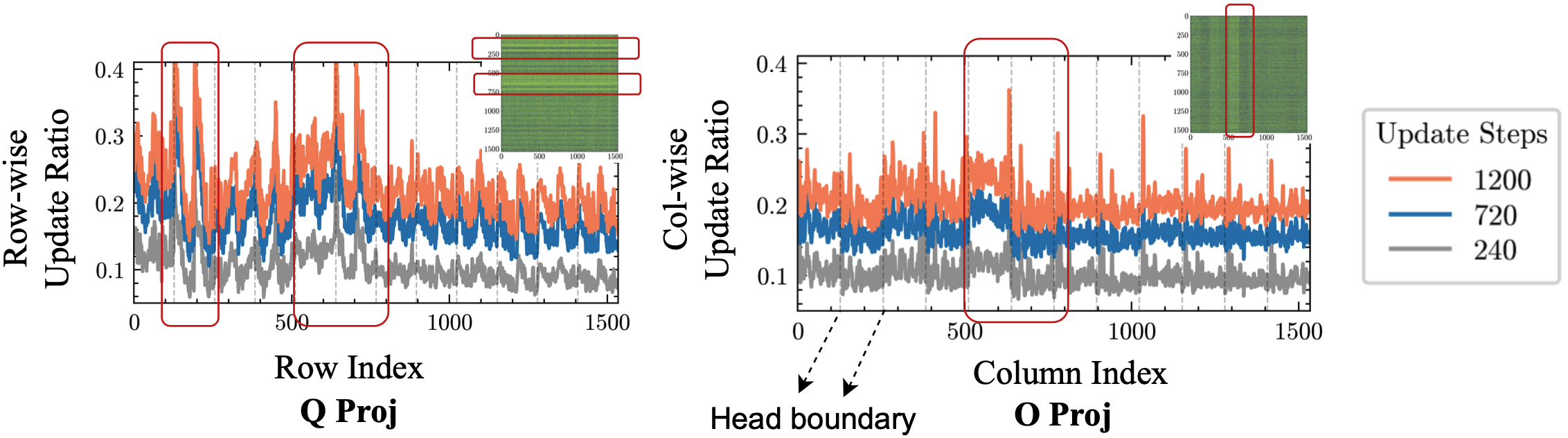}
    \vspace{-15pt}
    \caption{\textbf{Temporal emergence of the optimization bias} with row and column-wise update ratios for the 13th attention block across gradient update steps ($t\!\in\!\{240,720,1200\}$), smoothed with a 3-step window. The row-dominant (Q) and column-dominant (O) patterns are consistent with the bias structures in Fig.~\ref{fig:strips}.
    We visualize the head boundaries with grey dashed lines. The bias appears not only across heads but also within heads.
    }
    \label{fig:rowwise_dynamics}
    \vspace{-15pt}
\end{wrapfigure}
overall density grows as shown in Fig.~\ref{fig:rowwise_dynamics}: peaks and troughs persist. 
The routing bias \emph{emerges early} and is \emph{reinforced over training}, indicating a temporally stable phenomenon rather than a transient artifact.
Moreover, the peak is consistent with the bias structure shown in Fig.~\ref{fig:strips}.
We also show their remaining column-wise (Q) and row-wise (O) update ratio dynamics in Fig.~\ref{fig:rowwise_dynamics_comp}, without a clear trend, \textit{indicating the bias is indeed structured, not random.}

\textbf{Across model families:(\emph{Is the bias generic?})}
We observe similar stripe-structured footprints on \texttt{Llama} and \texttt{Mistral} (Fig.~\ref{fig:extra_llama} in Appendix), suggesting the routing bias is generic to RLVR.

\subsection{Sparsity Is a Superficial Artifact of the Optimization Bias}

The stable footprint of where updates land, persisting both throughout training and in the final model, suggests the focus should move from sparsity itself to the underlying optimization bias.

We find that sparsity is actually the \emph{readout} of this optimization bias, whose visibility is amplified by the precision limits of bf16 storage. Because bf16 has a limited mantissa, changes smaller than the unit-in-the-last-place (ULP) threshold (Lemma~\ref{lem:ULP-lens}) are not representable. Therefore, if RLVR consistently routes sub-ULP updates toward a particular subset of parameters, the stored values will not change, and the result appears as sparsity.

We test this hypothesis by increasing the learning rate to scale otherwise sub-ULP updates above the representable threshold. As predicted, the apparent update sparsity largely disappears. This directly challenges the interpretation of \citep{mukherjee2025reinforcement} that sparsity stems from zero gradients. 
Consistent with this view, concurrent work observes that sparsity mostly vanishes under \texttt{float32} storage~\citep{shenfeld2025rl} by increasing the precision, even though task performance does not improve.
Hence, our results point to sparsity as \textit{a byproduct of an optimization bias interacting with finite precision}. 

\textbf{Clarification on precision.}
\label{par:prec}
It may be tempting to blame precision limit for sparsity. In fact, \textsc{verl} keeps optimizer states and gradient reductions/accumulation in
\texttt{float32}\footnote{\href{https://github.com/volcengine/verl/blob/main/verl/workers/fsdp_workers.py\#L434-L441}{verl mixed-precision settings with \texttt{\{reduce\_type, buffer\_dtype\}=float32}}.}.
Thus, sparsity cannot be explained by precision alone.
It requires a consistent optimization bias during RL that concentrates visible changes in specific parameter regions throughout training.

\begin{tcolorbox}[colback=rliableblue!10!white,colframe=black,boxrule=0.9pt,boxsep=2pt,top=3pt,bottom=3pt,left=3pt,right=3pt]
\begin{center}
    \textbf{Aha Finding! \ \ RLVR exhibits a persistent, model-conditioned optimization bias in where updates land—highly consistent across runs and largely invariant to datasets and RL recipes.}
The observed sparsity is a superficial readout of this bias, amplified by bf16 precision.
\end{center}
\end{tcolorbox}

\section{A Mechanistic Theory of RL’s Unique Optimization Dynamics}
\label{sec:theory}

In the post-training era, RL has become a key stage,
albeit with intensive compute~\citep{grok2025}. 
Paradoxically (Sec.~\ref{sec:obs-bias}), these gains arise not from broad parameter changes but from selective, patterned edits that reveal a persistent optimization bias.
Understanding this distinctive training behavior raises the central question:
\begin{tcolorbox}[colback=pink!10!white,colframe=black,boxrule=0.9pt,boxsep=2pt,top=3pt,bottom=3pt,left=3pt,right=3pt]\begin{center}
\textbf{\emph{Where does this optimization bias originate, and how does it shape parameter evolution?}}  
\end{center}
\end{tcolorbox}

We characterize these optimization dynamics with the \emph{Three-Gate Theory}, KL Anchor, Model Geometry, and Precision, which mechanistically explains how on-policy RL updates are \emph{constrained} via Gate I (KL Anchor; Sec.~\ref{sec:gate1-kl}), \emph{steered} via Gate II (Model Geometry; Sec.~\ref{sec:gate2-model}), and \emph{filtered} via Gate III (Precision; Sec.~\ref{sec:gate3-precision}) into the observed update pattern.

\textbf{Notations.}
We consider a large language model with parameters \(\theta\), defining a conditional distribution \(\pi_\theta(y\mid x)\) over possible output token sequences \(y=(y_1,\dots,y_T) \in \mathcal{Y}\) given a prompt \(x\in\mathcal{X}\) from the space $\mathcal{X}$.
Each sequence $y$ is composed of tokens from a vocabulary \(\mathcal{V}\) of size \(N\).

\subsection{Gate I: On-Policy RL Imposes a One-Step KL Leash}
\label{sec:gate1-kl}

We first show that online policy gradient updates yield a per‑step \emph{policy} KL bound (an \textbf{\textit{anchoring}} effect), which in turn limits parameter movement during the RLVR update.

\textbf{RLVR objective.}
Various RLVR algorithms including PPO, GRPO, DAPO, and REINFORCE++,
learn a policy $\pi_\theta$ by optimizing variants of a KL‑regularized objective:
\begin{equation}
\label{eq:obj}
\small
\max_{\theta}\mathbb{E}_{\textcolor{teal}{y \sim {\pi_\theta}(\cdot \mid x)},x \sim \mathcal{X}} [ R(x, y) - \beta \mathrm{KL}( \textcolor{teal}{\pi_\theta}(\cdot \mid x) \,\big\|\, \textcolor{red}{\pi_{ref}}(\cdot \mid x)) ].
\end{equation}
where $\pi_{\mathrm{ref}}$ is a fixed reference policy and $\beta \ge 0$ controls the KL regularization
($\beta=0$ recovers the clip‑only variants such as DAPO).
Rewards $R(x,y)$ are \emph{verifiable} and (after normalization) \emph{bounded} (e.g., pass/fail or execution scores).
Moreover, the surrogate typically uses the token-wise importance ratio
$w_t=\frac{\pi_\theta(y_t \mid x,y_{<t})}{\pi_{\mathrm{old}}(y_t \mid x,y_{<t})}$ with clipping relative to $\pi_{\mathrm{old}}$.

\textbf{One‑step surrogate.}~~
With~\eqref{eq:obj}, a standard sequence‑level online policy‑gradient surrogate is
\begin{equation}
\small
\label{eq:g1-obj}
\mathcal{L}_{\mathrm{PG}}(\theta)
=
-\,\mathbb{E}_{x\sim\mathcal{X},\,y\sim\pi_\theta(\cdot\mid x)}
\big[A^{\perp}(x,y)\,\log\pi_\theta(y\mid x)\big],
\end{equation}
where $A^{\perp}$ is a (normalized) advantage estimate, optionally \emph{shaped} by a reference-KL log‑ratio term.
In practice, updates are performed over mini-batches, with a collected batch of data, not in a fully on-policy manner.
But the resulting error after a small step size $\Delta\theta$ is $O(\|\Delta\theta\|^2)$ (Lemma~\ref{lem:frozen-second-order}).

\textbf{Implicit KL leash.}
The KL leash emerges as policy gradient methods can be understood as a conservative projection, keeping new policy close to its starting point while reweighting it toward higher-reward outcomes, not pulling it toward a potentially distant external distribution like SFT:

\begin{proposition}[One‑step policy‑KL leash]
\label{prop:one-step-kl-main}
Let $q(\cdot\mid x)$ be a full‑support reference and let $\tilde q_\beta(\cdot\mid x)\propto q(\cdot\mid x)\exp(R/\beta)$ denote the soft‑regularized improvement oracle. 
Let $\theta^+$ be the parametric fit obtained by the $M$‑projection of $\tilde q_\beta$ onto the policy class,
$\theta^{+}\in\arg\min_\theta D_{\mathrm{KL}}(\tilde q_\beta\|\pi_\theta)$.
Then, for a sufficiently small one‑step update,
\begin{equation}
\label{eq:kl-leash-minimal}
\small
D_{\mathrm{KL}}\!\big(\pi_{\theta^{+}}\ \|\ \pi_{\theta}\big)
\ \le\ (1+o(1))\ D_{\mathrm{KL}}\!\big(\tilde q_\beta\ \|\ \pi_\theta\big),
\end{equation}
where the $o(1)$ term vanishes as $D_{\mathrm{KL}}(\tilde q_\beta\|\pi_\theta)\to 0$.
\end{proposition}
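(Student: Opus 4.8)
The plan is to read the $M$-projection as an information-geometric projection and exploit the near-Pythagorean structure of $D_{\mathrm{KL}}$ around $\pi_\theta$. Write $r:=\tilde q_\beta$ and $\epsilon:=D_{\mathrm{KL}}(r\,\|\,\pi_\theta)$, and treat $\epsilon\to 0$ as the asymptotic regime (concretely this is $\tilde q_\beta$ close to $\pi_\theta$, e.g.\ $q=\pi_\theta$ with $\beta$ large, as in the on-policy mirror-descent picture, since then $r(y\mid x)\propto\pi_\theta(y\mid x)\exp(R/\beta)\to\pi_\theta$; the proof only uses that $\epsilon$ is small). First I would record two elementary facts: (i) since $\pi_\theta$ itself lies in the policy class, optimality of the $M$-projection gives $D_{\mathrm{KL}}(r\,\|\,\pi_{\theta^+})=\min_\vartheta D_{\mathrm{KL}}(r\,\|\,\pi_\vartheta)\le D_{\mathrm{KL}}(r\,\|\,\pi_\theta)=\epsilon$, so $\epsilon^+:=D_{\mathrm{KL}}(r\,\|\,\pi_{\theta^+})\le\epsilon$; and (ii) by Pinsker $r$ is $O(\sqrt\epsilon)$-close to $\pi_\theta$ in total variation, and (either by the ``sufficiently small one-step update'' hypothesis, or by a standard argmin-consistency argument using that $\vartheta\mapsto D_{\mathrm{KL}}(\pi_\theta\,\|\,\pi_\vartheta)$ has a nondegenerate minimum at $\vartheta=\theta$) the displacement $\Delta:=\theta^+-\theta\to 0$.

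Next I would expand everything to second order in $\Delta$ around $\theta$. Let $F(\theta)$ be the Fisher information at $\theta$ and $s_r:=\mathbb{E}_r[\nabla_\vartheta\log\pi_\vartheta|_{\vartheta=\theta}]$; because $\mathbb{E}_{\pi_\theta}[\nabla\log\pi_\theta]=0$ and $r$ is $O(\sqrt\epsilon)$-close to $\pi_\theta$, we have $s_r=O(\sqrt\epsilon)$, and likewise $-\mathbb{E}_r[\nabla^2\log\pi_\theta]=F(\theta)+O(\sqrt\epsilon)$. Using the standard expansions $D_{\mathrm{KL}}(\pi_{\theta+\Delta}\,\|\,\pi_\theta)=\tfrac12\Delta^\top F(\theta)\Delta+O(\|\Delta\|^3)$ and $D_{\mathrm{KL}}(r\,\|\,\pi_{\theta+\Delta})=\epsilon-s_r^\top\Delta+\tfrac12\Delta^\top F(\theta)\Delta+O(\|\Delta\|^3)+O(\sqrt\epsilon\,\|\Delta\|^2)$, minimizing the second over $\Delta$ yields $\theta^+-\theta=F(\theta)^{-1}s_r+o(\sqrt\epsilon)$, hence $\|\theta^+-\theta\|=O(\sqrt\epsilon)$ and
\begin{equation*}
\epsilon^+=\epsilon-\tfrac12\,s_r^\top F(\theta)^{-1}s_r+o(\epsilon),\qquad D_{\mathrm{KL}}(\pi_{\theta^+}\,\|\,\pi_\theta)=\tfrac12\,s_r^\top F(\theta)^{-1}s_r+o(\epsilon).
\end{equation*}
Adding these gives the near-Pythagorean identity $D_{\mathrm{KL}}(r\,\|\,\pi_\theta)=D_{\mathrm{KL}}(r\,\|\,\pi_{\theta^+})+D_{\mathrm{KL}}(\pi_{\theta^+}\,\|\,\pi_\theta)+o(\epsilon)$; dropping the nonnegative term $D_{\mathrm{KL}}(r\,\|\,\pi_{\theta^+})$ leaves $D_{\mathrm{KL}}(\pi_{\theta^+}\,\|\,\pi_\theta)\le(1+o(1))\epsilon=(1+o(1))\,D_{\mathrm{KL}}(\tilde q_\beta\,\|\,\pi_\theta)$, which is the claim. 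For an exponential-family policy class this identity is exact---the classical Pythagorean theorem for moment/$M$-projections---so the $o(1)$ is purely the price of nonlinearity of $\vartheta\mapsto\log\pi_\vartheta$.

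The main obstacle is making the local expansion rigorous rather than heuristic: one must (a) justify interchanging $\nabla_\vartheta$ with $\mathbb{E}_r[\cdot]$ near $\vartheta=\theta$ (a dominated-convergence / smoothness condition on $\log\pi_\vartheta$), (b) assume $F(\theta)$ is nonsingular so that the two quadratic forms are genuinely comparable and the $O(\|\Delta\|^3)$ remainder is $o(\Delta^\top F(\theta)\Delta)$, and (c) control the cross-terms from $r\neq\pi_\theta$ uniformly as $\epsilon\to0$ via the Pinsker bound---all of which I would fold into a single ``sufficiently regular policy class, sufficiently small step'' hypothesis, matching the proposition's phrasing. A secondary subtlety is that $\theta^+$ is defined as a global minimizer, which need not lie near $\theta$; I would either restrict the projection to a fixed small ball around $\theta$ (harmless for a one-step statement) or invoke consistency of the $M$-projection under identifiability. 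Everything else is routine Taylor expansion around $\theta$.
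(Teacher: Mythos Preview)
Your proposal is correct and follows essentially the same route as the paper: second-order Taylor expansions of both KL terms in $\Delta=\theta^+-\theta$, a Pinsker/TV argument to replace $-\mathbb{E}_{\tilde q_\beta}[\nabla^2\log\pi_\theta]$ by the Fisher $F(\theta)$ up to $O(\sqrt\epsilon)$, and then the nonnegativity $D_{\mathrm{KL}}(\tilde q_\beta\|\pi_{\theta^+})\ge 0$ (the paper writes this as $f(\theta)-f(\theta^+)\le f(\theta)$). The only cosmetic difference is that you solve $\Delta\approx F^{-1}s_r$ explicitly and package the conclusion as a near-Pythagorean identity, whereas the paper expands $f(\theta)-f(\theta^+)$ around the minimizer $\theta^+$ and compares the two quadratic forms directly; the ingredients and regularity hypotheses are identical.
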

Notably, even when the explicit KL term is removed (e.g., in DAPO with $\beta=0$), the ratio clipping trick still imposes a KL bound $O(\varepsilon^2)$ in the small‑step regime (Appendix.~\ref{appx:proof_clip}), confirmed empirically with a bounded KL divergence change during a DAPO run (Fig.~\ref{fig:dapo_loss}).

\textbf{Weight update constraint.}
Now we show the KL leash puts a constraint on the weight update $\Delta W$

\begin{proposition}[Policy‑KL leash $\Rightarrow$ weight bound]
\label{prop:policy-to-weight-main}
Assume $\log\pi_\theta$ is $C^3$ and let $F(\theta)$ denote the Fisher information. 
If a one‑step update $\theta^+=\theta+\Delta$ satisfies $D_{\mathrm{KL}}(\pi_{\theta^+}\|\pi_\theta)\le K$
and, on the update subspace, $F(\theta)\succeq \mu I$ for some $\mu>0$, then for $K$ sufficiently small
\begin{equation}
\label{eq:kl-leash-weight}
\small
\|\Delta\|_{F(\theta)} \triangleq \sqrt{\Delta^\top F(\theta)\Delta} \ \le\ \sqrt{2K}\,(1+o(1)),
\qquad
\|\Delta\|_2 \ \le\ \sqrt{\tfrac{2K}{\mu}}\,(1+o(1)).
\end{equation}
Consequently, for any weight matrix block $W\subset\theta$, $\ \|\Delta W\|_F \le \sqrt{2K/\mu}\,(1+o(1))$.
\end{proposition}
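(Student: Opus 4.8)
The idea is to pass through the standard second-order (Fisher) expansion of the policy KL and then bootstrap. Fix $\theta$ and, for displacements $\Delta$ confined to the update subspace, set $g(\Delta):=D_{\mathrm{KL}}(\pi_{\theta+\Delta}\,\|\,\pi_\theta)$. First I would record the three classical local facts: $g(0)=0$; $\nabla g(0)=0$; and $\nabla^2 g(0)=F(\theta)$. The vanishing gradient follows by writing $g(\theta')=\int \pi_{\theta'}\bigl[\log\pi_{\theta'}-\log\pi_\theta\bigr]$, differentiating under the integral, and noting that the ``score'' piece integrates to $\nabla\!\int\pi_{\theta'}=0$ while the remaining piece carries the factor $\log(\pi_{\theta'}/\pi_\theta)$, which is $0$ at $\theta'=\theta$; differentiating once more and evaluating at $\theta'=\theta$ leaves only $\int\pi_\theta\,\nabla\log\pi_\theta\,\nabla\log\pi_\theta^\top=F(\theta)$. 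The $C^3$ assumption on $\log\pi_\theta$, together with the usual domination that licenses these interchanges, gives a uniform bound on the third derivatives of $g$ on a neighborhood of $0$, so Taylor's theorem yields
\[
g(\Delta)=\tfrac12\,\Delta^\top F(\theta)\,\Delta+R(\Delta),\qquad |R(\Delta)|\le C\,\|\Delta\|_2^{3},
\]
for all sufficiently small $\Delta$, with $C$ depending only on the model near $\theta$.

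Next I would use the curvature floor $F(\theta)\succeq\mu I$ on the update subspace to close the loop. Write $u:=\|\Delta\|_{F(\theta)}^2=\Delta^\top F(\theta)\Delta$; then $\|\Delta\|_2^2\le u/\mu$, hence $\|\Delta\|_2^3\le \mu^{-3/2}u^{3/2}$. Combining the hypothesis $g(\Delta)=D_{\mathrm{KL}}(\pi_{\theta^+}\|\pi_\theta)\le K$ with the expansion above gives the implicit inequality
\[
\tfrac12\,u\ \le\ K+C\,\mu^{-3/2}\,u^{3/2}.
\]
Working (as the ``sufficiently small one-step update'' hypothesis permits) on the branch with $u\to0$ as $K\to0$, the cubic term is $o(u)$, so $(\tfrac12-o(1))u\le K$, i.e.\ $u\le 2K\,(1+o(1))$. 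Taking square roots gives $\|\Delta\|_{F(\theta)}=\sqrt{u}\le\sqrt{2K}\,(1+o(1))$, and then $\|\Delta\|_2\le\sqrt{u/\mu}\le\sqrt{2K/\mu}\,(1+o(1))$.

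Finally, for any weight block $W\subset\theta$ the increment $\Delta W$ is, after reshaping, a coordinate sub-vector of $\Delta$, so $\|\Delta W\|_F\le\|\Delta\|_2\le\sqrt{2K/\mu}\,(1+o(1))$, which is the stated block bound.

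The routine part is the algebra; the two places that need care are: (i) the regularity bookkeeping in the first step---justifying differentiation under the expectation over $y$ and, more importantly, obtaining a \emph{uniform} $O(\|\Delta\|_2^3)$ remainder rather than a merely pointwise one, which is exactly what the $C^3$ hypothesis plus domination delivers; and (ii) the bootstrap, since $\tfrac12 u\le K+C\mu^{-3/2}u^{3/2}$ also admits a spurious large-$u$ solution branch, so one genuinely needs the small-step regime to remain on the $u=O(K)$ branch. The positive curvature floor $\mu>0$ on the update subspace is the ingredient that makes the Fisher seminorm equivalent to the Euclidean norm there and prevents the Euclidean bound from blowing up along flat directions of $F$.
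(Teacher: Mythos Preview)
Your proof is correct and follows essentially the same route as the paper: the quadratic expansion $D_{\mathrm{KL}}(\pi_{\theta+\Delta}\|\pi_\theta)=\tfrac12\Delta^\top F(\theta)\Delta+O(\|\Delta\|^3)$, the curvature floor $\|\Delta\|_2^2\le u/\mu$ to convert the cubic remainder into $C\mu^{-3/2}u^{3/2}$, and the bootstrap on the small-$u$ branch to get $u\le 2K(1+o(1))$ are exactly the paper's steps (with your $u$ being their $a$). Your added commentary on the spurious large-$u$ branch and on the regularity needed for a uniform cubic remainder is accurate and, if anything, makes the small-step hypothesis more explicit than the paper does.
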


See a detailed proof for Proposition~\ref{prop:one-step-kl-main} in Appendix~\ref{appx:one-step-kl-main} and Proposition~\ref{prop:policy-to-weight-main} in Appendix~\ref{appx:policy-to-weight-main}.

\begin{tcolorbox}[colback=orange!10!white,colframe=black,boxrule=0.9pt,boxsep=2pt,top=3pt,bottom=3pt,left=3pt,right=3pt]
\textit{\textbf{Take-away 1:  RL update imposes an implicit KL leash (anchor effect), ensuring that the per-step drift from the current policy is small.}} 
This aligns with recent work arguing that even the final policy is KL-proximal~\citep{wu2025invisible, shenfeld2025rl}.
Our focus, however, is to understand how this leash affects the weight change dynamics. 

\end{tcolorbox}

\subsection{Gate II: Model Geometry Determines \emph{Where} a KL-Bounded Step Goes}
\label{sec:gate2-model}

\textbf{From Gate I to \textit{location}.}
Gate~I supplies a one-step KL leash that bounds the move, but it does not specify \emph{where} the update lands.
We propose Gate II(Model Geometry), where we argue that, unlike a randomly initialized network, a well-pretrained model possesses a highly structured geometry, e.g., spectral statistics and high-curvature directions during optimization, that determines where a KL-constrained update goes.

\textbf{Layerwise norm bound from the KL leash.}
Let $W_0$ be a pretrained linear block, $W_+=W_0+\Delta W$ the post-step block, and let $S_W\succeq \mu_W I$ be a per-layer curvature proxy. If the per-layer KL budget satisfies
$\tfrac12\langle \!\operatorname{vec}\Delta W,  S_W\,\operatorname{vec}\Delta W\rangle\le \delta_W$,
then (Appendix~\ref{lem:kl-frob-op-schur})
\begin{equation}
\small
\|\Delta W\|_F \le \sqrt{\tfrac{2\delta_W}{\mu_W}},\qquad
\|\Delta W\|_2 \le \sqrt{\tfrac{2\delta_W}{\mu_W}}.    
\end{equation}

We then show that this conservative update yields three consequences, preserving the pretrained weight spectrum rather than destroying it based on weight perturbation theory~\citep{stewart1998perturbation}.

\textbf{Limited subspace rotation.}
First, as shown in Theorem~\ref{thm:kl-only-angles}, the angle between the original and updated subspaces is quadratically bounded, meaning the fundamental directions are preserved.

\begin{theorem}[Constrained subspace rotation with Wedin’s sin--$\Theta$ theorem~\citep{wedin1972perturbation}]
\label{thm:kl-only-angles}
Let $\gamma_k := \sigma_k(W_0) - \sigma_{k+1}(W_0)$ be the singular value gap.
For any $k$ with $\gamma_k>0$,
\begin{equation}
\small
    \max(\big\|\sin\Theta(U_k(W_0),U_k(W_+))\big\|_2,\
\big\|\sin\Theta(V_k(W_0),V_k(W_+))\big\|_2
\ \le\ \frac{\|\Delta W\|_2}{\gamma_k}
\ \le\ \frac{\sqrt{2\delta_W/\mu_W}}{\gamma_k}.
\end{equation}
\end{theorem}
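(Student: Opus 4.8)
\emph{Proof plan.} The statement is exactly the content of Wedin's $\sin\Theta$ theorem~\citep{wedin1972perturbation} combined with Weyl's inequality for the additive perturbation $W_+ = W_0 + \Delta W$, so the plan is to set up the SVD bookkeeping, control the relevant singular-value gap, identify the Wedin residuals, and then invoke the theorem. First I would fix notation: take a thin SVD $W_0 = U\Sigma V^\top$ and split it into the top-$k$ block $(U_k,\Sigma_k,V_k)$ collecting the $k$ largest singular values and the complementary block. Since $\gamma_k = \sigma_k(W_0)-\sigma_{k+1}(W_0) > 0$ by hypothesis, $\mathrm{col}(U_k)$ and $\mathrm{col}(V_k)$ (i.e.\ $U_k(W_0),V_k(W_0)$) are uniquely defined; write $(\tilde U_k,\tilde\Sigma_k,\tilde V_k)$ for the analogous top-$k$ factors of $W_+$, which will be well-defined once Step~1 establishes a gap.

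\emph{Step 1 (gap via Weyl).} Weyl's inequality gives $|\sigma_i(W_+)-\sigma_i(W_0)|\le\|\Delta W\|_2$ for all $i$, so $\sigma_k(W_+)\ge\sigma_k(W_0)-\|\Delta W\|_2$ and the top-$k$ singular values of $W_+$ are separated from $\sigma_{k+1}(W_0)$ by $\delta := \sigma_k(W_+)-\sigma_{k+1}(W_0)\ \ge\ \gamma_k-\|\Delta W\|_2$, which is positive in the small-step regime enforced by Gate~I (this also gives $\sigma_k(W_+)>\sigma_{k+1}(W_+)$, so $\tilde U_k,\tilde V_k$ are well-posed). \emph{Step 2 (residual identities).} Using $W_+\tilde V_k=\tilde U_k\tilde\Sigma_k$ and $W_+^\top\tilde U_k=\tilde V_k\tilde\Sigma_k$, the Wedin residuals collapse: $R := W_0\tilde V_k-\tilde U_k\tilde\Sigma_k = -\Delta W\,\tilde V_k$ and $S := W_0^\top\tilde U_k-\tilde V_k\tilde\Sigma_k = -\Delta W^\top\tilde U_k$, and since $\tilde U_k,\tilde V_k$ have orthonormal columns, $\|R\|_2\le\|\Delta W\|_2$ and $\|S\|_2\le\|\Delta W\|_2$.

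\emph{Step 3 (Wedin + norm chaining).} The spectral-norm form of Wedin's theorem with separation $\delta$ then yields
\[
\max\!\big(\|\sin\Theta(U_k(W_0),U_k(W_+))\|_2,\ \|\sin\Theta(V_k(W_0),V_k(W_+))\|_2\big)\ \le\ \frac{\max(\|R\|_2,\|S\|_2)}{\delta}\ \le\ \frac{\|\Delta W\|_2}{\gamma_k-\|\Delta W\|_2}\ =\ \frac{\|\Delta W\|_2}{\gamma_k}\,(1+o(1)),
\]
the last equality using $\|\Delta W\|_2\ll\gamma_k$. Substituting the per-layer norm bound stated immediately above the theorem, $\|\Delta W\|_2\le\sqrt{2\delta_W/\mu_W}$ (from the per-layer KL budget), gives the claimed final form $\sqrt{2\delta_W/\mu_W}/\gamma_k$.

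\emph{Main obstacle.} Everything except the denominator is mechanical (Weyl, the two residual identities, the norm substitution). The single point needing care is that Wedin, applied literally, produces the \emph{perturbed} separation $\delta\ge\gamma_k-\|\Delta W\|_2$—equivalently the mixed gap $\sigma_k(W_+)-\sigma_{k+1}(W_0)$—rather than $\gamma_k$ itself, so replacing $\delta$ by $\gamma_k$ is valid only to leading order; this is the source of the implicit $(1+o(1))$, which is consistent with the small-step regime guaranteed by Gate~I. If a fully explicit constant is wanted one adds the mild assumption $\|\Delta W\|_2\le\gamma_k/2$ and absorbs a harmless factor of $2$; and one should note explicitly that the bound is vacuous without $\gamma_k>0$, which is already in the hypotheses.
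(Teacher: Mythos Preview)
Your proposal is correct and follows the same route as the paper: in the appendix the paper simply records Wedin's $\sin\Theta$ bound as Lemma~\ref{lem:wedin} (stated as a black box with a \qed) and chains it with the operator-norm bound $\|\Delta W\|_2\le\sqrt{2\delta_W/\mu_W}$ from Lemma~\ref{lem:kl-frob-op-schur}, which is exactly your Step~3. If anything you are more careful than the paper, since you flag the perturbed-vs-unperturbed gap issue in the denominator and absorb it into a $(1+o(1))$; the paper writes $\gamma_k$ directly without comment.
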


\textbf{Singular value stability.}
Second, the magnitudes of the principal components themselves are preserved. The change in each singular value is bounded by the norm of the update.

\begin{corollary}[Singular-value stability]
\label{cor:kl-only-svals}
For each $k$,
\begin{equation}
\small
    |\sigma_k(W_+)-\sigma_k(W_0)| \ \le\ \|\Delta W\|_2 \ \le\ \sqrt{\frac{2\delta_W}{\mu_W}},
\qquad
\sum_i \big(\sigma_i(W_+)-\sigma_i(W_0)\big)^2 \ \le\ \|\Delta W\|_F^2 \ \le\ \frac{2\delta_W}{\mu_W}.
\end{equation}
\end{corollary}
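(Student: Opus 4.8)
\textbf{Proof plan for Corollary~\ref{cor:kl-only-svals}.}
The statement is a direct consequence of two classical matrix-perturbation facts combined with the layerwise norm bound already in hand. Write $W_+ = W_0 + \Delta W$ and treat $\Delta W$ as a perturbation of $W_0$. The plan is: (i) control each individual singular value by $\|\Delta W\|_2$ via Weyl's inequality for singular values; (ii) control the aggregate $\ell^2$ drift of the full singular spectrum by $\|\Delta W\|_F$ via the Hoffman--Wielandt / Mirsky inequality; and (iii) substitute the Schur-complement bounds $\|\Delta W\|_2 \le \sqrt{2\delta_W/\mu_W}$ and $\|\Delta W\|_F \le \sqrt{2\delta_W/\mu_W}$ established just above (Appendix~\ref{lem:kl-frob-op-schur}) from the per-layer KL budget $\tfrac12\langle \operatorname{vec}\Delta W, S_W \operatorname{vec}\Delta W\rangle \le \delta_W$ together with $S_W \succeq \mu_W I$.

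For step (i), I would invoke Weyl's perturbation theorem for singular values: for any matrices $A,E$ of the same shape and every index $k$, $|\sigma_k(A+E) - \sigma_k(A)| \le \sigma_1(E) = \|E\|_2$. This follows, e.g., from the min--max (Courant--Fischer) characterization of $\sigma_k$ applied to the Hermitian dilation $\begin{bmatrix} 0 & A \\ A^\top & 0 \end{bmatrix}$, or by citing \citep{stewart1998perturbation}. Taking $A = W_0$, $E = \Delta W$ gives $|\sigma_k(W_+) - \sigma_k(W_0)| \le \|\Delta W\|_2$, and then the norm bound from above closes the first chain of inequalities. For step (ii), I would use Mirsky's theorem (the singular-value form of Hoffman--Wielandt): $\sum_i (\sigma_i(A+E) - \sigma_i(A))^2 \le \|E\|_F^2$, again most cleanly obtained by applying the eigenvalue Hoffman--Wielandt inequality to the Hermitian dilations of $W_0$ and $W_+$, whose nonzero eigenvalues are $\pm\sigma_i$. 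Substituting $E = \Delta W$ and then $\|\Delta W\|_F^2 \le 2\delta_W/\mu_W$ yields the second claim.

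There is no substantive obstacle here: the corollary is a textbook application of spectral perturbation theory, and the only care needed is to cite the \emph{singular-value} (rather than eigenvalue) versions of Weyl and Hoffman--Wielandt, since $W_0$ need not be square or symmetric — handled uniformly by passing to the Hermitian dilation. The one modeling assumption being silently used is that the per-layer curvature proxy $S_W$ is positive definite with $S_W \succeq \mu_W I$ on the relevant update subspace, exactly as posited in the hypothesis feeding Theorem~\ref{thm:kl-only-angles}; given that, both displayed inequalities follow immediately.
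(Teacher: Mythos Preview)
Your proposal is correct and matches the paper's approach essentially line for line: the paper packages Weyl/Mirsky and Hoffman--Wielandt into Lemma~\ref{lem:weyl-hw} and the Schur-complement norm bounds into Lemma~\ref{lem:kl-frob-op-schur}, and the corollary follows by composing them exactly as you outline. Your additional remark about obtaining the singular-value versions via the Hermitian dilation is a helpful justification but not something the paper spells out.
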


\textbf{Top-k energy preservation.}
Finally, these effects combine to ensure the cumulative energy of the top-k components of the weights remains stable.

\begin{corollary}[Top-$k$ energy and Ky Fan norms]
\label{cor:kyfan}
Let $\|\cdot\|_{(k)} := \sum_{i=1}^k \sigma_i(\cdot)$ be the Ky Fan $k$-norm. Then
\begin{equation}
\small
    \big|\,\|W_+\|_{(k)} - \|W_0\|_{(k)}\,\big|
\ \le\ \sum_{i=1}^k \big|\sigma_i(W_+)-\sigma_i(W_0)\big|
\ \le\ k\,\|\Delta W\|_2
\ \le\ k\,\sqrt{\tfrac{2\delta_W}{\mu_W}}.
\end{equation}
\end{corollary}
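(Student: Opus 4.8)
The plan is to chain the three inequalities in the display, each of which is either a one-line triangle-inequality step or a direct appeal to a result already in hand. Throughout, write $\sigma_i := \sigma_i(W_0)$ and $\sigma_i^{+} := \sigma_i(W_+)$ for the singular values in nonincreasing order, so that $\|W_0\|_{(k)} = \sum_{i=1}^k \sigma_i$ and $\|W_+\|_{(k)} = \sum_{i=1}^k \sigma_i^{+}$ by definition of the Ky Fan $k$-norm.

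First I would dispatch the leftmost inequality. Since both Ky Fan norms are sums of the same length, $\|W_+\|_{(k)} - \|W_0\|_{(k)} = \sum_{i=1}^k (\sigma_i^{+} - \sigma_i)$, and taking absolute values and applying the triangle inequality for finite sums gives $\big|\,\|W_+\|_{(k)} - \|W_0\|_{(k)}\,\big| \le \sum_{i=1}^k |\sigma_i^{+} - \sigma_i|$. Next, for the middle inequality I would invoke the per-index singular-value perturbation bound already recorded in Corollary~\ref{cor:kl-only-svals} (itself a consequence of Weyl's inequality $|\sigma_i(A+E) - \sigma_i(A)| \le \|E\|_2$): this gives $|\sigma_i^{+} - \sigma_i| \le \|\Delta W\|_2$ for every $i$, and summing over $i = 1, \dots, k$ yields $\sum_{i=1}^k |\sigma_i^{+} - \sigma_i| \le k\,\|\Delta W\|_2$. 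Finally, the rightmost inequality is just the substitution of the layerwise operator-norm bound $\|\Delta W\|_2 \le \sqrt{2\delta_W/\mu_W}$ established from the Gate~II curvature budget (the display immediately preceding Theorem~\ref{thm:kl-only-angles}).

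There is no substantive obstacle here; the corollary is a bookkeeping consequence of Corollary~\ref{cor:kl-only-svals} plus the layerwise bound. The only point worth a remark is that we deliberately use the crude estimate $\sum_{i=1}^k |\sigma_i^{+}-\sigma_i| \le k\max_i |\sigma_i^{+}-\sigma_i|$, which needs only operator-norm control; a sharper constant follows by Cauchy–Schwarz together with the Frobenius part of Corollary~\ref{cor:kl-only-svals}, namely $\sum_{i=1}^k |\sigma_i^{+}-\sigma_i| \le \sqrt{k}\,\big(\sum_i (\sigma_i^{+}-\sigma_i)^2\big)^{1/2} \le \sqrt{k}\,\|\Delta W\|_F \le \sqrt{2k\delta_W/\mu_W}$, trading the factor $k$ for $\sqrt{k}$. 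I would state the $k$-version as in the corollary and note the $\sqrt{k}$ refinement parenthetically, since the weaker form suffices for the qualitative conclusion that top-$k$ energy is preserved under the KL leash.
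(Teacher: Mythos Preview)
Your proposal is correct and mirrors the paper's approach: the appendix does not spell out a separate proof of Corollary~\ref{cor:kyfan} but simply records Lemma~\ref{lem:weyl-hw} (Weyl/Mirsky and Hoffman--Wielandt) and Lemma~\ref{lem:kl-frob-op-schur} (the layerwise operator/Frobenius bound), from which the three displayed inequalities follow exactly as you chain them. Your parenthetical $\sqrt{k}$ refinement via Cauchy--Schwarz and the Frobenius half of Lemma~\ref{lem:weyl-hw} is a nice extra that the paper does not state.
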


See the detailed proofs in Appendix~\ref{appx:gate2-lemmas} for all results presented here.

\begin{tcolorbox}[colback=orange!10!white,colframe=black,boxrule=0.9pt,boxsep=2pt,top=3pt,bottom=3pt,left=3pt,right=3pt]
\textbf{\textit{Take-away 2: Under the KL leash, RL updates tend to preserve the model's original weight structure rather than destroy it. 
This naturally favors updates in low-curvature directions of the optimization landscape, which avoids dramatic changes in model behavior.}}
Since directly quantifying curvature in LRM with long CoTs is computationally prohibitive, we instead adopt a powerful and efficient proxy, principal weights~\citep{liu2025lift}, as detailed in Sec.~\ref{sec:princ}.
\end{tcolorbox}

\subsection{Gate III: Precision Acts as a Lens Revealing the Compass}
\label{sec:gate3-precision}

Building on the optimization bias,  the bfloat16 with limited precision acts as a \textit{lens}: 
it hides those micro-updates that occur where the RL consistently holds a weak willingness to apply large changes.

\begin{corollary}[Magnitude-dependent realization threshold]
\label{cor:magnitude-threshold}
A stored weight $W_{ij}$ changes at a step iff \(|\Delta W_{ij}|\gtrsim\tfrac12\,\mathrm{ULP}_{\mathrm{bf16}}(W_{ij})\).
\end{corollary}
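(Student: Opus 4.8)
The plan is to reduce the claim to the elementary arithmetic of round-to-nearest in \texttt{bfloat16}. First I would fix the rounding model: after a gradient step the stored entry becomes $\widehat W_{ij} = \mathrm{fl}_{\mathrm{bf16}}(W_{ij} + \Delta W_{ij})$, where $\mathrm{fl}_{\mathrm{bf16}}$ rounds its argument to the nearest representable \texttt{bfloat16} value (ties-to-even) and $W_{ij}$ is itself already a \texttt{bfloat16} number because it is the stored weight. By Lemma~\ref{lem:ULP-lens}, the two \texttt{bfloat16} neighbours of $W_{ij}$ are at distance $\Theta(\mathrm{ULP}_{\mathrm{bf16}}(W_{ij}))$ with $\mathrm{ULP}_{\mathrm{bf16}}(W_{ij}) = 2^{\lfloor\log_2|W_{ij}|\rfloor-7}$; in particular the rounding interval whose unique nearest representable point is $W_{ij}$ has half-width $\tfrac12\mathrm{ULP}_{\mathrm{bf16}}(W_{ij})$, up to the factor-of-two asymmetry that occurs only exactly at a binade boundary.

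Then I would argue the two directions. If $|\Delta W_{ij}| < \tfrac12\mathrm{ULP}_{\mathrm{bf16}}(W_{ij})$, then $W_{ij}+\Delta W_{ij}$ lies strictly inside that rounding interval, so $\mathrm{fl}_{\mathrm{bf16}}(W_{ij}+\Delta W_{ij}) = W_{ij}$ and the stored bits do not change. Conversely, if $|\Delta W_{ij}| \gtrsim \tfrac12\mathrm{ULP}_{\mathrm{bf16}}(W_{ij})$, then $W_{ij}+\Delta W_{ij}$ reaches or passes the midpoint toward the neighbour in the sign direction of $\Delta W_{ij}$, so rounding selects that neighbour (or a farther value), giving $\widehat W_{ij}\neq W_{ij}$. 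Finally I would connect the informal "unchanged" of Corollary~\ref{cor:magnitude-threshold} to Definition~\ref{def:bf16-unchanged}: since $\eta = 10^{-3} < 2^{-9}$, the relative test $|\widehat W_{ij}-W_{ij}|\le\eta\max(|W_{ij}|,|\widehat W_{ij}|)$ is equivalent to bitwise equality (Appendix~\ref{app:bf16diff-therom}), so "$W_{ij}$ is unchanged" is exactly "$\mathrm{fl}_{\mathrm{bf16}}(W_{ij}+\Delta W_{ij}) = W_{ij}$", which closes the loop. The magnitude dependence is then immediate from $\mathrm{ULP}_{\mathrm{bf16}}(W_{ij}) = 2^{\lfloor\log_2|W_{ij}|\rfloor-7}$: a larger-magnitude weight has a proportionally larger realization threshold, so the same $\Delta W_{ij}$ may register in a small-magnitude coordinate yet vanish in a large-magnitude one, which is precisely the lens effect Gate III needs.

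The main obstacle — and the reason the statement is written with $\gtrsim$ and "$\tfrac12\,\mathrm{ULP}$" rather than a crisp $\ge$ — is a handful of non-generic edge cases: the ties-to-even rule can send a value sitting exactly at the midpoint back to $W_{ij}$; the two neighbours of $W_{ij}$ are unequally spaced exactly when $W_{ij}$ is a power of two, so the threshold differs by a factor of two on the two sides; and a $\Delta W_{ij}$ comparable to $\mathrm{ULP}$ can itself cross a binade, changing the relevant spacing. None of these affects the order-of-magnitude characterization and each is dispatched by a short case split, but they are the only subtlety; everything else is the one-line rounding argument above. I would also note, echoing the precision discussion in Sec.~\ref{par:prec}, that \textsc{verl} keeps gradient accumulation and optimizer state in \texttt{float32}, so the only lossy operation is the final cast back to \texttt{bfloat16}, which is exactly the $\mathrm{fl}_{\mathrm{bf16}}$ above; no additional error model is required.
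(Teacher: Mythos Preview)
Your proposal is correct and is in fact considerably more detailed than the paper's own treatment: the paper states Corollary~\ref{cor:magnitude-threshold} without proof, leaving it as an immediate consequence of Lemma~\ref{lem:ULP-lens} (the ULP lens), whose last sentence already records that the minimal realized relative update is \(\gtrsim \tfrac12\,\mathrm{ULP}_{\mathrm{bf16}}(x)/|x|\). Your round-to-nearest argument, the explicit connection to Definition~\ref{def:bf16-unchanged} via Proposition~\ref{prop:sound}, and the discussion of the ties-to-even and binade-boundary edge cases that justify the \(\gtrsim\) rather than \(\ge\) are exactly the justification the paper elides; nothing you wrote conflicts with the paper, and your case analysis is the natural way to make the corollary rigorous.
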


The effect of this gate has been discussed aforementioned.
We would emphasize again that precision is more an \emph{amplifier} for visible sparsity, not the \emph{cause} of optimization bias, as optimizer states, etc., are still in float32 (See Sec.~\ref{par:prec}).

\section{Theory-Guided Validation of RLVR’s Optimization Dynamics}
\label{sec:gate2-validation}

\begin{figure}[t]
    \centering
    \includegraphics[width=0.95\linewidth]{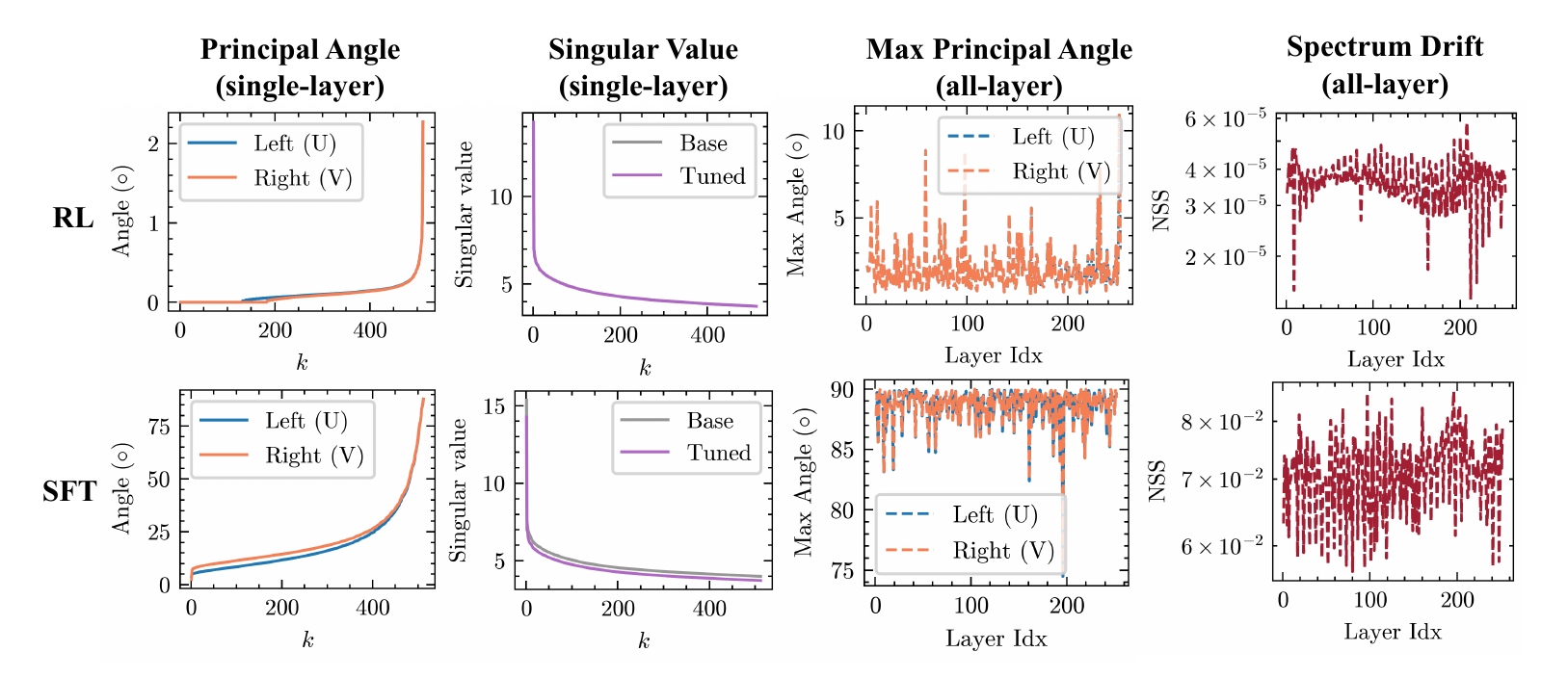}
    \vspace{-15pt}
    \caption{\small \textbf{Spectral geometry under SFT vs.\ RLVR on Qwen3-8B}~\citep{su2025klear}.
    Left: for an exemplar layer, top-$k$ principal angles and singular-value curves. 
    Right: across all layers, maximum principal angle and normalized spectral drift. 
    RLVR maintains a stable top-$k$ spectrum with minimal subspace rotation, unlike SFT. 
    See \texttt{DS-Qwen-1.5B} in Fig.~\ref{fig:rl-spec-1.5b} and Qwen3-14B-Base in Fig.~\ref{fig:rl-spec-14b}.}
    \label{fig:rl-spec}
    \vspace{-10pt}
\end{figure}

We conduct theory-guided experiments analyzing how RLVR modifies parameters and interacts with pretrained geometry. 
These results validate our central prediction: the pretrained model geometry steers \emph{KL-constrained} updates, yielding \emph{distinct, off-principal optimization dynamics} that set RLVR apart from SFT.

\subsection{RLVR Preserves Spectral Geometry, While SFT Distorts It}
\label{subsec:spec-rl-sft}
We begin by probing spectral changes to test whether RL updates are steered toward low-curvature, spectrum-preserving directions.
If so, RLVR should largely preserve the pretrained spectral structure, whereas SFT, lacking this steering, should significantly distort it.

\textbf{Setups.}
We analyze checkpoints from a standard SFT$\!\rightarrow$RLVR pipeline on Qwen3-8B-Base~\citep{su2025klear} and a long-horizon RL run on \texttt{DS-Qwen-1.5B}~\citep{liu2025prorl}. 
We also consider a setting where SFT and RL are applied separately to Qwen3-14B-Base, matched on in-domain math performance~\citep{huan2025does}.
In all cases, we compare base weights $W_0$ and fine-tuned weights $W_{+}$.

\textbf{Metrics.}
We compare the base weights $W_0$ with the finetuned weights $W_{+}$:
\begin{itemize}[leftmargin=1em,itemsep=-2pt, topsep=0pt]
\item \textbf{Subspace rotation.} For the top‑$k$ left ($U$)/right($V$) singular subspaces, we check the rotation using \textbf{principal angles} via
$\cos\theta_i(U):=\sigma_i\!\left(U_{0,k}^\top U_{+,k}\right)$ and 
$\cos\theta_i(V):=\sigma_i\!\left(V_{0,k}^\top V_{+,k}\right)$.
\item \textbf{Spectrum drift.} 
Beyond showing the singular value curve, we quantify singular‑value change with a normalized $\ell_2$ shift:
$\mathrm{NSS}(W) = \|\sigma(W_+)-\sigma(W_0)\|_2/\|\sigma(W_0)\|_2$
\end{itemize}

\textbf{Our findings.}
RLVR checkpoints exhibit a \emph{Insightably stable} spectrum within the top principal components: across layers, RLVR shows \emph{consistently small} principal-subspace rotation and \emph{minimal} spectral drift.
The singular-value profiles are \emph{even nearly identical} to the base model.
By contrast, SFT induces \emph{substantially larger} rotations and \emph{pronounced} drifts on the same metrics (Fig.~\ref{fig:rl-spec}).

\begin{figure}[t]
    \centering
    \includegraphics[width=0.92\linewidth]{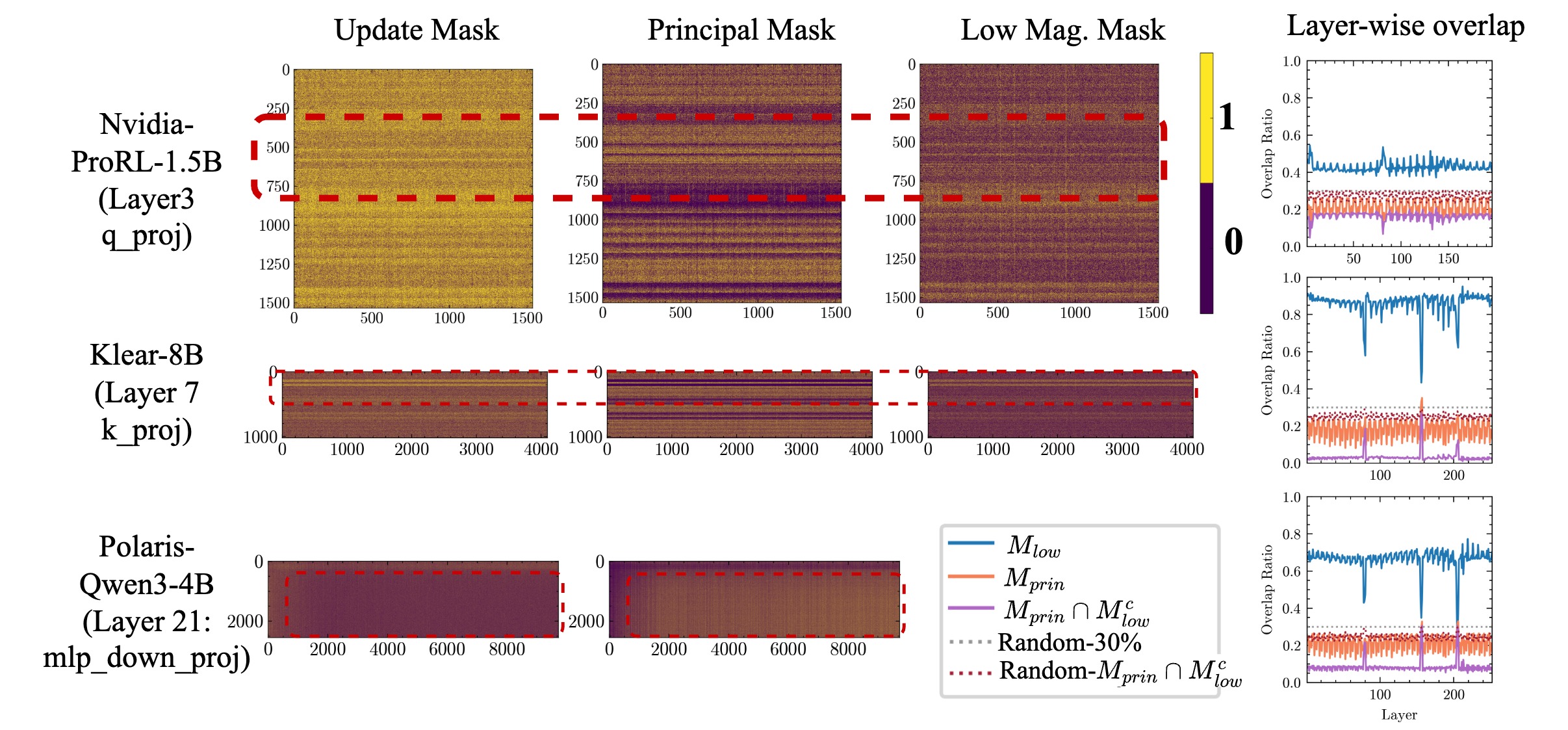}
    \vspace{-15pt}
    \caption{\small \textbf{RL avoids updating principal weights.} 
    We compare the RL update mask with principal weight mask $M_{princ}$, low magnitude mask $M_{low}$, and the one $M_{princ} \cap M_{low}^c$.
    The layer-wise overlap between RL updates and principal weights is consistently \emph{sub-random}, an effect more pronounced when removing its overlapped weights with $M_{low}$, i.e., $ M_{princ} \cap M_{low}^c$. 
    } 
    \label{fig:rl-avoids-principal}
    \vspace{-15pt}
\end{figure}

\subsection{RLVR Avoids Principal Weights, While SFT Targets Them}
\label{sec:princ}

We now move from macro-level spectral analysis to a micro-level examination of individual weights, probing \textit{which parameters RLVR favors or avoids to update}, a deeper investigation into the parameter-space dynamics.

\textbf{Principal weights as a proxy for high-curvature directions.}
Directly identifying high-curvature directions is computationally prohibitive, especially given LRM with long CoTs.
Instead, we adopt a powerful proxy from recent work~\cite{liu2025lift}, \textbf{\textit{principal weights},} which is defined as \textit{the weights with the largest magnitude after low-rank approximation}, representing its most influential computational pathways.
The validity of this proxy is confirmed by their perturbation studies, which show that modifying these specific weights causes sharp \textit{reasoning performance degradation}. This degradation is directly linked to high-curvature regions via a Taylor expansion of the loss. 
The \textit{principal mask}, $M_{\mathrm{princ}}^{(k)}=\mathrm{Top}_\alpha\!\big(s^{(k)}_{ij}\big)$,
is defined as the top-$\alpha$ fraction of weights with the highest score, $s^{(k)}_{ij}=\lvert W_0^{(k)}(i,j)\rvert$, where $W_0^{k}$ is the rank‑$k$ SVD reconstruction of $W_0$.

\textbf{Low-magnitude weights as low-resistance pathway.}
We further include the top-$\alpha$ lowest magnitude weights, as $M_{\mathrm{low}} = \mathrm{Bottom}_\alpha\!\big(\lvert W_0\rvert\big)$.
The magnitude is also a bias from the model geometry (distribution prior), impacting how easily the weights can be updated based on our precision gate.

\textbf{Metrics.}~~
Let $M$ be the weight update \emph{update mask} from an RLVR run.
We report the overlap ratio between our identified mask $M_\bullet$ with it, defined as $\mathrm{Overlap}(M_\bullet,M)\;=\;\frac{\lvert M_\bullet\cap M\rvert}{\lvert M\rvert}.$, with a random guess baseline overlap ratio as the density of $M_\bullet$ itself., i.e., $\alpha$.

\textbf{Our findings.}
Fig.~\ref{fig:rl-avoids-principal} visualizes the RL update mask $M$ in relation to the principal mask $M_{princ}$ and
the low-magnitude mask $M_{low}$, reporting their layer-wise overlap against a random baseline as well.
The results show a clear dichotomy. RL updates exhibit a sub-random overlap with principal weights, indicating a strong tendency to avoid them.
Conversely, the updates show a super-random overlap with low-magnitude weights due to their low resistance to micro-updates.
Besides, we found that the residual overlap between updates and principal weights is highly accounted for by
weights that are both principal (defined by the rank-k approximation of $W_0$) and low-magnitude
(original $W_0$).
After excluding this intersection, i.e., $M_{princ} \cap M_{low}^c$, the overlap drops significantly.

\textbf{Insight.}
This points to a central implication: \emph{RLVR and SFT operate in distinct optimization regions of parameter space}, even at comparable task performance. 
\emph{RLVR avoids high-curvature, principal regions, whereas SFT targets them.} 
This regional mismatch helps explain the limited transferability of SFT-oriented PEFT under RL (Sec.~\ref{sec:peft-implications}).

\subsection{RLVR Relies on Model Geometry, Disrupting Geometry Destroys the Bias}
\begin{wrapfigure}{r}{0.5\textwidth}
    \vspace{-15pt}
    \centering
    \includegraphics[width=0.98\linewidth]{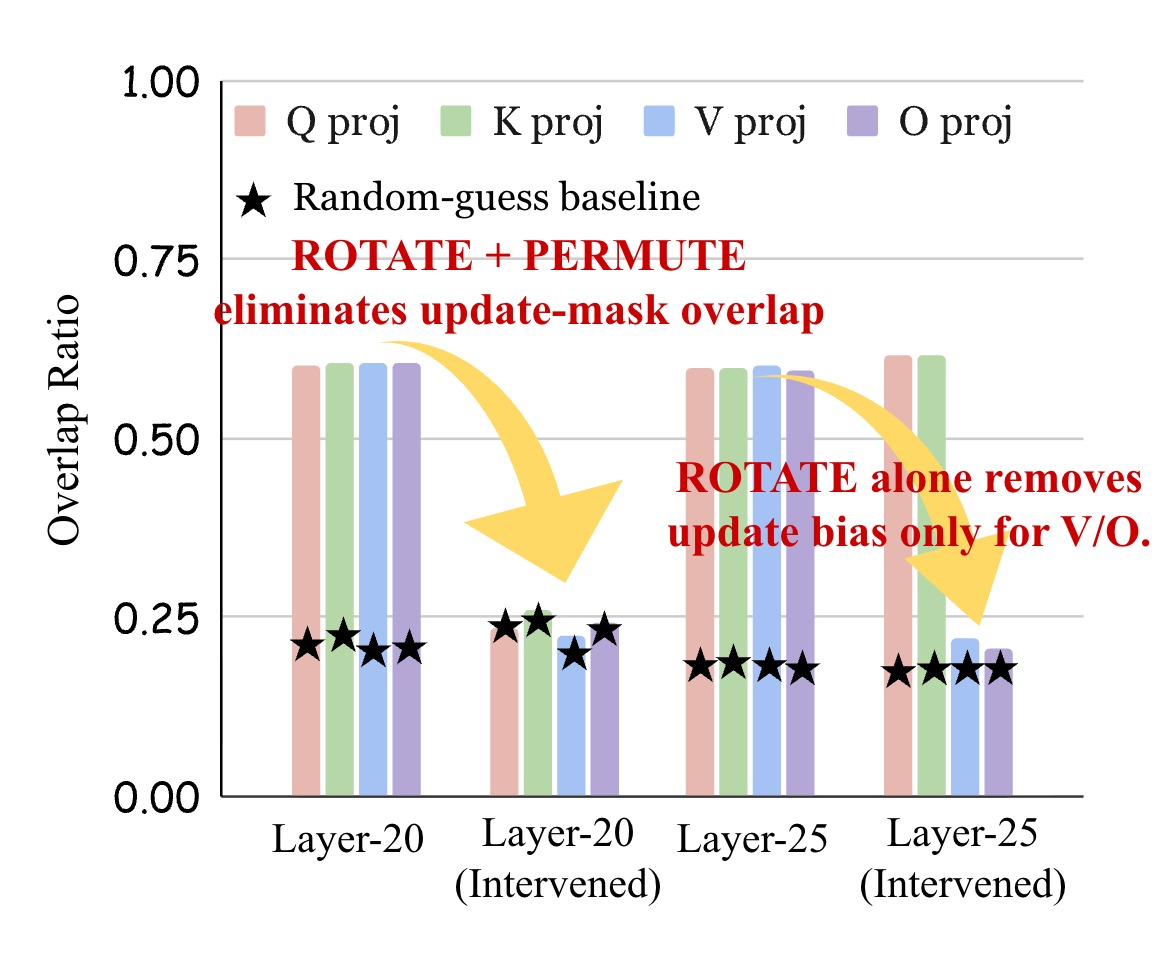}
    \vspace{-15pt}
    \caption{\small \textbf{Overlap ratio after intervention.}
    }
    \label{fig:intervention}
    \vspace{-15
    pt}
\end{wrapfigure}
Gate II posits that the pretrained model's geometry
steers RL updates. 
To test this causal link, we
deliberately
"scramble" the geometry of specific layers in a Qwen3-4B-Base model using orthogonal rotations for O/V layers (\textsc{Rotate}) and head permutations for all Q/K/V/O layers (\textsc{Permute})
(details in Appendix~\ref{appx:itervention}) and compare the update overlap ratio $\mathrm{Overlap}(M_\bullet,M)\;=\;\frac{\lvert M_\bullet\cap M\rvert}{\lvert M\rvert}.$ between the base
run with another independent run without intervention and one run with intervention.

\textbf{Our Findings.} We modify (i) layer~20 with \textsc{Rotate}$+$\textsc{Permute}, and (ii) layer~25 with \textsc{Rotate}.
As shown in Fig.~\ref{fig:intervention}, the update overlap collapsed to a random level in the intervened layers, while remaining high in all untouched layers. This provides strong causal evidence that the pretrained model's geometry is the source of the optimization bias.

\subsection{RLVR signatures persist in agentic tasks and RLHF}
\label{sec:agents-rlhf}

\begin{figure}
    \centering
    \includegraphics[width=0.80\linewidth]{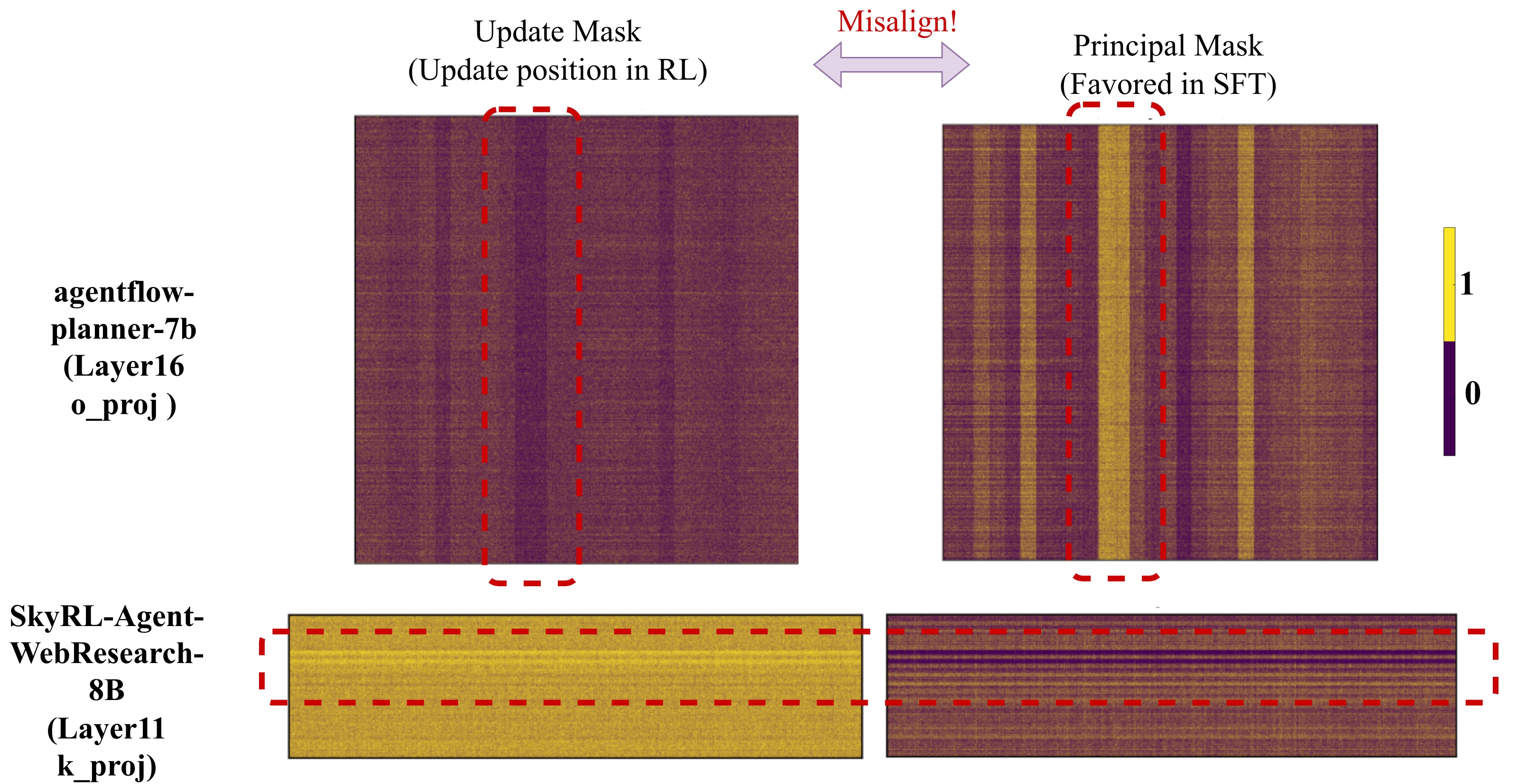}
    \vspace{-6pt}
    \caption{\small \textbf{Update–principal misalignment in RL-trained agents.}
    For representative layers from \textsc{agentflow-planner-7b} (Layer~16, $o\_{{\rm proj}}$; top row) and \textsc{SkyRL-Agent-WebResearch-8B} (Layer~11, $k\_{{\rm proj}}$; bottom row), the left panels visualize the bf16-aware \emph{update mask} $M_\ell$ (locations that changed under RL), while the right panels show the \emph{principal mask} $P^{(k)}_\ell$ (top-$k$ singular-subspace support typically favored by SFT).
    Dashed red boxes highlight stripe regions where RL updates concentrate \emph{outside} principal-weight bands,
    indicating robust off-principal routing in agent and tool-use settings.}
    \label{fig:rlvr-pos-misalign-agent}
    \vspace{-6pt}
\end{figure}

\paragraph{\textbf{Setup.}}
We analyze additional \emph{agent} and \emph{RLHF} (RL with human feedback) checkpoints and apply the same weight–space diagnostics as in
Sec.~\ref{subsec:spec-rl-sft} and Sec.~\ref{sec:princ}:
(i) principal-subspace rotation, (ii) spectral drift, and (iii) update–principal misalignment.
The extended model suite is summarized in Tab.~\ref{tab:models_more}.
\textbf{Agents.} We evaluate policies from \textsc{AgentFlow}~\citep{agentflow} and \textsc{VERL-Agent}~\citep{verlagent} on multi-turn and long-horizon tasks.
We also assess tool-augmented agents from \textsc{SkyRL}~\citep{skyrl} and \textsc{VERL-Tool}~\citep{verltool} on \textit{WebSearch}, \textit{DeepSearch}, and \textit{SWE}.
\textbf{RLHF.} We include preference-optimized models trained with DPO~\citep{dpo} and SimPO~\citep{simpo}, primarily targeting instruction following.

\paragraph{\textbf{Our Findings.}}
\textbf{(i) Stable spectra, minimal rotation.}
Across \emph{agents} and \emph{RLHF}, top-$k$ subspaces rotate only slightly, and layer spectra remain near-identical to the base model
(Fig.~\ref{fig:rl-spec-agent}; Fig.~\ref{fig:rl-spec-rlhf}), matching the spectrum-preserving, off-principal regime observed earlier.
\textbf{(ii) Off-principal updates.}
Update masks in agent and RLHF checkpoints consistently \emph{avoid principal weights}: the most active bands are
spatially misaligned with the principal mask (Fig.~\ref{fig:rlvr-pos-misalign-agent}).
\textbf{Takeaway.}
RLVR’s optimization dynamics—\emph{minimal rotation, spectrum preservation, off-principal routing}—persist beyond verifiable math/code to \emph{agents} and \emph{RLHF}, indicating a common, model-conditioned optimization bias within a KL-anchored RL post-training game, consistent with our Three-Gate Theory.

\definecolor{SFTrow}{HTML}{F3F3F3}   
\definecolor{RLrow}{HTML}{EAF2FF}    

\begin{table}
\centering
\caption{\small
\textbf{Model List for analyzed checkpoints for agentic tasks and RLHF algorithms.}
}

\label{tab:models_more}
\setlength{\tabcolsep}{4pt}
\resizebox{0.85\textwidth}{!}{%
\vspace{-10pt}
\begin{tabular}{@{} l l l l r @{}}
\toprule
\textbf{Category} & \textbf{Base Model} & \textbf{FT Model} & \textbf{Algorithm} & \textbf{Data}  \\
\midrule

\multirow{6}{*}{Agent} 
&
\href{https://huggingface.co/Qwen/Qwen3-8B}{Qwen3-8B} & \href{https://huggingface.co/NovaSky-AI/SkyRL-Agent-WebResearch-8B}{
SkyRL-Agent-WebResearch-8B} & GRPO &  WebResearch \\
&
\href{https://huggingface.co/Qwen/Qwen3-8B}{Qwen3-8B} & \href{https://huggingface.co/VerlTool/deepsearch-qwen_qwen3-8b-grpo-n16-b128-t1.0-lr1e-6-new_global_step_70}{VT-deepsearch-8B} & GRPO &  Deepsearch \\
&
\href{https://huggingface.co/Qwen/Qwen3-8B}{Qwen3-8B} & \href{https://huggingface.co/VerlTool/SWE-Qwen3-8B-VT-grpo-n32-b256-t1.0-lr2e-6}{VT-SWE-8B} & GRPO &  SWE \\
&
\href{https://huggingface.co/Qwen/Qwen2.5-7B-Instruct}{Qwen2.5-7B-Instruct} & \href{https://huggingface.co/AgentFlow/agentflow-planner-7b}{agentflow-planner-7b} & Flow-GRPO &  Planning \\
&
\href{https://huggingface.co/Qwen/Qwen2.5-7B-Instruct}{Qwen2.5-7B-Instruct} & \href{https://huggingface.co/langfeng01/GiGPO-Qwen2.5-7B-Instruct-WebShop}{GiGPO-Qwen2.5-7B-Instruct-WebShop} & GiGPO &  WebShop \\
&
\href{https://huggingface.co/Qwen/Qwen2.5-7B-Instruct}{Qwen2.5-7B-Instruct} & \href{https://huggingface.co/langfeng01/GiGPO-Qwen2.5-7B-Instruct-ALFWorld}{GiGPO-Qwen2.5-7B-Instruct-ALFWorld} & GiGPO &  ALFWorld \\
\midrule
\multirow{2}{*}{RLHF} 
&
\href{https://huggingface.co/meta-llama/Meta-Llama-3-8B-Instruct}{Meta-Llama-3-8B-Instruct} & \href{https://huggingface.co/princeton-nlp/Llama-3-Instruct-8B-DPO}{Llama-3-Instruct-8B-DPO} & DPO & instruction-following \\
&
\href{https://huggingface.co/meta-llama/Meta-Llama-3-8B-Instruct}{Meta-Llama-3-8B-Instruct} & \href{https://huggingface.co/princeton-nlp/Llama-3-Instruct-8B-SimPO}{Llama-3-Instruct-8B-SimPO} & SimPO & instruction-following \\
\bottomrule
\end{tabular}}
\vspace{-10pt}
\end{table}

\begin{figure}
    \centering
    \includegraphics[width=0.90\linewidth]{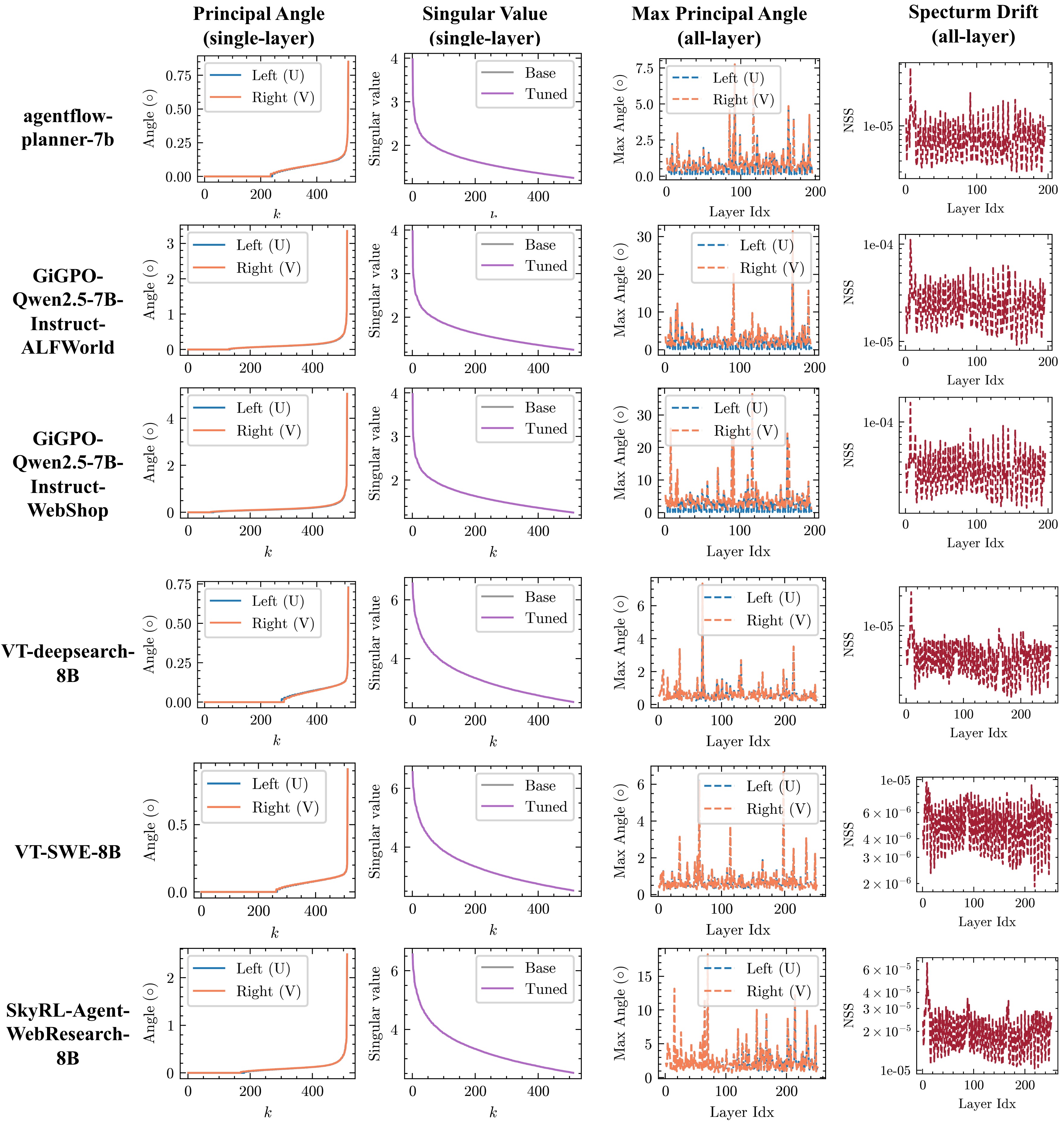}
    \vspace{-10pt}
    \caption{\small \textbf{Spectrum under RL in agent tasks.}
    In agent settings, including multi-turn interactions and tool use, RL leaves layer singular-value spectra nearly unchanged and induces only small rotations of the top-$k$ singular subspaces, consistent with the spectrum-preserving, off-principal RLVR regime. Results for RL with human feedback (RLHF), which exhibit the same optimization signature, appear in Fig.~\ref{fig:rl-spec-rlhf}.
    For consistency, we use the \emph{second block} $O$-projection layer as an exemplar single-layer readout.}
    \label{fig:rl-spec-agent}
    \vspace{-10pt}
\end{figure}

\begin{tcolorbox}[colback=orange!10!white,colframe=black,boxrule=0.9pt,boxsep=2pt,top=3pt,bottom=3pt,left=3pt,right=3pt]
\textbf{Takeaway 3.}
\textbf{RLVR learns off the principals:} it preserves spectral geometry, avoids principal weights, and its optimization bias vanishes when pretrained geometry is disrupted.
\end{tcolorbox}

\section{Theory-Guided Rethinking of Learning Algorithms for RL}
\label{sec:peft-implications}

A good theory should not only \emph{explain} observations but also \emph{inform} design.
Our account shows that RLVR and SFT follow disjoint optimization dynamics in parameter space, which implies that many SFT-era PEFT methods, especially those aligned with principal directions through sparse or low-rank priors, transfer poorly to RLVR.
This section \emph{validates} our predictions and \emph{demonstrates} how they guide the redesign of learning algorithms for RL.

\subsection{Probing Sparse Fine-Tuning in RL}

We use \textbf{sparse RL fine-tuning} to probe RL’s optimization dynamics by asking which weights can be frozen without materially altering the training trajectory.
We construct a \textbf{parameter mask} directly from the pretrained model \emph{without any additional training} and apply it to perform sparse RL fine-tuning.  
Following \citep{shenfeld2025rl}, we track the token-wise forward KL divergence $\mathrm{KL}(\pi \,\|\, \pi_{\text{ref}})$ between the fine-tuned policy and the base model throughout training.
This metric quantifies how closely a sparse run follows the dense baseline trajectory,
if pruning certain weights impedes learning, the KL drift will slow, indicating blocked optimization progress.
\begin{wrapfigure}{r}{0.5\textwidth}
    \vspace{-10pt}
    \centering
    \includegraphics[width=0.9\linewidth]{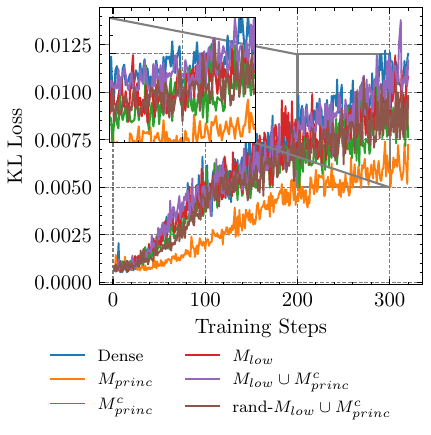}
    \vspace{-10pt}
    \caption{\small \textbf{KL loss curves on \texttt{DS-Qwen-1.5B}} under different masks.}
    \label{fig:kl-runs}
    \vspace{-10pt}
\end{wrapfigure}

\textbf{Mask design.}  
We evaluate several masks constructed directly from the pretrained model:
(i) $M_{\text{princ}}$ (\textit{principal-only}, top-50\% principal weights),  
(ii) $M_{\text{princ}}^{c}$ (\textit{non-principal-only}, the complementary subspace),  
(iii) $M_{\text{low}}$ (\textit{low-magnitude-only}),  
(iv) $M_{\text{low}}\cup M_{\text{princ}}^{c}$ (\textit{safe mask}, favoring non-principal and low-magnitude weights),
, and (v) a random mask with the same layer‑wise sparsity as (iv).
We choose 50\% for (i) as we want to isolate the effect of the number of parameters for a fair comparison to see the difference between (i) and (ii).

\textbf{Our findings.}
(See KL in Fig.~\ref{fig:kl-runs}; accuracy in Tab.~\ref{tab:benchmarks_combined} and the extended 500-step results in Tab.~\ref{tab:benchmarks_combined_500}.)
The \emph{safe} mask $M_{\text{low}}\cup M_{\text{princ}}^{c}$ most closely tracks the dense RLVR KL curve and reaches comparable final accuracy, indicating that our theory correctly identifies \emph{highly touchable} weights.
By contrast, the \emph{principal-only} mask yields the \emph{worst} optimization trajectory—its KL curve rises slowly—showing excessive intervention and degraded training dynamics.
This directly confirms that principal-targeted directions favored in SFT are ineffective for RL.

\textbf{Insight.}
Our results suggest a simple yet effective alternative:  
\textit{Freezing principal and large‑magnitude weights while updating non‑principal, low‑magnitude ones} closely reproduces dense RLVR behavior (KL trajectory and final accuracy) using roughly 70\% the parameters
This shows that our theory provides \textbf{practical guidance} for identifying the effective subspace of RL updates, \emph{entirely without additional training}. 
While the masks used here are \textit{one-shot} and fixed, combining this framework with dynamic mask refresh or adaptive scheduling~\citep{zhao2024galore, zhu2024apollo, liu2025lift} is a promising next step.

\subsection{Revisiting LoRA Through the Lens of Our Theory}

A recent report \citep{schulman2025lora} finds that low-rank LoRA, even rank-1, can match full-parameter RL performance.
Our theory offers an explanation: in full-parameter RL, effective updates lie \emph{off} the principal directions and induce only small spectral changes.
Low-rank adapters can approximate these off-principal updates, while freezing the base weights regularizes training and discourages moves toward principal directions.
With an appropriately scaled learning rate, the limited adapter capacity is therefore sufficient to catch up to full-parameter performance at least in the short run.

\textbf{However}, the same report suggests principal‑targeted variants such as \textbf{PiSSA}~\citep{meng2024pissa} should yield further gains.  
Our geometry account disagrees: aligning updates to top-$r$ principal directions enforces SFT-style behavior that is \emph{misaligned} with RLVR’s off-principal bias.

\textbf{Empirical test.}
On \texttt{DS-Qwen-1.5B} with \texttt{DeepMath-103K} \citep{he2025deepmath}, we sweep ranks $\{8,32,64\}$ and learning rates $\{1\!\times\!10^{-4},\,5\!\times\!10^{-5},\,1\!\times\!10^{-5}\}$ for 200 steps, and report $\mathrm{pass@1}$ (mean over 16 samples) on AIME24 and AMC23 (Fig.~\ref{fig:lora-pissa-1.5b}).
To control for model effects, we repeat on \texttt{Llama-3.2-3B-Instruct} with a Math corpus and report $\mathrm{pass@1}$ (mean over 4) on \textsc{MATH500} (Fig.~\ref{fig:lora-pissa-llama-3b}).

\begin{figure}
    \centering
    \includegraphics[width=0.88\linewidth]{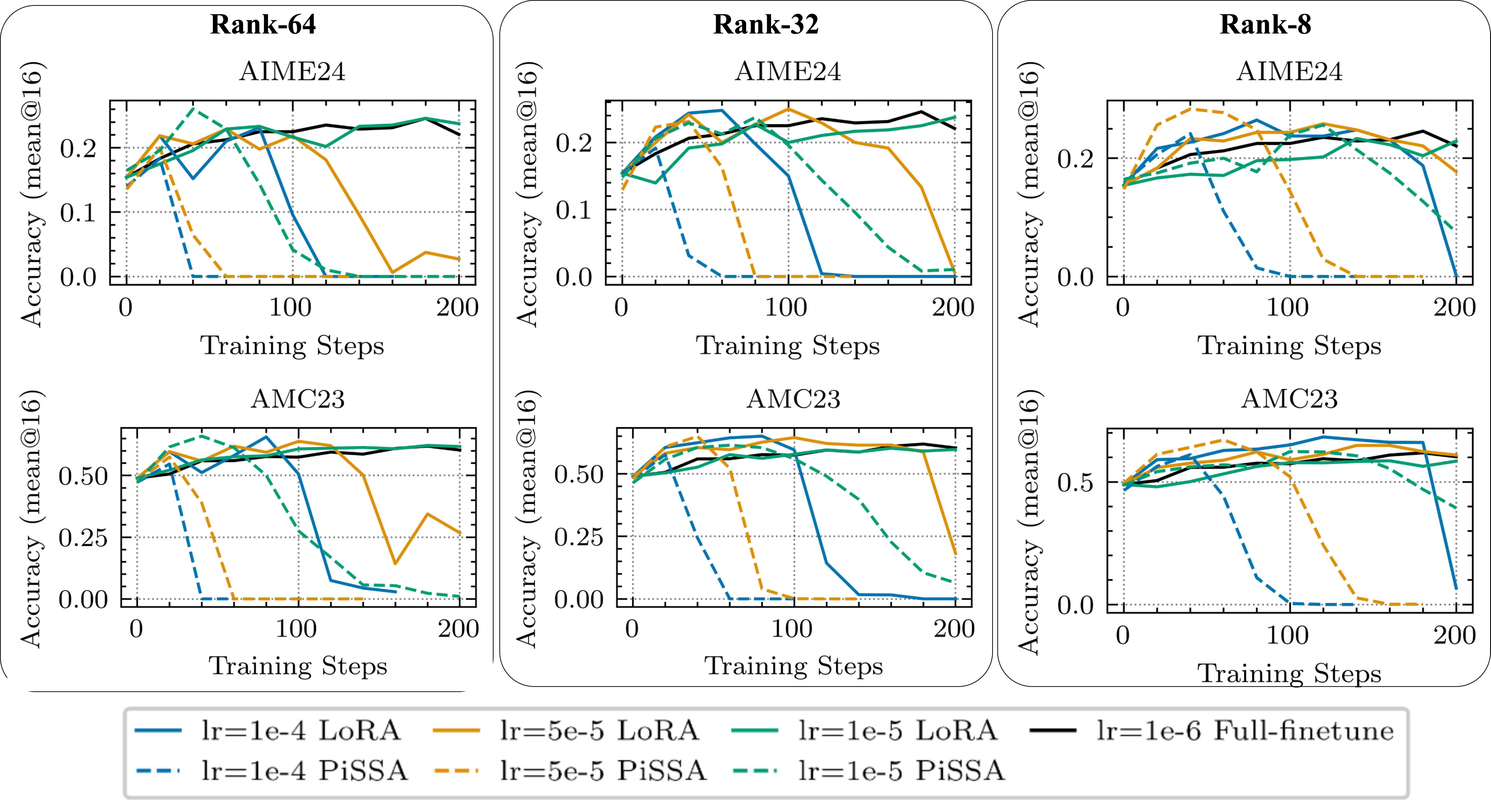}
    \caption{
    \textbf{LoRA vs. PiSSA on \texttt{DS-Qwen-1.5B} (DeepMath-103K).}
    We sweep ranks $\{8, 32, 64\}$ and learning rates $\{1\!\times\!10^{-4}, 5\!\times\!10^{-5}, 1\!\times\!10^{-5}\}$ for 200 steps, reporting $\mathrm{pass@1}$ (avg@16) on AIME24 (top) and AMC23 (bottom). 
    Across settings, \textbf{PiSSA} (principal-targeted) provides \emph{no additional gains} over LoRA and, at higher learning rates that force principal-direction updates, \emph{often collapses early}; LoRA remains more stable. 
    This supports our geometric account: forcing updates into principal directions (favored in SFT) is misaligned with RL, offering no obvious gain and leading to training collapse when scaling up learning rates.
    }
    \label{fig:lora-pissa-1.5b}
\end{figure}

\textbf{Our findings.}
Across settings, the principal-targeted \emph{PiSSA} provides no clear gain over LoRA.
At the higher learning rates used for low-rank adapters to match full-parameter performance, PiSSA \emph{often becomes unstable and collapses} earlier than LoRA.
This occurs because scaling the learning rate in PiSSA \emph{enforces updates along principal directions}, higher-curvature and spectrum-distorting, precisely the directions RLVR tends to avoid.
The result is brittle optimization and early collapse, whereas LoRA’s off-principal updates remain better aligned with RLVR’s geometry.

\textbf{Insight.}
These results support the geometry-based account: principal-aligned LoRA variants are \emph{over-fit to SFT’s update geometry} and misaligned with RL’s training dynamics, so success in SFT does not transfer to RL.

\begin{figure}
    \centering
    \includegraphics[width=0.8\linewidth]{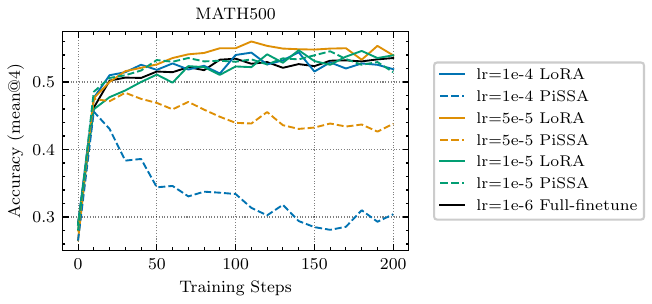}
    \caption{
    \textbf{LoRA vs.\ PiSSA on \texttt{LLaMA-3.2-3B}.}
    We sweep learning rates $\{1\!\times\!10^{-4}, 5\!\times\!10^{-5}, 1\!\times\!10^{-5}\}$ with a fixed rank of 64 for 200 steps, reporting $\mathrm{pass@1}$ (mean@4) on \textsc{MATH500}.
    Consistent with the \texttt{DS-Qwen-1.5B} results in Fig.~\ref{fig:lora-pissa-1.5b}, \textbf{PiSSA} provides \emph{no additional gain} over LoRA and, under higher learning rates that emphasize principal-direction updates, \emph{often collapses early}.
    }
    \label{fig:lora-pissa-llama-3b}
\end{figure}

\begin{tcolorbox}[colback=orange!10!white,colframe=black,boxrule=0.9pt,boxsep=2pt,top=3pt,bottom=3pt,left=3pt,right=3pt]
\textbf{Takeaway 4.} 
\textbf{RLVR operates in a distinct, geometry-driven optimization regime, so \emph{your old PEFT tricks may not work.}}  
Methods over-aligned with SFT’s principal-targeted dynamics (e.g., sparse or low-rank adapters) fail to transfer, calling for a new generation of \emph{RL-native, geometry-aware parameter-efficient algorithms.}
\end{tcolorbox}

\section{Conclusion}
We revisited the paradox of \emph{visible update sparsity} in RLVR and showed that it is a superficial readout of a deeper, \emph{model-conditioned, geometry-aligned optimization bias} that determines where updates land.
We formalized this mechanism with the \textbf{Three-Gate Theory}: a KL anchor constrains each on-policy step; pretrained geometry steers updates \emph{off principal directions} into low-curvature, spectrum-preserving subspaces; and finite precision renders the bias visible as sparsity by masking micro-updates.
Empirically, RLVR preserves spectral structure and avoids principal weights, whereas SFT targets principal directions and distorts the spectrum; when the pretrained geometry is disrupted, these signatures vanish, establishing geometry as the steering core.
Beyond explanation, our case studies bridge mechanism and practice: SFT-era principal-aligned PEFT (e.g., sparse/low-rank variants) often misaligns with RLVR’s off-principal regime.
Taken together, these results provide the \emph{first parameter-level account} of RLVR’s training dynamics, replacing a black-box view with a \emph{white-box} understanding of how parameters evolve under RLVR, and laying the foundation for \textbf{geometry-aware, RLVR-native} parameter-efficient learning algorithms.

\section*{Acknowledgment}
We thank Zhengqi Gao (Massachusetts Institute of Technology) for insightful discussion on the verl framework and idea discussion.
\newpage
\bibliography{reference/lrm, reference/new}
\bibliographystyle{iclr2026_conference}

\appendix

\section{Clarification of LLM Usage}
In this work, we employ LLMs to polish the writing throughout the paper and to assist in generating code for figure plotting.
Besides, we use it for drawing the teaser figure.

\section{More Related works}

\paragraph{Post-training}
Large-scale models pre-trained on broad domains serve as general-purpose backbones with extensive domain knowledge and notable zero-shot capabilities~\citep{radford2021learning,achiam2023gpt,touvron2023llama,hu2023toward,li2024multimodal,radford2018improving,brown2020language}. 
However, such pre-trained models often fail to meet the specific application requirements or align with domain-specific constraints. \emph{Post-training} methods address this gap by adapting foundation models to downstream tasks. Common approaches include supervised fine-tuning on curated datasets~\citep{howard2018universal,dodge2020fine,wei2021finetuned,chung2024scaling}, reinforcement learning from human or automated feedback~\citep{ziegler2019fine,ouyang2022training,guo2025deepseek,zhai2024fine}, and other recent techniques~\citep{rafailov2023direct}. 

Especially, the recent advances in LLM reasoning~\citep{deepseekai2025deepseekr1incentivizingreasoningcapability} highlight the effectiveness of \emph{Reinforcement Learning with Verifiable Rewards} (RLVR), which replaces subjective human judgments with automatically verifiable signals. RLVR has been shown to significantly enhance reasoning ability using policy optimization algorithms such as PPO~\citep{ouyang2022training} and GRPO~\citep{shao2024deepseekmath}. Building on these successes, a growing body of work~\citep{yu2025dapo,liu2025understanding,deepcoder2025,zhang2025srpo,liu2025prorl,xiong2025minimalist} continues to refine RL methods tailored for LLM reasoning.

\paragraph{SFT versus RL.}
Prior work comparing these paradigms has largely focused on downstream performance. A foundational result shows that on-policy RL can outperform offline SFT even with the same expert data \citep{ross2011reduction}. 
Recent empirical studies consistently reinforce this, finding that RL-tuned models often generalize better out-of-distribution \citep{han2025general, chu2025sft} and transfer more effectively to new tasks \citep{huan2025does} than their SFT counterparts.

While these studies establish a performance hierarchy, our work investigates a different dimension: how these distinct methods affect the model's internal structure. 
A recent study observed that RL fine-tunes only a fraction of the network's parameters \citep{mukherjee2025reinforcement}, but this empirical finding left the underlying mechanism unexplored and did not characterize or predict the affected subnetwork. Our work aims to bridge this gap by providing a mechanistic explanation for this phenomenon.

\section{Experimental Details}

\subsection{Training Settings}
\label{appx:training}
\paragraph{Models \& Datasets.}
We run post‑training experiments on three open models:
\textbf{DeepSeek‑R1‑Distill‑Qwen‑1.5B}~\citep{yang2024qwen25mathtechnicalreportmathematical},
\textbf{Qwen2.5‑Math‑7B}~\citep{yang2024qwen25mathtechnicalreportmathematical}, and
\textbf{Qwen3‑Base}~\citep{qwen3technicalreport}.
The maximum context length is set to $8192$ for DeepSeek‑R1‑Distill‑Qwen‑1.5B and Qwen2.5‑Math‑7B, and to $20480$for Qwen3‑4B‑Base.

We evaluate primarily on mathematics using two training corpora to reduce dataset‑specific confounds.
(1) \textbf{DAPO+MATH (DM):} a union of the DAPO‑Math‑17k set\footnote{\href{https://huggingface.co/datasets/BytedTsinghua-SIA/DAPO-Math-17k/blob/main/README.md}{DAPO‑Math‑17k}}
and the MATH dataset~\citep{hendrycks2021measuring}.
(2) \textbf{DS+SR:} the 47k DeepScaler collection~\citep{deepscaler2025} combined with high‑difficulty (levels 3–5) problems extracted from SimpleRL~\citep{zeng2025simplerl}.We use the version from \citet{huan2025does}.

\paragraph{Training details.}
We implement RLVR on the VeRL pipeline~\citep{sheng2024hybridflow} (v4.0) and use vLLM~\citep{kwon2023efficient}(v8.5) for rollouts.
We use FSDPv2 with the default mixed precision configuration.
All experiments run on NVIDIA H200 GPUs.
Unless otherwise noted, we use DAPO~\citep{yu2025dapo} \emph{without} an explicit reference‑KL penalty (ratio clipping as in DAPO), a global batch size of 256 (mini‑batch 64) with 4 gradient update per step.
We use \textbf{DAPO} primarily to eliminate the confounding effect of the KL penalty, which can otherwise obscure the intrinsic parameter update dynamics during training.

Per‑model configurations without specific mention:
\begin{itemize}[leftmargin=1.25em,itemsep=2pt]
    \item \textbf{Qwen2.5‑Math‑7B} on \textbf{DM}: 16 rollouts per prompt; 8 x H200 GPUs; 300 training steps.
    \item \textbf{DeepSeek‑R1‑Distill‑Qwen‑1.5B} on \textbf{DS+SR}: 12 rollouts per prompt; 16 x H200 GPUs; 320 steps.
    \item \textbf{Qwen3‑4B‑Base} on \textbf{DS+SR}: 16 rollouts per prompt; 32 x H200 GPUs; 150 steps. 
\end{itemize}

For LoRA and PiSSA studies, to reduce compute cost during learning-rate sweeps, we use the same DAPO recipe with a global batch size of 128 (mini-batch 32), four gradient updates per step, and 16 rollouts per prompt.  
Both \textbf{DeepSeek-R1-Distill-Qwen-1.5B} and \textbf{LLaMA-3.2-3B} are trained for 200 steps.

The actor is optimized with AdamW~\citep{loshchilov2017decoupled} (constant learning rate $1{\times}10^{-6}$, $\beta_1{=}0.9$, $\beta_2{=}0.999$).  
Rewards are \emph{verifiable}: $+1.0$ if the extracted final answer is correct and $-1.0$ otherwise (no separate format score), following the verifier implementation of~\cite{su2025klear}.  
We enable an over-length penalty with an additional 1024-token budget and a penalty factor of $1.0$.

\subsection{Evaluation settings}
\label{appx:eva}

We evaluate models on four widely used benchmarks: AIME24~\citep{AIME24}, AIME25~\citep{AIME25}, AMC23~\citep{AMC23}, MATH-500~\citep{lightman2023lets}, as we main train using math daastets. We used Eval-Chemy \citep{Evalchemy} with their default temperature 0.7 and 0.8 as the top-p value. In our experiments, we used \textbf{the averaged accuracy}, i.e., $pass@1 (avg@k)$ for all benchmarks. to evaluate the models' performance. Specifically, for AIME24 and AIME 25, we averaged accuracy on 64 samples, for AMC, we average accuracy on 32 samples, For MATH 500, our score is the average accuracy over 2 samples.

\section{Intervention details}
\label{appx:itervention}

\paragraph{Intervention~1: loss–preserving V/O rotation.}
Let $D$ be the head dimension, $H_q$ the number of query heads, $H_{kv}$ the number of key/value heads, and $n_{\mathrm{rep}}=H_q/H_{kv}$ (grouped GQA). Denote
\[
W_v \in \mathbb{R}^{d_{\mathrm{model}} \times (H_{kv}D)}, \qquad
W_o \in \mathbb{R}^{d_{\mathrm{model}}\times (H_qD)} .
\]

Draw any orthogonal $R\in\mathbb{R}^{D\times D}$ (Haar/Hadamard) and form the block rotations
\[
R_{kv} \;=\; \mathrm{diag}(\underbrace{R,\ldots,R}_{H_{kv}})\in\mathbb{R}^{(H_{kv}D)\times(H_{kv}D)}, \qquad
R_q \;=\; \mathrm{diag}(\underbrace{R,\ldots,R}_{n_{\mathrm{rep}}},\,\underbrace{R,\ldots,R}_{n_{\mathrm{rep}}},\ldots)\in\mathbb{R}^{(H_qD)\times(H_qD)}.
\]
We edit the weights by right–multiplication along the head axis:
\begin{equation}
\label{eq:vo-rot}
\boxed{\, W_v' = W_v R_{kv}, \qquad W_o' = W_o R_q . \,}
\end{equation}
If $b_v$ exists, reshape $b_v$ per head and set $b_v' = b_v R_{kv}$.

\begin{proposition}[Exact invariance]
\label{prop:vo-invariance}
Let $\mathrm{Ctx}=\mathrm{Attn}(Q,K,V)\in\mathbb{R}^{\cdot\times (H_qD)}$. Under~\eqref{eq:vo-rot},
\[
\mathrm{out}' \;=\; \mathrm{Attn}\!\left(Q,K,V R_{kv}\right)\,(W_o R_q)^\top 
\;=\; \mathrm{Ctx}\,R_q R_q^\top W_o^\top \;=\; \mathrm{Ctx}\,W_o^\top \;=\; \mathrm{out}.
\]
\end{proposition}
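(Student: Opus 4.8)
\textbf{Proof proposal for Proposition~\ref{prop:vo-invariance} (exact invariance under the loss-preserving V/O rotation).}

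The plan is to unfold the single attention block layer by layer and verify that the right-multiplications $W_v' = W_v R_{kv}$ and $W_o' = W_o R_q$ exactly cancel, so that the block output is bit-for-bit the original map. First I would fix notation for the forward pass: with input $X$, the per-head value projections are the columns of $X W_v$ grouped into $H_{kv}$ blocks of width $D$; after GQA replication each of the $H_q$ query heads reads one of these $D$-dimensional value blocks. The context tensor is $\mathrm{Ctx} = \mathrm{Attn}(Q,K,V)$ where, crucially, the attention \emph{weights} (the softmax matrices) depend only on $Q$ and $K$ and \emph{not} on $V$; the value path enters only linearly, as a left-multiplication of the per-head softmax matrix against the per-head value block. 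This is the one structural fact the whole argument rests on, so I would state it explicitly.

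Next I would track the rotation through the value path. Replacing $W_v$ by $W_v R_{kv}$ replaces each per-head value block $V^{(h)}$ by $V^{(h)} R$ (since $R_{kv}$ is block-diagonal with $R$ on each head block). Because the softmax weights are unchanged (they never saw $V$), the per-head context becomes $\mathrm{softmax}(\cdot)\, V^{(h)} R = \mathrm{Ctx}^{(h)} R$; concatenating over the $H_q$ heads and matching the replication pattern of $R_q$ to that of $R_{kv}$ (this is exactly why $R_q$ is defined as the head-wise block-diag of $R$ with the GQA replication built in), we get $\mathrm{Ctx}' = \mathrm{Ctx}\, R_q$. Then the output projection, now $W_o' = W_o R_q$, contributes $\mathrm{Ctx}'\,(W_o R_q)^\top = \mathrm{Ctx}\, R_q R_q^\top W_o^\top$. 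Since $R$ is orthogonal and $R_q$ is block-diagonal in copies of $R$, $R_q$ is itself orthogonal, so $R_q R_q^\top = I$ and the expression collapses to $\mathrm{Ctx}\, W_o^\top = \mathrm{out}$. If a value bias $b_v$ is present, the same reasoning applies: $b_v \mapsto b_v R_{kv}$ feeds $V^{(h)} \mapsto V^{(h)} + \text{(rotated bias row)}$, which still only moves the value vectors by a right-$R$ action and is undone by the same $R_q^\top$ on the output side.

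I expect the only real subtlety — the "main obstacle," though it is bookkeeping rather than mathematics — to be the GQA index juggling: making sure the copy of $R$ that acts on key/value head $h$ on the value side lines up with the copies of $R$ acting on the $n_{\mathrm{rep}}$ query heads that share head $h$ on the output side, so that the concatenated $\mathrm{Ctx}\,R_q$ genuinely factors with a \emph{single} orthogonal $R_q$ whose transpose annihilates the one in $W_o'$. Once the block structure of $R_q$ is pinned to mirror the replication map exactly (as in the definition given before the proposition), the cancellation $R_q R_q^\top = I$ is immediate and the chain of equalities in the proposition statement is verified term by term. I would close by noting this is an \emph{exact} (not approximate) invariance: no small-step or second-order terms appear, so the edited model computes the identical function, which is precisely what is needed for the \textsc{Rotate} intervention to be function-preserving.
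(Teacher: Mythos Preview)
Your proposal is correct and follows exactly the route the paper takes: the paper's ``proof'' is nothing more than the displayed chain of equalities in the proposition itself, and you have unpacked each arrow---the softmax weights being $V$-independent so that $\mathrm{Attn}(Q,K,VR_{kv})=\mathrm{Ctx}\,R_q$, followed by $(W_oR_q)^\top=R_q^\top W_o^\top$ and the orthogonality $R_qR_q^\top=I$---with the GQA replication bookkeeping made explicit. Nothing is missing and nothing differs in spirit; your write-up is simply a more careful expansion of the same one-line computation.
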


\paragraph{Intervention~2: head shuffle (lossless).}
Let $P_{kv}$ be a permutation of the $H_{kv}$ KV heads and $P_q$ its grouped expansion to $H_q$ heads. Apply
\[
\text{cols of }(W_k,W_v)\leftarrow P_{kv}, \qquad
\text{cols of }W_q\leftarrow P_q, \qquad
\text{columns of }W_o\leftarrow P_q^{-1}.
\]
This relabels which head carries which subspace, while leaving the block function unchanged.

We show that after weight intervention, the model weights update position has a sub-random overlap while those untouched weights stay a high overlap.

\section{Examples of why previous identified method fails}

\subsection{Failures of a Fixed Absolute Tolerance Rule}
\label{app:bf16diff}

\begin{itemize}[leftmargin=1.2em]
\item \textbf{False positives at large scale.} 
Within $[2^{10},2^{11}){=}[1024,2048)$, the bf16 spacing is $\operatorname{ULP}_{\mathrm{bf16}}=2^{10-7}=8$. 
Numbers like $1024.001$ and $1024.002$ differ by $10^{-3}{>}10^{-5}$, hence would be flagged as ``changed'' by the $10^{-5}$ rule, yet both round to the same bf16 code ($1024$), i.e., \emph{no storage-level change}.
\item \textbf{False negatives at small scale.}
Around $10^{-6}\!\approx\!2^{-20}$, the bf16 spacing is $\operatorname{ULP}_{\mathrm{bf16}}=2^{-27}\!\approx\!7.45{\times}10^{-9}$. 
Weights $w{=}10^{-6}$ and $\widehat{w}{=}2{\times}10^{-6}$ differ by $10^{-6}{\le}10^{-5}$ and would be marked ``equal'' by the $10^{-5}$ rule, yet they are separated by $\approx 134$ ULPs and quantize to \emph{different} bf16 codes.
\end{itemize}

\subsection{Justification of our probe}
\label{app:bf16diff-therom}

\begin{lemma}[Gap between distinct bf16 representables]
\label{lem:gap}
If $x\neq y$ are normalized bf16 numbers in the same binade $[2^{e},2^{e+1})$, then
\[
|x-y|\;\ge\;2^{\,e-7}
\quad\text{and}\quad
\frac{|x-y|}{\max(|x|,|y|)}\;>\;2^{-8}.
\]
The strict inequality also holds across the binade boundary.
\end{lemma}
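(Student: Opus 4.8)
The plan is to analyze the bfloat16 number system directly and reduce both claims to elementary facts about binary floating point. First I would recall the normalized bf16 layout: a sign bit, an 8-bit exponent, and a 7-bit mantissa, so that a nonzero normalized value in the binade $[2^{e},2^{e+1})$ has the form $x = 2^{e}\bigl(1 + m/2^{7}\bigr)$ for an integer $m \in \{0,1,\dots,2^{7}-1\}$. Fix two distinct normalized bf16 numbers $x \neq y$ in the same binade $[2^{e},2^{e+1})$. Writing $x = 2^{e}(1+m_x/128)$ and $y = 2^{e}(1+m_y/128)$ with $m_x \neq m_y$ integers, we get $|x-y| = 2^{e}\,|m_x-m_y|/128 \ge 2^{e}/128 = 2^{e-7}$, since $|m_x - m_y| \ge 1$. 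That establishes the first inequality.

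For the relative bound, note that both $|x|$ and $|y|$ lie in $[2^{e},2^{e+1})$, so $\max(|x|,|y|) < 2^{e+1}$. Combining with $|x-y|\ge 2^{e-7}$ gives
\begin{equation}
\frac{|x-y|}{\max(|x|,|y|)} \;>\; \frac{2^{e-7}}{2^{e+1}} \;=\; 2^{-8}.
\end{equation}
The inequality is strict because $\max(|x|,|y|) < 2^{e+1}$ strictly (no normalized bf16 number equals $2^{e+1}$ while lying in $[2^e, 2^{e+1})$). This handles the same-binade case.

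For the cross-binade claim, suppose without loss of generality $|x| < |y|$ with $x \in [2^{e},2^{e+1})$ and $y \in [2^{e'},2^{e'+1})$ for some $e' > e$; then $|y| \ge 2^{e+1} > |x|$, so actually $|x-y| = |y|-|x| > 2^{e+1} - 2^{e+1} = 0$ is too weak — instead I would argue $|x - y| \ge |y| - |x| \ge 2^{e'} - (2^{e+1} - 2^{e-7})$. The cleanest route is to bound below by the ULP at the \emph{smaller} scale: the largest bf16 value strictly below $2^{e+1}$ is $2^{e+1} - 2^{e-7}$, and the smallest value at or above $2^{e+1}$ is $2^{e+1}$ itself, so the gap across the boundary is exactly $2^{e-7}$, which matches the same-binade spacing at scale $e$. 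Thus $|x-y| \ge 2^{e-7}$ still holds, and since $\max(|x|,|y|) = |y| < 2^{e'+1}$ while $|x-y| \ge 2^{e'-1-7}$ (using the spacing just below $2^{e'}$ when $x$ is the boundary-adjacent value, or a larger gap otherwise), the ratio bound $> 2^{-8}$ follows by the same arithmetic applied at the appropriate binade. The main obstacle — really just bookkeeping — is handling the boundary value $2^{e+1}$ carefully so that the "same binade" spacing argument transfers cleanly across the binade edge; once one observes that the representable grid is contiguous with spacing $2^{e-7}$ throughout $[2^{e}, 2^{e+1}]$ (including the right endpoint, which belongs to the next binade but sits exactly one ULP above the previous top value), both inequalities follow uniformly. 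I would also note in passing that subnormals and the value $0$ are excluded by the "normalized" hypothesis, so no separate case is needed there.
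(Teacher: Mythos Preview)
The paper states this lemma without proof, so there is no reference argument to compare against. Your same-binade argument is the standard one and is fully correct: the grid spacing $2^{e-7}$ combined with the strict upper bound $\max(|x|,|y|)<2^{e+1}$ gives both claims cleanly.

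Your cross-binade argument, however, does not close. From $|x-y|\ge 2^{e'-1-7}=2^{e'-8}$ and $\max(|x|,|y|)=|y|<2^{e'+1}$ you only obtain
\[
\frac{|x-y|}{\max(|x|,|y|)}\;>\;\frac{2^{e'-8}}{2^{e'+1}}\;=\;2^{-9},
\]
which is weaker than the asserted $2^{-8}$. More to the point, the \emph{strict} inequality the lemma claims across the boundary is actually false: take $x=2^{e+1}-2^{e-7}$ (the top of binade $e$) and $y=2^{e+1}$ (the bottom of binade $e{+}1$). Then $|x-y|=2^{e-7}$ and $\max(|x|,|y|)=2^{e+1}$, so the ratio equals $2^{-8}$ exactly. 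The correct cross-binade statement is $\ge 2^{-8}$, with equality only at this adjacent pair; to prove it you must split on whether $|y|=2^{e'}$ (then $\max=|y|=2^{e'}$ and the bound is tight) or $|y|\ge 2^{e'}+2^{e'-7}$ (then $|x-y|>2^{e'-7}$ and the strict bound holds). This slip is harmless for the paper's downstream use---Proposition~\ref{prop:sound} only needs the ratio to exceed $2\eta$ with $\eta<2^{-9}$---but your write-up should either weaken the boundary claim to a non-strict inequality or flag the single exceptional pair rather than assert that the same arithmetic ``follows.''
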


\begin{lemma}[ULP lens: magnitude-dependent threshold]
\label{lem:ULP-lens}
For normalized bf16 values $x$ with $|x|\in[2^e,2^{e+1})$,
\[
\frac{\mathrm{ULP}_{\mathrm{bf16}}(x)}{|x|}\in(2^{-8},\,2^{-7}] \;=\; (0.390625\%,\,0.78125\%].
\]
Hence the \emph{minimal realized} relative update at magnitude $|x|$ is \(\gtrsim\tfrac12\,\mathrm{ULP}_{\mathrm{bf16}}(x)/|x|\in(0.195\%,\,0.391\%]\).
In particular, \emph{larger} $|x|$ requires a \emph{larger} absolute step to register. \qed
\end{lemma}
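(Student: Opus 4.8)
\textbf{Proof proposal for Lemma~\ref{lem:ULP-lens} (ULP lens).}

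The plan is to work directly from the IEEE-style layout of the bfloat16 format and compute the ratio $\mathrm{ULP}_{\mathrm{bf16}}(x)/|x|$ on a single binade, then extremize over that binade. First I would recall that a normalized bf16 value with $|x|\in[2^e,2^{e+1})$ has the form $|x| = 2^e(1+f)$ where $f = m\cdot 2^{-7}$ for an integer mantissa $m\in\{0,1,\dots,127\}$, since bf16 carries $7$ explicit mantissa bits. The spacing between consecutive representables in this binade is exactly the weight of the least significant mantissa bit at exponent $e$, namely $\mathrm{ULP}_{\mathrm{bf16}}(x) = 2^{e-7}$ (constant across the binade). Hence
\begin{equation}
\frac{\mathrm{ULP}_{\mathrm{bf16}}(x)}{|x|} \;=\; \frac{2^{e-7}}{2^e(1+f)} \;=\; \frac{2^{-7}}{1+f},\qquad f\in[0,\,1-2^{-7}].
\end{equation}

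The next step is just to range over $f$. The map $f\mapsto 2^{-7}/(1+f)$ is strictly decreasing, so its supremum on the binade is attained at $f=0$ (i.e.\ $|x|=2^e$ exactly), giving the value $2^{-7}$, which is attained and hence the closed endpoint; and its infimum is approached as $f\to 1-2^{-7}$ (i.e.\ $|x|\to 2^{e+1}$ from below), giving $2^{-7}/(2-2^{-7}) = 2^{-8}/(1-2^{-8}) > 2^{-8}$, which is not attained and hence the open endpoint. This yields $\mathrm{ULP}_{\mathrm{bf16}}(x)/|x|\in(2^{-8},\,2^{-7}]$, and the percentages follow since $2^{-7}=0.78125\%$ and $2^{-8}=0.390625\%$. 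For the ``minimal realized relative update'' claim, I would invoke Corollary~\ref{cor:magnitude-threshold}: a stored weight at magnitude $|x|$ registers a change only once $|\Delta W|\gtrsim \tfrac12\,\mathrm{ULP}_{\mathrm{bf16}}(x)$ (round-to-nearest moves the code only when the perturbation crosses half a ULP), so dividing by $|x|$ and using the bracket just derived gives the minimal realized relative step in $(2^{-9},\,2^{-8}] = (0.1953\%,\,0.3906\%]$, matching the stated $(0.195\%,\,0.391\%]$ up to rounding. The final sentence — larger $|x|$ needs a larger absolute step — is immediate from $\mathrm{ULP}_{\mathrm{bf16}}(x)=2^{e-7}$ being monotone in the binade index $e$, which itself is monotone in $|x|$.

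There is essentially no hard part here: the lemma is a direct unpacking of the floating-point grid, and the only things to be careful about are (i) getting the endpoint types right (closed at $2^{-7}$, open at $2^{-8}$), which follows from whether the extremal $f$ is achieved by an actual representable, and (ii) the factor of $\tfrac12$ in the ``realized'' statement, which is a rounding convention (round-to-nearest-even) rather than a computation — I would state it as such and point to Corollary~\ref{cor:magnitude-threshold} rather than re-derive it. One minor modeling choice worth flagging explicitly in the proof is that I am treating bf16 as having the same exponent range as fp32 (8 exponent bits), so ``normalized'' is the only regime in play for the weight magnitudes of interest and subnormals can be ignored; this is standard and consistent with the paper's earlier use of the format in Definition~\ref{def:bf16-unchanged} and Lemma~\ref{lem:gap}.
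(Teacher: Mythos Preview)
Your proposal is correct and is exactly the natural unpacking of the bf16 grid; the paper does not give a separate proof of this lemma at all (it is stated with a terminal \qed and treated as immediate from the format), so your explicit computation of $\mathrm{ULP}_{\mathrm{bf16}}(x)=2^{e-7}$ and the extremization of $2^{-7}/(1+f)$ over the binade is precisely the derivation the paper is taking for granted. One tiny wording nit: the infimum over $f\in[0,1-2^{-7}]$ is actually \emph{attained} at $f=1-2^{-7}$, but its value $2^{-8}/(1-2^{-8})$ is strictly greater than $2^{-8}$, which is why the left endpoint is open --- your conclusion is right, only the phrase ``not attained'' should refer to the endpoint $2^{-8}$ rather than to the infimum itself.
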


\begin{proposition}[Soundness and completeness of the probe]
\label{prop:sound}
Let $w_i,\widehat w_i$ be \emph{normalized} bf16 values (finite, nonzero), and suppose
\(\eta < \tfrac{1}{2}\min_x \operatorname{ULP}_{\mathrm{bf16}}(x)/|x| = 2^{-9}\approx 1.953\cdot 10^{-3}\).
Then
\[
\bigl|\,\widehat{w}_i - w_i\,\bigr| \le \eta \max(|w_i|,|\widehat{w}_i|)
\quad\Longleftrightarrow\quad
\mathrm{bf16}(w_i) = \mathrm{bf16}(\widehat{w}_i).
\]
\end{proposition}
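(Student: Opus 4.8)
\textbf{Proof proposal for Proposition~\ref{prop:sound}.}

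The plan is to reduce the bi-conditional to the behavior of bf16 rounding on normalized values, using Lemma~\ref{lem:gap} for one direction and the monotonicity of the spacing function for the other. First I would recall the structure of the normalized bf16 grid: on each binade $[2^e,2^{e+1})$ the representable points are equally spaced with spacing $\operatorname{ULP}_{\mathrm{bf16}}=2^{e-7}$, and the relative spacing $\operatorname{ULP}_{\mathrm{bf16}}(x)/|x|$ lies in $(2^{-8},2^{-7}]$ by Lemma~\ref{lem:ULP-lens}; in particular its infimum over all normalized $x$ is $2^{-8}$, so $\tfrac12$ of that infimum is $2^{-9}$, which is exactly the threshold imposed on $\eta$.

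For the direction $\mathrm{bf16}(w_i)=\mathrm{bf16}(\widehat w_i)\ \Rightarrow\ |\widehat w_i-w_i|\le\eta\max(|w_i|,|\widehat w_i|)$: if the two stored codes are equal then $w_i=\widehat w_i$ as real numbers (they are \emph{already} bf16 values by hypothesis, so rounding is the identity), hence the left side is $0$ and the inequality is trivial. For the contrapositive of the other direction, suppose $\mathrm{bf16}(w_i)\neq\mathrm{bf16}(\widehat w_i)$, i.e.\ $w_i\neq\widehat w_i$ as distinct normalized representables. If they lie in the same binade, Lemma~\ref{lem:gap} gives $|w_i-\widehat w_i|/\max(|w_i|,|\widehat w_i|)>2^{-8}>2^{-9}>\eta$, so the tolerance test fails. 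If they straddle a binade boundary, I would invoke the ``strict inequality also holds across the binade boundary'' clause of Lemma~\ref{lem:gap} (alternatively, argue directly: the smaller of the two, say $w_i$, satisfies $|w_i|<2^e\le|\widehat w_i|$ with $|\widehat w_i|-2^e$ a nonnegative multiple of $2^{e-7}$ and $2^e-|w_i|$ a positive multiple of $2^{e-8}$, so $|\widehat w_i-w_i|\ge 2^{e-8}$ while $\max(|w_i|,|\widehat w_i|)<2^{e+1}$, giving relative gap $>2^{-9}$ — one should double check the constant here, but it clears $\eta$ comfortably). Either way $|\widehat w_i-w_i|>\eta\max(|w_i|,|\widehat w_i|)$, which is the contrapositive of ``$\le$ implies equal codes.''

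Assembling the two implications yields the stated equivalence. I expect the main obstacle to be the across-binade case: the relative-gap bound there is slightly more delicate than within a binade because the spacing jumps by a factor of two at the boundary, so one must be careful that the \emph{denominator} $\max(|w_i|,|\widehat w_i|)$ uses the larger binade's scale while the \emph{numerator} can be as small as one ULP of the \emph{smaller} binade. The cleanest route is simply to cite the boundary clause already asserted in Lemma~\ref{lem:gap} rather than re-derive it; if a self-contained argument is wanted, the worst case is $w_i=2^e-2^{e-8}$ (the largest representable just below the boundary) and $\widehat w_i=2^e$, giving gap $2^{e-8}$ and denominator $2^e$, hence relative gap exactly $2^{-8}>2^{-9}\ge\eta$, which still works with the stated bound.
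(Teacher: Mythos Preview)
Your proposal is correct and follows essentially the same route as the paper: the backward direction is trivial (equal bf16 codes means the stored reals coincide, so the difference is zero), and the forward direction is the contrapositive of Lemma~\ref{lem:gap}, which already asserts the strict relative-gap bound $>2^{-8}$ both within and across binades. The paper's proof is just those two sentences; your additional self-contained derivation of the across-binade constant is not needed since Lemma~\ref{lem:gap} covers it, though your worst-case check ($w_i=2^e-2^{e-8}$, $\widehat w_i=2^e$, relative gap exactly $2^{-8}$) is a useful sanity check and in fact shows the lemma's strict inequality is tight at the boundary.
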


\begin{proof}
($\Rightarrow$)If $w_i\neq \widehat w_i$, Lemma~\ref{lem:ULP-lens} gives
\(|\widehat w_i-w_i|/\max(|w_i|,|\widehat w_i|) > 2^{-8} > 2\eta\), contradiction. 
Hence $w_i=\widehat w_i$ as bf16 numbers.

($\Leftarrow$) If the stored bf16 values are equal, the difference is $0$, which satisfies \eqref{eq:bf16-close}. \qed
\end{proof}

\begin{corollary}[Choice \(\eta=10^{-3}\) is safe]
\label{cor:eta}
Since \(10^{-3} < 2^{-9}\), Proposition~\ref{prop:sound} applies: the test
\eqref{eq:bf16-close} passes \emph{iff} the two bf16 entries are bit-wise identical (or both zero).
Thus $\eta=10^{-3}$ yields a \emph{scale-aware} probe that flags equality \underline{only} when storage is unchanged.
\end{corollary}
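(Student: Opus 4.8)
The plan is to obtain this corollary as an immediate instantiation of Proposition~\ref{prop:sound}, so the only substantive work is (i) checking that the chosen tolerance $\eta=10^{-3}$ really does lie below the threshold demanded by that proposition, and (ii) patching the zero case that the proposition's nonzero hypothesis excludes.

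First I would pin down the threshold and verify the numerical inequality. Proposition~\ref{prop:sound} requires $\eta<\tfrac12\min_x \operatorname{ULP}_{\mathrm{bf16}}(x)/|x|$, and Lemma~\ref{lem:ULP-lens} shows this ratio lies in $(2^{-8},2^{-7}]$ for every normalized bf16 value, so the half-minimum is $2^{-9}$. Since $2^{9}=512$ we have $2^{-9}=1/512=0.00195\ldots$, hence $10^{-3}=0.001<0.00195\ldots=2^{-9}$, and $\eta=10^{-3}$ satisfies the hypothesis.

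Next I would simply invoke Proposition~\ref{prop:sound}: for normalized (finite, nonzero) bf16 scalars $w_i,\widehat w_i$, the test \eqref{eq:bf16-close} holds iff $\operatorname{bf16}(w_i)=\operatorname{bf16}(\widehat w_i)$, i.e.\ iff the stored codes are bit-for-bit identical. The one situation outside the proposition's scope is when an operand is zero: if $w_i=\widehat w_i=0$ then the difference is $0$, \eqref{eq:bf16-close} holds trivially, and the bit patterns agree; if exactly one is zero and the other is a normalized $x\neq0$, then $|\widehat w_i-w_i|=|x|>\eta\,|x|=\eta\max(|w_i|,|\widehat w_i|)$, so the test correctly reports a change. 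Combining, the test passes exactly when bf16 storage is unchanged, which is the claimed equivalence; the \emph{scale-aware} qualifier is then just the remark that the right-hand side of \eqref{eq:bf16-close} scales with operand magnitude, matching the magnitude-dependent spacing of Lemma~\ref{lem:gap} and Lemma~\ref{lem:ULP-lens}.

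I do not expect a genuine obstacle: the corollary is a direct specialization of the preceding proposition. The only place that wants care is the constant bookkeeping — confirming that the bound $\operatorname{ULP}_{\mathrm{bf16}}(x)/|x|>2^{-8}$ from Lemma~\ref{lem:ULP-lens} is precisely what yields the $2^{-9}$ safety margin in Proposition~\ref{prop:sound} — and, if one wants the statement verbatim for \emph{all} stored values rather than the normalized regime stated in the lemmas, a brief remark covering subnormals and signed zeros.
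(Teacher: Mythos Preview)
Your proposal is correct and mirrors the paper's own treatment: the corollary is stated as an immediate consequence of Proposition~\ref{prop:sound} once the numerical check $10^{-3}<2^{-9}$ is made, with the zero case handled parenthetically. Your explicit handling of the one-zero-operand case and the subnormal remark are careful additions but not required for the paper's statement.
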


\section{Math Analysis}
\label{appx:theory}

\subsection{Policy-Gradient Fine-Tuning (DAPO)}
\label{sec:rl_ce}

Assume an \textit{old} policy
$\pi_{\text{old}}$ that we use to sample $G$ candidate completions
$y^{1:G}$ for each prompt $x\in \mathcal{X}$.
For a single token $y_{i,t}$ (token $t$ in completion $i$) we define the
\emph{importance-weighted advantage}

\[
w_{i,t}
  \;=\;
  \underbrace{\frac{\pi_\theta(y_{i,t}|x, y_{<t})}{\pi_{\text{old}}(y_{i,t}|x, y_{<t})}}_{\text{importance ratio}}
  \;\hat A_{i,t}\;
  \mathbb I_{\text{clip}}
  \quad\in\mathbb R,
\tag{1}
\]
where $\hat A_{i,t}$ is the estimated advantage and
$\mathbb I_{\text{clip}}\!\in\{0,1\}$ implements the usual trust-region clipping.

\paragraph{Token-level objective.}
The DAPO loss can be written as a sum of weighted log-probabilities
\[
J_{\text{RL}}(\theta)
  \;=\;
  \mathbb E_{x\sim\mathcal X,\;y^{1:G}\sim\pi_{\text{old}}}
  \Bigl[
    \tfrac{1}{\sum_i |y^i|}
    \sum_{i=1}^G \sum_{t=1}^{|y^i|}
      w_{i,t}\,
      \log\pi_\theta(y_{i,t}\mid x,y^i_{<t})
  \Bigr].
\tag{2}
\]

\subsection{Proof of Gate I: On‑Policy RL Implies a One‑Step KL Leash}
\label{appx:proof_gate_1}
This appendix provides the standard tilting oracle and $M$‑projection facts, local second‑order expansions, and the proof of the one‑step policy‑KL leash (Prop.~\ref{prop:one-step-kl-main} in the main text). 
\textit{\textbf{We keep the proof concise, otherwise too lengthy, especially for those has shown in some prior work~\cite{shenfeld2025rl, wu2025invisible}.}}
Our one-step analysis is inspired by recent work~\cite{wu2025invisible, shenfeld2025rl}, which uses a similar variational approach to show that even the final converged policy remains KL-proximal to the base policy.
We also record a trust‑region/clipping bound used when $\beta=0$.

Throughout, $x$ is fixed, $q(\cdot\!\mid x)$ has full support on $\mathcal{Y}$, and $\pi_\theta(\cdot\!\mid x)$ is a $C^3$ parametric family with log‑density $\log \pi_\theta$ locally smooth. Expectations without explicit subscript are conditional on $x$.

We first show useful lemmas here.

\begin{lemma}[Frozen-policy surrogate is second-order tight]
\label{lem:frozen-second-order}
Let $f(\theta):=\mathcal L_{\mathrm{PG}}(\theta)$ in \eqref{eq:g1-obj} and
$g(\theta):=\widetilde{\mathcal L}_{\mathrm{PG}}(\theta;\theta_t)$ be the frozen‑policy surrogate with $A_{\theta_t}$.
Then $f(\theta_t)=g(\theta_t)$ and $\nabla f(\theta_t)=\nabla g(\theta_t)$.
If $\nabla f$ and $\nabla g$ are $L$‑Lipschitz in a neighborhood of $\theta_t$, then
\[
|\,f(\theta_t+\Delta\theta)-g(\theta_t+\Delta\theta)\,| \;\le\; \tfrac{L}{2}\,\|\Delta\theta\|^2.
\]
\emph{Proof.}
At $\theta_t$, both objectives evaluate to $-\E_{\pi_{\theta_t}}[A_{\theta_t}\log\pi_{\theta_t}]$. For the gradient, using the log‑derivative trick and the centering of $A_{\theta_t}$, both yield
$-\E_{\pi_{\theta_t}}[A_{\theta_t}\nabla\log\pi_{\theta_t}]$.
Thus $f(\theta_t)=g(\theta_t)$ and $\nabla f(\theta_t)=\nabla g(\theta_t)$.
The bound is the standard second‑order Taylor remainder under Lipschitz gradients. \qed
\end{lemma}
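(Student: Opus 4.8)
The plan is to establish the two pointwise identities first and then read off the quadratic bound as a formal consequence. \emph{Step 1 (value agreement at $\theta_t$).} This is immediate: at $\theta=\theta_t$ the on-policy sampling law appearing in $f=\mathcal L_{\mathrm{PG}}$ is exactly $\pi_{\theta_t}$, and the frozen advantage $A_{\theta_t}$ is by definition the value of $A^{\perp}$ at $\theta_t$, so substituting into $\eqref{eq:g1-obj}$ gives $f(\theta_t)=-\mathbb{E}_{x,\,y\sim\pi_{\theta_t}}\!\big[A_{\theta_t}(x,y)\log\pi_{\theta_t}(y\mid x)\big]=g(\theta_t)$. \emph{Step 2 (gradient agreement at $\theta_t$).} For $g$ the sampling measure is frozen and $A_{\theta_t}$ carries a stop-gradient, so $\nabla g(\theta_t)=-\mathbb{E}_{\pi_{\theta_t}}[A_{\theta_t}\nabla\log\pi_{\theta_t}]$ directly. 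For $f$, differentiating $-\mathbb{E}_{y\sim\pi_\theta}[A^{\perp}\log\pi_\theta]$ and using the log-derivative identity produces the REINFORCE term $-\mathbb{E}_{\pi_{\theta_t}}[A_{\theta_t}\nabla\log\pi_{\theta_t}]$ plus a ``sampler-variation'' cross term $-\mathbb{E}_{\pi_{\theta_t}}\!\big[(A_{\theta_t}\log\pi_{\theta_t})\,\nabla\log\pi_{\theta_t}\big]$; I would then argue this cross term vanishes, yielding $\nabla f(\theta_t)=\nabla g(\theta_t)$.

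The cancellation in Step 2 is the substantive point. Using the autoregressive factorization $\log\pi_\theta(y\mid x)=\sum_t\log\pi_\theta(y_t\mid x,y_{<t})$, the cross term splits over pairs of token positions; for each pair one conditions on the appropriate prefix and invokes the mean-zero property of the per-token score, $\mathbb{E}_{y_t\sim\pi_{\theta_t}(\cdot\mid x,y_{<t})}[\nabla\log\pi_{\theta_t}(y_t\mid x,y_{<t})]=0$, together with the baseline/centering property of the normalized advantage $A^{\perp}$ (conditional mean zero, as in the group-normalized DAPO/GRPO construction). Prefix-by-prefix this telescopes the cross term to zero. Equivalently, one may simply note that in the implemented surrogate the sampling step is detached (stop-gradient) / written through an importance ratio that equals the identity at $\theta_t$, so this term is never present — the lemma is robust to which bookkeeping convention one adopts, and I would state the argument at the level of the score-function identity so that both readings are covered.

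\emph{Step 3 (quadratic remainder).} Set $r:=f-g$, so $r(\theta_t)=0$ and $\nabla r(\theta_t)=0$ by Steps 1--2, and $\nabla r$ is $L$-Lipschitz in a neighborhood of $\theta_t$ (the difference of two Lipschitz-gradient maps, absorbing the factor of two into $L$). Applying the fundamental theorem of calculus to $s\mapsto\nabla r(\theta_t+s\Delta\theta)$ gives $\|\nabla r(\theta_t+s\Delta\theta)\|\le Ls\,\|\Delta\theta\|$, and integrating once more, $\big|f(\theta_t+\Delta\theta)-g(\theta_t+\Delta\theta)\big|=|r(\theta_t+\Delta\theta)-r(\theta_t)|\le\int_0^1\|\nabla r(\theta_t+s\Delta\theta)\|\,\|\Delta\theta\|\,ds\le\tfrac{L}{2}\|\Delta\theta\|^2$.

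The only nontrivial part is Step 2: everything else is direct substitution or a textbook Taylor-remainder estimate. The hard part will be making the cancellation of the sampler-variation cross term rigorous — this requires (i) the conditional mean-zero property of the normalized advantage and of the per-token score, combined through the autoregressive decomposition, and (ii) permission to interchange differentiation and expectation, which is granted by the $C^3$ and local-smoothness hypotheses already assumed (dominated convergence on a neighborhood of $\theta_t$). A minor secondary point: if $A^{\perp}$ is shaped by a reference-KL log-ratio term, one should check that the value both surrogates use is its frozen value at $\theta_t$, so that Step 1 and the REINFORCE term in Step 2 are unchanged.
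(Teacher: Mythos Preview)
Your skeleton matches the paper's proof exactly: value agreement, gradient agreement via the log-derivative trick and advantage centering, then a Taylor remainder under Lipschitz gradients.

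One genuine issue in Step~2: the autoregressive telescoping argument you sketch for killing the sampler-variation cross term $-\mathbb{E}_{\pi_{\theta_t}}\!\big[(A_{\theta_t}\log\pi_{\theta_t})\,\nabla\log\pi_{\theta_t}\big]$ does not go through. In GRPO/DAPO the advantage $A_{\theta_t}(x,y)$ is a \emph{sequence-level} group-normalized return depending on the entire rollout $y$; when you condition on a prefix $y_{<s}$ to invoke $\mathbb{E}[\nabla\log\pi_{\theta_t}(y_s\mid x,y_{<s})]=0$, you cannot pull $A_{\theta_t}(x,y)$ outside that inner expectation, so no cancellation occurs. A one-step sanity check already fails: for a single Bernoulli$(p)$ token with centered advantage $A(1)=a$, $A(0)=-\tfrac{p}{1-p}a$, the cross term equals $pa\big[(1-p)\log p+p\log(1-p)\big]\neq 0$. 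Your parenthetical fallback --- that in the implemented surrogate the sampling law is detached (equivalently, written through an importance ratio equal to $1$ at $\theta_t$) --- is the correct resolution, and is in fact what the paper's one-line ``log-derivative trick and centering'' is leaning on. Make that the primary argument and drop the telescoping claim.

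A minor point on Step~3: with $\nabla f$ and $\nabla g$ each $L$-Lipschitz, $\nabla(f-g)$ is only $2L$-Lipschitz, so your integration honestly yields $L\|\Delta\theta\|^2$; the constant $\tfrac{L}{2}$ in the stated bound either loses a factor of two or should be read with $L$ as the Lipschitz constant of $\nabla(f-g)$ rather than of $\nabla f$ and $\nabla g$ separately. Your ``absorbing the factor of two into $L$'' silently switches conventions mid-proof.
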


\paragraph{1: Exponential tilting and M‑projection}
\label{appx:proof_kl}

\begin{lemma}[Gibbs variational principle / exponential tilting]
\label{lem:tilt}
Fix $\beta>0$ and a full‑support reference $q(\cdot\!\mid x)$. Then
\[
\max_{\pi\ll q}\ \Big\{\E_{y\sim \pi}\![R(x,y)]-\beta\,D_{\mathrm{KL}}(\pi\|q)\Big\}
\]
is uniquely maximized by
\[
\tilde q_\beta(y\mid x)=\frac{q(y\mid x)\exp(R(x,y)/\beta)}{\E_{y\sim q}[\exp(R(x,y)/\beta)]}.
\]
\end{lemma}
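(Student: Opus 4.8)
The plan is to prove this by the standard "completing the square" manipulation that rewrites the free-energy functional as a constant minus a KL divergence to the claimed optimizer, after which non-negativity of KL finishes the argument. First I would introduce the normalizing constant $Z_\beta(x) := \E_{y\sim q}[\exp(R(x,y)/\beta)]$, which is finite and strictly positive because $R$ is bounded (rewards are verifiable and normalized, per the setup of \eqref{eq:obj}) and $q(\cdot\mid x)$ is a probability measure; this makes $\tilde q_\beta(\cdot\mid x)$ a well-defined probability distribution with the same support as $q$, hence full support on $\mathcal{Y}$.

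Next, for any candidate $\pi \ll q$ with finite objective, I would expand
\[
\E_{y\sim\pi}[R(x,y)] - \beta\,D_{\mathrm{KL}}(\pi\|q)
= \beta\,\E_{y\sim\pi}\!\Big[\log\frac{q(y\mid x)\,e^{R(x,y)/\beta}}{\pi(y\mid x)}\Big]
= \beta\log Z_\beta(x) - \beta\,D_{\mathrm{KL}}\!\big(\pi\,\|\,\tilde q_\beta\big),
\]
where the second equality just substitutes $q(y\mid x)e^{R(x,y)/\beta} = Z_\beta(x)\,\tilde q_\beta(y\mid x)$ and pulls the constant $\log Z_\beta(x)$ outside the expectation. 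The absolute-continuity bookkeeping is routine: $\tilde q_\beta$ and $q$ are mutually absolutely continuous since the tilting factor $e^{R/\beta}$ is bounded away from $0$ and $\infty$, so $\pi\ll q$ iff $\pi\ll\tilde q_\beta$ and the Radon--Nikodym chain rule applies; for $\pi$ not absolutely continuous with respect to $q$ the objective is $-\infty$ and can be ignored.

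Finally, since $D_{\mathrm{KL}}(\pi\|\tilde q_\beta)\ge 0$ with equality if and only if $\pi=\tilde q_\beta$ (the equality case of Gibbs' inequality / strict convexity of $t\mapsto t\log t$, valid because $\tilde q_\beta$ has full support), the supremum of the objective over all admissible $\pi$ equals $\beta\log Z_\beta(x)$ and is attained at the unique maximizer $\pi=\tilde q_\beta$. I do not anticipate any genuine obstacle here — this is a textbook identity (the Donsker--Varadhan / Gibbs variational principle); the only points deserving a sentence of care are the finiteness of $Z_\beta$ (which uses boundedness of $R$) and the equality case of KL, which is what delivers the \emph{uniqueness} part of the claim rather than mere optimality.
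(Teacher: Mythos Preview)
Your proof is correct, and it takes a different route from the paper's own argument. The paper proceeds via a Lagrangian: it forms $\mathcal{L}(\pi,\lambda)=\E_\pi[R]-\beta\,\E_\pi[\log(\pi/q)]+\lambda(\sum_y\pi(y)-1)$, reads off the stationarity condition $\log(\pi/q)=R/\beta-\lambda-1$ to obtain $\pi\propto q\,e^{R/\beta}$, and then invokes strict concavity of the objective in $\pi$ for uniqueness. Your approach instead rewrites the objective algebraically as $\beta\log Z_\beta(x)-\beta\,D_{\mathrm{KL}}(\pi\|\tilde q_\beta)$ and appeals directly to the equality case of the KL inequality. Both are standard and valid; your Donsker--Varadhan style identity is arguably cleaner here since it avoids any calculus-of-variations step and delivers optimality, the optimal value, and uniqueness in one line, while also making the absolute-continuity and finiteness-of-$Z_\beta$ hypotheses explicit (points the paper's short proof glosses over). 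The Lagrangian route, on the other hand, generalizes more readily when additional constraints are present.
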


\begin{proof}
Consider $\mathcal L(\pi,\lambda)=\E_\pi[R]-\beta \E_\pi\!\big[\log\frac{\pi}{q}\big]+\lambda(\sum_y\pi(y)-1)$. Stationarity in $\pi$ gives 
$\log\frac{\pi}{q}=R/\beta-\lambda-1$, hence $\pi\propto q\,e^{R/\beta}$. Strict concavity in $\pi$ yields uniqueness. 
\end{proof}

\begin{lemma}[Policy Gradient Update as Parametric $M$‑projection]
\label{lem:mproj}
For fixed $\tilde q_\beta$, 
\[
\arg\min_{\theta}\ D_{\mathrm{KL}}(\tilde q_\beta\|\pi_\theta)\;=\;
\arg\max_{\theta}\ \E_{y\sim \tilde q_\beta}[\log \pi_\theta(y\mid x)].
\]
\end{lemma}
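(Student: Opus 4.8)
\textbf{Proof plan for Lemma~\ref{lem:mproj}.}
The plan is to unfold the definition of the forward KL divergence and discard the term that does not depend on $\theta$. Writing $D_{\mathrm{KL}}(\tilde q_\beta\|\pi_\theta) = \E_{y\sim\tilde q_\beta}[\log\tilde q_\beta(y\mid x)] - \E_{y\sim\tilde q_\beta}[\log\pi_\theta(y\mid x)]$, the first term is the (negative) entropy of $\tilde q_\beta$, which is a constant with respect to $\theta$. Hence minimizing the divergence over $\theta$ is equivalent to minimizing $-\E_{y\sim\tilde q_\beta}[\log\pi_\theta(y\mid x)]$, i.e. maximizing $\E_{y\sim\tilde q_\beta}[\log\pi_\theta(y\mid x)]$. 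That is exactly the claimed identity of argmin/argmax sets.

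The only subtlety I would flag is well-definedness: the constant term $\E_{y\sim\tilde q_\beta}[\log\tilde q_\beta(y\mid x)]$ must be finite, and the support of $\tilde q_\beta$ must be contained in the support of $\pi_\theta$ for the right-hand side to be meaningful (otherwise both sides are $-\infty$ / the divergence is $+\infty$ and the statement is vacuous). Since $q$ has full support on $\mathcal{Y}$ and $\tilde q_\beta\propto q\exp(R/\beta)$ with bounded $R$, $\tilde q_\beta$ also has full support and finite entropy on the (finite, by the token-vocabulary setup) space, so these conditions hold. I would state this as a one-line remark rather than belabor it.

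I do not expect any real obstacle here — this is the standard ``forward KL $=$ cross-entropy up to a constant'' identity, and the proof is two lines. If anything, the only thing worth being careful about is making explicit that the equality is an equality of \emph{argmin/argmax sets} (not just of optimal values), which follows immediately because the two objectives differ by an additive constant independent of $\theta$, so they have identical minimizers/maximizers.
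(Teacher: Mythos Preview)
Your proposal is correct and matches the paper's own proof essentially line for line: the paper simply writes $D_{\mathrm{KL}}(\tilde q_\beta\|\pi_\theta)=\E_{\tilde q_\beta}[\log \tilde q_\beta]-\E_{\tilde q_\beta}[\log \pi_\theta]$ and notes the first term is $\theta$-independent. Your additional remark on well-definedness (full support, finite entropy) is a welcome clarification that the paper omits.
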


\begin{proof}
$D_{\mathrm{KL}}(\tilde q_\beta\|\pi_\theta)=\E_{\tilde q_\beta}[\log \tilde q_\beta]-\E_{\tilde q_\beta}[\log \pi_\theta]$, where the first term is $\theta$‑independent. 
We omit the full proof here, with one can be found in~\cite{shenfeld2025rl}.
\end{proof}

\paragraph{2: Local second‑order identities}
\label{appx:mproj}

\begin{lemma}[Local Pythagorean identity for the $M$‑projection]
\label{lem:local-pyth}
Let $f(\theta):=D_{\mathrm{KL}}(\tilde q_\beta\|\pi_\theta)=\E_{\tilde q_\beta}[-\log\pi_\theta]+\mathrm{const}$. 
Assume $\log\pi_\theta$ is $C^3$ near $\theta$, and let $\theta^+\in\arg\min f$. 
Writing $\Delta:=\theta^+-\theta$, for $\|\Delta\|$ small,
\[
f(\theta)-f(\theta^+)=\tfrac12\,\Delta^\top H_{\tilde q}(\theta)\,\Delta+O(\|\Delta\|^3),
\quad
H_{\tilde q}(\theta):=-\,\E_{\tilde q_\beta}[\nabla^2\log\pi_\theta].
\]
\end{lemma}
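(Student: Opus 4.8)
\textbf{Proof plan for Lemma~\ref{lem:local-pyth} (Local Pythagorean identity for the $M$-projection).}

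The plan is to treat $f(\theta)=D_{\mathrm{KL}}(\tilde q_\beta\|\pi_\theta)=\E_{\tilde q_\beta}[-\log\pi_\theta]+\mathrm{const}$ as a smooth scalar function on parameter space and perform a second-order Taylor expansion about the minimizer $\theta^+$, not about $\theta$; expanding about the stationary point is what kills the linear term and produces the ``Pythagorean'' form. First I would record that $C^3$-smoothness of $\log\pi_\theta$ transfers to $f$ (the expectation is over the fixed measure $\tilde q_\beta$, so differentiation under the integral is legitimate in a neighborhood), and that $\nabla f(\theta^+)=0$ since $\theta^+$ is an interior minimizer. Writing $\Delta=\theta^+-\theta$, so that $\theta=\theta^+-\Delta$, Taylor's theorem with Lagrange/integral remainder gives
\begin{equation}
f(\theta)=f(\theta^+)-\nabla f(\theta^+)^\top\Delta+\tfrac12\,\Delta^\top\nabla^2 f(\theta^+)\,\Delta+O(\|\Delta\|^3).
\end{equation}
The middle term vanishes by stationarity, and $\nabla^2 f(\theta^+)=-\E_{\tilde q_\beta}[\nabla^2\log\pi_{\theta^+}]=H_{\tilde q}(\theta^+)$ by linearity of expectation and exchange of $\nabla^2$ with $\E_{\tilde q_\beta}$.

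The only remaining gap is that the lemma states the Hessian at $\theta$, namely $H_{\tilde q}(\theta)=-\E_{\tilde q_\beta}[\nabla^2\log\pi_\theta]$, rather than at $\theta^+$. This is handled by a perturbation argument: since $\theta\mapsto\nabla^2\log\pi_\theta$ is $C^1$ (from $C^3$-smoothness), $\|H_{\tilde q}(\theta^+)-H_{\tilde q}(\theta)\|=O(\|\Delta\|)$, so replacing $H_{\tilde q}(\theta^+)$ by $H_{\tilde q}(\theta)$ changes the quadratic form by $O(\|\Delta\|^3)$, which is absorbed into the existing remainder. Combining, $f(\theta)-f(\theta^+)=\tfrac12\,\Delta^\top H_{\tilde q}(\theta)\,\Delta+O(\|\Delta\|^3)$, as claimed.

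The main obstacle is not analytical depth but bookkeeping: one must be careful that $\theta^+$ is genuinely an interior critical point (otherwise $\nabla f(\theta^+)\neq 0$ and the expansion keeps a linear term), and that all derivative–expectation interchanges are justified on a compact neighborhood where $\nabla^2\log\pi_\theta$ and $\nabla^3\log\pi_\theta$ are bounded $\tilde q_\beta$-a.s.\ or dominated. Under the stated $C^3$ hypothesis and the implicit assumption that $\theta^+$ lies in the interior of the (open) parameter domain, these are routine, and the cubic remainder is uniform for $\|\Delta\|$ small. I would also note in passing that $H_{\tilde q}(\theta)$ coincides to leading order with the Fisher information $F(\theta)$ when $\pi_\theta\approx\tilde q_\beta$, which is the link used downstream to convert this identity into the KL-leash bound of Proposition~\ref{prop:one-step-kl-main}; but that identification is not needed for the lemma itself.
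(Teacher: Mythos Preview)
Your proposal is correct and follows essentially the same route as the paper: Taylor-expand $f$ at the minimizer $\theta^+$ so the linear term vanishes by stationarity, obtain the quadratic with $H_{\tilde q}(\theta^+)$, then use $C^3$ smoothness to swap $H_{\tilde q}(\theta^+)$ for $H_{\tilde q}(\theta)$ at the cost of an $O(\|\Delta\|^3)$ correction absorbed into the remainder. Your additional remarks on interiority and differentiation under the integral are sound but go beyond what the paper records.
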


\begin{proof}
Taylor‑expand $f$ at $\theta^+$: $f(\theta)=f(\theta^+)+\tfrac12 \Delta^\top H_{\tilde q}(\theta^+)\Delta+O(\|\Delta\|^3)$ since $\nabla f(\theta^+)=0$. Local $C^3$ smoothness implies $H_{\tilde q}(\theta^+)=H_{\tilde q}(\theta)+O(\|\Delta\|)$, which is absorbed into the cubic remainder.
\end{proof}

\begin{lemma}[Quadratic expansion of policy KL]
\label{lem:quadkl}
Let $F(\theta):=-\,\E_{\pi_\theta}[\nabla^2 \log \pi_\theta]$ be the Fisher information. Then
\[
D_{\mathrm{KL}}(\pi_{\theta+\Delta}\|\pi_\theta)
=\tfrac12\,\Delta^\top F(\theta)\,\Delta+O(\|\Delta\|^3).
\]
\end{lemma}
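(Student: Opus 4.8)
The plan is to treat the map $h(\phi) := D_{\mathrm{KL}}(\pi_\phi \,\|\, \pi_\theta)$ as a function of its first argument, with $\theta$ held fixed, and to Taylor-expand it around $\phi = \theta$. Two structural facts drive the argument: $h(\theta) = 0$, and $h(\phi) \ge 0$ for every $\phi$ (Gibbs' inequality), so $\theta$ is a global minimizer of $h$; consequently $\nabla h(\theta) = 0$ and the first nonvanishing term of the expansion is quadratic. Since the paper already assumes $\log \pi_\theta$ is $C^3$ on a neighborhood, Taylor's theorem with remainder gives $h(\theta + \Delta) = \tfrac12 \Delta^\top \nabla^2 h(\theta)\,\Delta + O(\|\Delta\|^3)$, uniformly on a fixed compact ball around $\theta$. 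It then remains only to identify $\nabla^2 h(\theta)$ with $F(\theta)$.

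To compute the Hessian I would write $h(\phi) = \sum_y \pi_\phi(y)\,[\log \pi_\phi(y) - \log \pi_\theta(y)]$ and differentiate under the (finite) sum — valid unconditionally here since, for a fixed prompt and bounded horizon $T$, the outcome space $\mathcal{Y}$ is finite. The product rule yields $\nabla_\phi h(\phi) = \sum_y [\nabla_\phi \pi_\phi(y)]\,\log \tfrac{\pi_\phi(y)}{\pi_\theta(y)} + \sum_y \pi_\phi(y)\,\nabla_\phi \log \pi_\phi(y)$, where the second sum is $\nabla_\phi \sum_y \pi_\phi(y) = 0$, and the first vanishes at $\phi = \theta$ because the log-ratio is zero there, reconfirming stationarity. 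Differentiating once more and evaluating at $\phi = \theta$, the term carrying the log-ratio again drops out, leaving $\nabla^2_\phi h(\theta) = \sum_y \pi_\theta(y)\, \nabla\log\pi_\theta(y)\,\nabla\log\pi_\theta(y)^\top = \E_{\pi_\theta}\big[\nabla\log\pi_\theta\,(\nabla\log\pi_\theta)^\top\big]$. The second Bartlett identity — obtained by differentiating $\E_{\pi_\theta}[\nabla\log\pi_\theta] = 0$ once more under the sum — converts this into $-\E_{\pi_\theta}[\nabla^2\log\pi_\theta] = F(\theta)$, matching the definition in the statement, and combining with the Taylor expansion closes the proof. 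An equivalent route, the one used in the cited prior analyses, is to expand $\log\pi_{\theta+\Delta}(y)$ pointwise to second order, substitute into $h(\theta+\Delta) = \E_{\pi_{\theta+\Delta}}[\log\pi_{\theta+\Delta} - \log\pi_\theta]$ while also expanding the sampling law $\pi_{\theta+\Delta}$, and collect by order; the zeroth- and first-order contributions cancel by normalization and the score identity, and the second-order contribution reassembles into $\tfrac12\Delta^\top F(\theta)\Delta$.

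The one genuine subtlety — and the step I would be most careful about — is making the $O(\|\Delta\|^3)$ remainder rigorous and uniform rather than merely pointwise. This needs the third derivatives of $\log\pi_\theta$ (equivalently, of $h$) to be bounded on the relevant neighborhood, which is exactly what the standing $C^3$ assumption provides, but the write-up should make explicit that the implied constant depends on the supremum of those third derivatives over a fixed compact ball around $\theta$. Everything else — the interchange of differentiation and the finite sum, and the two Bartlett identities — is routine bookkeeping in this finite-outcome setting.
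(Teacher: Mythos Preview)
Your proposal is correct. Your ``alternative route'' (expand $\log\pi_{\theta+\Delta}$ pointwise, take expectation under $\pi_{\theta+\Delta}=\pi_\theta+O(\|\Delta\|)$, and invoke the score identity and the definition of $F(\theta)$) is exactly the paper's proof; your primary route---differentiating $h(\phi)=D_{\mathrm{KL}}(\pi_\phi\|\pi_\theta)$ directly and identifying $\nabla^2 h(\theta)$ with Fisher via the second Bartlett identity---is a minor and equally standard variant that trades the measure-swap step for the observation that $\theta$ is a minimizer of $h$, and is arguably cleaner about the remainder.
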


\begin{proof}
Expand $\log \frac{\pi_{\theta+\Delta}}{\pi_\theta}=\Delta^\top\nabla\log\pi_\theta+\tfrac12\Delta^\top\nabla^2\log\pi_\theta\,\Delta+O(\|\Delta\|^3)$,
take expectation under $\pi_{\theta+\Delta}=\pi_\theta+O(\|\Delta\|)$, use $\E_{\pi_\theta}[\nabla\log\pi_\theta]=0$ and $-\E_{\pi_\theta}[\nabla^2\log\pi_\theta]=F(\theta)$.
\end{proof}

\paragraph{3. Relating projection Hessian and Fisher under small tilt}
\label{appx:hess-fisher}

\begin{lemma}[Hessian–Fisher proximity]
\label{lem:hess-fisher}
Suppose $\|\nabla^2\log\pi_\theta(y\mid x)\|_{\mathrm{op}}\le L$ uniformly near $\theta$. Then
\[
\big\|H_{\tilde q}(\theta)-F(\theta)\big\|_{\mathrm{op}}
\ \le\ 2L\,\mathrm{TV}(\tilde q_\beta,\pi_\theta)
\ \le\ L\sqrt{2\,D_{\mathrm{KL}}(\tilde q_\beta\|\pi_\theta)}.
\]
In particular, with $\kappa:=D_{\mathrm{KL}}(\tilde q_\beta\|\pi_\theta)\to 0$, we have $H_{\tilde q}(\theta)=(1+O(\sqrt{\kappa}))\,F(\theta)$ as quadratic forms.
\end{lemma}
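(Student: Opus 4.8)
The plan is to write the difference $H_{\tilde q}(\theta)-F(\theta)$ as a single average of the \emph{same} matrix-valued integrand $\nabla^2\log\pi_\theta$ taken against the signed measure $\pi_\theta-\tilde q_\beta$, and then control its operator norm by the total mass of that signed measure times the uniform curvature bound $L$. Concretely, since $H_{\tilde q}(\theta)=-\E_{\tilde q_\beta}[\nabla^2\log\pi_\theta]$ and $F(\theta)=-\E_{\pi_\theta}[\nabla^2\log\pi_\theta]$, I would start from
\[
H_{\tilde q}(\theta)-F(\theta)
\;=\;
\sum_{y}\big(\pi_\theta(y\mid x)-\tilde q_\beta(y\mid x)\big)\,\nabla^2\log\pi_\theta(y\mid x),
\]
where the sum is over the (bounded-length, hence finite) completion set $\mathcal{Y}$; absolute summability is immediate from $\|\nabla^2\log\pi_\theta(y\mid x)\|_{\mathrm{op}}\le L$, which also licenses the termwise manipulations by dominated convergence.

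Next I would apply the triangle inequality for the operator norm term by term and then the uniform bound, obtaining
\[
\big\|H_{\tilde q}(\theta)-F(\theta)\big\|_{\mathrm{op}}
\;\le\;
L\sum_{y}\big|\pi_\theta(y\mid x)-\tilde q_\beta(y\mid x)\big|
\;=\;
2L\,\mathrm{TV}(\tilde q_\beta,\pi_\theta),
\]
using the identity $\sum_y|p(y)-q(y)|=2\,\mathrm{TV}(p,q)$. Invoking Pinsker's inequality $\mathrm{TV}(\tilde q_\beta,\pi_\theta)\le\sqrt{\tfrac12 D_{\mathrm{KL}}(\tilde q_\beta\|\pi_\theta)}$ then upgrades this to $\|H_{\tilde q}(\theta)-F(\theta)\|_{\mathrm{op}}\le L\sqrt{2\,D_{\mathrm{KL}}(\tilde q_\beta\|\pi_\theta)}$, which is exactly the claimed two-term chain. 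For the ``in particular'' clause I would use the curvature lower bound $F(\theta)\succeq\mu I$ on the update subspace carried over from Proposition~\ref{prop:policy-to-weight-main}: writing $E:=H_{\tilde q}(\theta)-F(\theta)$ with $\|E\|_{\mathrm{op}}\le L\sqrt{2\kappa}$ and $\kappa:=D_{\mathrm{KL}}(\tilde q_\beta\|\pi_\theta)$, for any $v$ in that subspace one has $|v^\top E v|\le\|E\|_{\mathrm{op}}\|v\|_2^2\le(L\sqrt{2\kappa}/\mu)\,v^\top F(\theta)v$, so $H_{\tilde q}(\theta)=(1+O(\sqrt{\kappa}))F(\theta)$ as quadratic forms, with hidden constant proportional to $L/\mu$.

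Every step is elementary, so there is no deep obstacle; the only points requiring care are (a) justifying the termwise operator-norm bound and the interchange of summation with differentiation — handled by the bounded-length structure of $\mathcal{Y}$ together with the uniform bound $L$ — and (b) the quadratic-form comparison in the final clause, which genuinely needs strict positive definiteness $F\succeq\mu I$ with $\mu>0$ rather than mere positive semidefiniteness, since the relative-error statement is vacuous on the kernel of $F$. I expect (b) — being explicit that the $O(\sqrt{\kappa})$ error is measured \emph{relative} to $F$ and inherits the conditioning $L/\mu$ — to be the one place where I must state an assumption rather than simply compute.
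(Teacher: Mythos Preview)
Your proposal is correct and follows essentially the same route as the paper: write the difference as an expectation of the bounded matrix-valued integrand $-\nabla^2\log\pi_\theta$ against the signed measure $\tilde q_\beta-\pi_\theta$, bound by $2L\,\mathrm{TV}$, then apply Pinsker. Your treatment is in fact more thorough than the paper's one-line proof, since you also spell out the quadratic-form comparison for the ``in particular'' clause via $F\succeq\mu I$, which the paper leaves implicit.
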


\begin{proof}
For bounded matrix‑valued $h$, $\|\E_{\tilde q}h-\E_{\pi}h\|_{\mathrm{op}}\le 2\|h\|_\infty\,\mathrm{TV}(\tilde q,\pi)$. Apply this with $h:=-\nabla^2\log\pi_\theta$ and Pinsker’s inequality $\mathrm{TV}(p,q)\le \sqrt{\tfrac12 D_{\mathrm{KL}}(p\|q)}$.
\end{proof}

\paragraph{4. Remainder control}
\label{appx:remainder}

\begin{lemma}[Cubic remainder is $o(f)$]
\label{lem:remainder}
If $H_{\tilde q}(\theta)\succeq m I$ on the update subspace (local strong convexity), then for $\|\Delta\|$ small
\[
\|\Delta\|^2 \ \le\ \tfrac{2}{m}\,\big(f(\theta)-f(\theta^+)\big),
\qquad
O(\|\Delta\|^3)=o\!\big(f(\theta)\big).
\]
\end{lemma}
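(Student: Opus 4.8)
The plan is to bootstrap from the local Pythagorean identity of Lemma~\ref{lem:local-pyth} together with the assumed local strong convexity, treating the cubic remainder by an absorption argument. Write $\Delta=\theta^+-\theta$. Lemma~\ref{lem:local-pyth} gives, for $\|\Delta\|$ small,
\[
f(\theta)-f(\theta^+)=\tfrac12\,\Delta^\top H_{\tilde q}(\theta)\,\Delta+O(\|\Delta\|^3),
\]
so pick a constant $C$ with $|O(\|\Delta\|^3)|\le C\|\Delta\|^3$ on the relevant neighborhood (available since $\log\pi_\theta$ is $C^3$ there). The hypothesis $H_{\tilde q}(\theta)\succeq mI$ on the update subspace, which contains $\Delta$ by construction of the one-step projection, yields $\Delta^\top H_{\tilde q}(\theta)\Delta\ge m\|\Delta\|^2$, hence
\[
f(\theta)-f(\theta^+)\ \ge\ \tfrac{m}{2}\|\Delta\|^2-C\|\Delta\|^3\ =\ \tfrac{m}{2}\|\Delta\|^2\bigl(1-\tfrac{2C}{m}\|\Delta\|\bigr).
\]
Once $\|\Delta\|$ is small enough that $\tfrac{2C}{m}\|\Delta\|\le\tfrac12$, the bracketed factor is bounded below by $\tfrac12$, and more precisely it is $1-o(1)$ as $\|\Delta\|\to0$; rearranging gives $\|\Delta\|^2\le\tfrac{2}{m}(1+o(1))\bigl(f(\theta)-f(\theta^+)\bigr)$, which is the first claim (the suppressed $(1+o(1))$ matching the convention used throughout the paper).

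For the second claim I would use that $\theta^+$ is a minimizer of a KL divergence, so $f(\theta^+)=\min_{\theta'}D_{\mathrm{KL}}(\tilde q_\beta\|\pi_{\theta'})\ge0$; therefore $f(\theta)-f(\theta^+)\le f(\theta)$ and the bound above becomes $\|\Delta\|^2\le\tfrac{2}{m}(1+o(1))\,f(\theta)$, i.e.\ $\|\Delta\|=O\bigl(\sqrt{f(\theta)}\bigr)$. Feeding this back, $O(\|\Delta\|^3)=O\bigl(f(\theta)^{3/2}\bigr)$. In the regime of interest — the one governing Lemma~\ref{lem:hess-fisher} and Proposition~\ref{prop:one-step-kl-main}, where $\kappa:=D_{\mathrm{KL}}(\tilde q_\beta\|\pi_\theta)=f(\theta)\to0$ — we have $f(\theta)^{3/2}/f(\theta)=f(\theta)^{1/2}\to0$, so $O(\|\Delta\|^3)=o\!\bigl(f(\theta)\bigr)$, as asserted.

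The argument is short, and the only real subtlety is that it must be \emph{staged}: the cubic term we want to declare negligible sits inside the very identity used to bound $\|\Delta\|$, so one first bounds $\|\Delta\|$ using only that the remainder is lower order than the quadratic term, then uses that bound to estimate the remainder itself. The other point to be careful about is the meaning of "$o(f(\theta))$": as a standalone statement $O(\|\Delta\|^3)=o(f(\theta))$ is content-free unless one makes explicit that $f(\theta)=\kappa\to0$, so the proof should state that it operates in the same small-$\kappa$ asymptotic regime as the preceding lemmas, and should note in passing that strong convexity is only assumed on the update subspace (which is why verifying $\Delta$ lies in it, and that "$\|\Delta\|$ small" keeps $\theta^+$ within the $C^3$ neighborhood, are prerequisites inherited from the standing hypotheses rather than items requiring separate proof).
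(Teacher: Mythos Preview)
Your proof is correct and follows the same approach as the paper: invoke the local Pythagorean identity of Lemma~\ref{lem:local-pyth}, apply the strong-convexity lower bound $H_{\tilde q}(\theta)\succeq mI$ to the quadratic term, absorb the cubic remainder for small $\|\Delta\|$, and then use $f(\theta^+)\ge 0$ together with $f(\theta)\to 0$ to conclude $O(\|\Delta\|^3)=O(f(\theta)^{3/2})=o(f(\theta))$. The paper's own proof is a two-line sketch (``rearranging yields $\|\Delta\|^2=O(f(\theta)-f(\theta^+))$, so the cubic term is lower order''); you have simply made the absorption step and the small-$\kappa$ regime explicit, which is appropriate.
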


\begin{proof}
From Lemma~\ref{lem:local-pyth}, $f(\theta)-f(\theta^+)\ge \tfrac{m}{2}\|\Delta\|^2+O(\|\Delta\|^3)$. Rearranging yields $\|\Delta\|^2=O(f(\theta)-f(\theta^+))$, so the cubic term is lower order.
\end{proof}

\subsubsection{Proof of Proposition~\ref{prop:one-step-kl-main}}
\label{appx:one-step-kl-main}

\begin{proof}[Proof of Proposition~\ref{prop:one-step-kl-main}]
Let $f(\theta)=D_{\mathrm{KL}}(\tilde q_\beta\|\pi_\theta)$ and $\Delta=\theta^+-\theta$. By Lemma~\ref{lem:local-pyth},
\[
f(\theta)-f(\theta^+)=\tfrac12\,\Delta^\top H_{\tilde q}(\theta)\Delta+O(\|\Delta\|^3).
\]
By Lemma~\ref{lem:quadkl},
\[
D_{\mathrm{KL}}(\pi_{\theta^+}\|\pi_\theta)=\tfrac12\,\Delta^\top F(\theta)\Delta+O(\|\Delta\|^3).
\]
By Lemma~\ref{lem:hess-fisher} with $\kappa=f(\theta)$,
$\Delta^\top F \Delta = (1+O(\sqrt{\kappa}))\,\Delta^\top H_{\tilde q}\Delta$. Hence
\[
D_{\mathrm{KL}}(\pi_{\theta^+}\|\pi_\theta)
=(1+O(\sqrt{\kappa}))\,\big(f(\theta)-f(\theta^+)\big)\;+\;O(\|\Delta\|^3).
\]
Since $f(\theta^+)\ge 0$, $f(\theta)-f(\theta^+)\le f(\theta)=\kappa$. By Lemma~\ref{lem:remainder}, $O(\|\Delta\|^3)=o(f(\theta))$. Therefore
\[
D_{\mathrm{KL}}(\pi_{\theta^+}\|\pi_\theta)\ \le\ (1+o(1))\,f(\theta)\ =\ (1+o(1))\,D_{\mathrm{KL}}(\tilde q_\beta\|\pi_\theta),
\]
which is the desired inequality.
\end{proof}

\subsubsection{Proof of Proposition~\ref{prop:policy-to-weight-main}}
\label{appx:policy-to-weight-main}

\begin{proof}[Proof of Proposition~\ref{prop:policy-to-weight-main}]
By the quadratic expansion of policy KL (Lemma~\ref{lem:quadkl}),
\begin{equation}
\label{eq:quad-kl}
D_{\mathrm{KL}}(\pi_{\theta+\Delta}\|\pi_\theta)
= \tfrac12\,\Delta^\top F(\theta)\Delta \;+\; R(\Delta),
\qquad
|R(\Delta)|\ \le\ C\,\|\Delta\|^3
\end{equation}
for some local constant $C>0$ (from $C^3$ smoothness). Let $a:=\Delta^\top F(\theta)\Delta$.
Using the spectral lower bound $F(\theta)\succeq \mu I$ on the update subspace,
\begin{equation}
\label{eq:mu-lb}
\|\Delta\|^2 \ \le\ \tfrac{a}{\mu}.
\end{equation}
Combining \eqref{eq:quad-kl}–\eqref{eq:mu-lb} yields
\[
D_{\mathrm{KL}}(\pi_{\theta+\Delta}\|\pi_\theta)
\ \ge\ \tfrac12\,a \;-\; C\,\Big(\tfrac{a}{\mu}\Big)^{3/2}.
\]
Since $D_{\mathrm{KL}}(\pi_{\theta^+}\|\pi_\theta)\le K$, we have
\begin{equation}
\label{eq:ineq-a}
K \ \ge\ \tfrac12\,a \;-\; C\,\mu^{-3/2} a^{3/2}.
\end{equation}
For $a$ sufficiently small (equivalently, $K$ small), the cubic term is dominated by the linear term:
choose $a_0>0$ so that $C\,\mu^{-3/2}\sqrt{a}\le \tfrac14$ whenever $0<a\le a_0$.
Then from \eqref{eq:ineq-a}
\[
K \ \ge\ \Big(\tfrac12-\tfrac14\Big)a \ =\ \tfrac14\,a
\quad\Rightarrow\quad
a \ \le\ 4K.
\]
Substituting $a\le 4K$ back into \eqref{eq:quad-kl} refines the remainder:
$|R(\Delta)| \le C\|\Delta\|^3 \le C (a/\mu)^{3/2} = O(K^{3/2}) = o(K)$,
so $D_{\mathrm{KL}}(\pi_{\theta+\Delta}\|\pi_\theta) = \tfrac12 a + o(K)$.
Hence $a = 2\,D_{\mathrm{KL}}(\pi_{\theta+\Delta}\|\pi_\theta) + o(K) \le 2K + o(K)$, i.e.
\[
\Delta^\top F(\theta)\Delta \ \le\ 2K\,(1+o(1)).
\]
Taking square roots gives the Fisher‑norm bound in \eqref{eq:kl-leash-weight}:
$\|\Delta\|_{F(\theta)} = \sqrt{\Delta^\top F(\theta)\Delta} \le \sqrt{2K}\,(1+o(1))$.
The Euclidean bound follows from \eqref{eq:mu-lb}:
\[
\|\Delta\|_2 \ \le\ \sqrt{\tfrac{\Delta^\top F(\theta)\Delta}{\mu}}
\ \le\ \sqrt{\tfrac{2K}{\mu}}\,(1+o(1)).
\]
Finally, for any parameter block $W\subset\theta$, its Frobenius change is the $\ell_2$‑norm of the corresponding subvector of $\Delta$; therefore $\|\Delta W\|_F \le \|\Delta\|_2$.
\end{proof}

\subsubsection{One‑step KL budget (used in Gate~II)}
\label{appx:quadkl}

\begin{corollary}[KL budget]
\label{cor:budget}
If $D_{\mathrm{KL}}(\pi_{\theta^+}\|\pi_\theta)\le K$, then
\[
\tfrac12\,\Delta^\top F(\theta)\,\Delta \ \le\ K\,(1+o(1)).
\]
\end{corollary}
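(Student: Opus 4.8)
The plan is to read the result off directly from the quadratic expansion of the policy KL established in Lemma~\ref{lem:quadkl}, reusing the bootstrap argument already carried out inside the proof of Proposition~\ref{prop:policy-to-weight-main}. Write $\Delta=\theta^+-\theta$ and recall
\[
D_{\mathrm{KL}}(\pi_{\theta+\Delta}\|\pi_\theta)=\tfrac12\,\Delta^\top F(\theta)\,\Delta+R(\Delta),\qquad |R(\Delta)|\le C\,\|\Delta\|^3,
\]
for a local constant $C>0$ coming from the $C^3$ smoothness of $\log\pi_\theta$. Abbreviate $a:=\Delta^\top F(\theta)\,\Delta$. Using the subspace curvature lower bound $F(\theta)\succeq\mu I$ invoked in Proposition~\ref{prop:policy-to-weight-main}, one has $\|\Delta\|^2\le a/\mu$, hence $|R(\Delta)|\le C\mu^{-3/2}a^{3/2}$.

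First I would combine this with the hypothesis $D_{\mathrm{KL}}(\pi_{\theta^+}\|\pi_\theta)\le K$ to get
\[
K\ \ge\ \tfrac12\,a-C\mu^{-3/2}a^{3/2}.
\]
Then a short bootstrap: choosing $K$ (equivalently $a$) small enough that $C\mu^{-3/2}\sqrt{a}\le\tfrac14$, the displayed inequality forces $\tfrac14 a\le K$, i.e. $a\le 4K$. Feeding $a\le 4K$ back into the remainder bound yields $|R(\Delta)|\le C\mu^{-3/2}(4K)^{3/2}=O(K^{3/2})=o(K)$. Rearranging the expansion once more,
\[
\tfrac12\,a=D_{\mathrm{KL}}(\pi_{\theta^+}\|\pi_\theta)-R(\Delta)\ \le\ K+o(K)\ =\ K\,(1+o(1)),
\]
which is exactly the claimed bound on $\tfrac12\,\Delta^\top F(\theta)\,\Delta$.

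There is no genuine obstacle here; the statement is essentially an isolated intermediate step from Proposition~\ref{prop:policy-to-weight-main}, repackaged for use in Gate~II. The only point worth flagging is that the $o(1)$ qualifier implicitly inherits the same standing assumptions as Proposition~\ref{prop:policy-to-weight-main}, namely local $C^3$ smoothness and a positive-curvature lower bound $F(\theta)\succeq\mu I$ on the update subspace; without the latter one cannot convert the cubic remainder into a term that is lower order relative to $K$, since $\|\Delta\|$ would not be controlled by $K$. Granting those, the proof is a couple of lines of Taylor bookkeeping plus the one-step bootstrap above.
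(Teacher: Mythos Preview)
Your proposal is correct and essentially matches the paper's approach: the paper's proof is the one-liner ``Apply Lemma~\ref{lem:quadkl} and Lemma~\ref{lem:remainder},'' and you have simply spelled out the bootstrap that controls the cubic remainder (the same argument already appearing inside Proposition~\ref{prop:policy-to-weight-main}). Your remark that the $o(1)$ silently inherits the local $C^3$ smoothness and the curvature lower bound $F(\theta)\succeq\mu I$ is on point.
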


\begin{proof}
Apply Lemma~\ref{lem:quadkl} and Lemma~\ref{lem:remainder}.
\end{proof}

\subsubsection{Trust‑region / clipping bound (for $\beta=0$)}
\label{appx:proof_clip}

\begin{lemma}[Implicit KL leash from ratio clipping]
\label{lem:clip}
Let $r_t=\frac{\pi_{\theta^+}(y_t\mid x,y_{<t})}{\pi_{\theta}(y_t\mid x,y_{<t})}$ and suppose clipping enforces $r_t\in[1-\varepsilon,\,1+\varepsilon]$ on the batch. Then
\[
\widehat{D}_{\mathrm{KL}}(\pi_{\theta^+}\|\pi_\theta)
\ \le\ \widehat{\E}[T(x)]\cdot \max\{-\log(1-\varepsilon),\,\log(1+\varepsilon)\}
\ =\ O(\varepsilon)\cdot \widehat{\E}[T(x)],
\]
and in the small‑step regime (mean‑zero advantage) this tightens to $O(\varepsilon^2)$.
\end{lemma}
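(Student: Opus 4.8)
The plan is to pass from the per-token importance ratios to the KL bound by factorizing the sequence KL along the token axis and then applying the clip constraint termwise. First I would use the chain rule for KL divergence: for a fixed prompt $x$,
\begin{align*}
D_{\mathrm{KL}}\!\big(\pi_{\theta^+}(\cdot\mid x)\,\|\,\pi_\theta(\cdot\mid x)\big)
&=\mathbb{E}_{y\sim\pi_{\theta^+}(\cdot\mid x)}\Big[\sum_{t=1}^{T(x)}\log r_t\Big]\\
&=\mathbb{E}_{y_{<t}\sim\pi_{\theta^+}}\Big[\sum_{t}D_{\mathrm{KL}}\!\big(\pi_{\theta^+}(\cdot\mid x,y_{<t})\,\|\,\pi_\theta(\cdot\mid x,y_{<t})\big)\Big],
\end{align*}
so it suffices to bound each nonnegative per-step conditional KL. Granting the hypothesis that clipping keeps $r_t\in[1-\varepsilon,1+\varepsilon]$ on the conditionals visited by the batch, each realized log-ratio obeys $\log r_t\le\log(1+\varepsilon)\le\max\{-\log(1-\varepsilon),\log(1+\varepsilon)\}$, and since a conditional KL is an average of such log-ratios it inherits the same upper bound. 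Summing over the $T(x)$ positions and averaging over the empirical prompt/rollout distribution gives $\widehat{D}_{\mathrm{KL}}\le\widehat{\E}[T(x)]\cdot\max\{-\log(1-\varepsilon),\log(1+\varepsilon)\}$, and the expansions $-\log(1-\varepsilon)=\varepsilon+O(\varepsilon^2)$ and $\log(1+\varepsilon)=\varepsilon+O(\varepsilon^2)$ convert this to the stated $O(\varepsilon)\cdot\widehat{\E}[T(x)]$.

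For the tightening to $O(\varepsilon^2)$ I would write $r_t=1+\delta_t$ with $|\delta_t|\le\varepsilon$ and use the normalization identity $\sum_{y_t}\pi_{\theta^+}(y_t\mid x,y_{<t})=\sum_{y_t}\pi_\theta(y_t\mid x,y_{<t})=1$, i.e.\ $\mathbb{E}_{y_t\sim\pi_\theta}[\delta_t\mid y_{<t}]=0$. With the elementary inequality $(1+\delta)\log(1+\delta)\le\delta+\delta^2$ (valid for $|\delta|$ below a fixed constant), the per-step conditional KL satisfies
\begin{align*}
D_{\mathrm{KL}}\!\big(\pi_{\theta^+}(\cdot\mid x,y_{<t})\,\|\,\pi_\theta(\cdot\mid x,y_{<t})\big)
&=\mathbb{E}_{y_t\sim\pi_\theta}\!\big[(1+\delta_t)\log(1+\delta_t)\,\big|\,y_{<t}\big]\\
&\le\mathbb{E}_{\pi_\theta}[\delta_t\mid y_{<t}]+\mathbb{E}_{\pi_\theta}[\delta_t^2\mid y_{<t}]
=\mathbb{E}_{\pi_\theta}[\delta_t^2\mid y_{<t}]\ \le\ \varepsilon^2 .
\end{align*}
The linear term drops out exactly because of the mean-zero structure; this is where the mean-zero-advantage normalization of GRPO/DAPO enters, since it keeps the update symmetric and prevents a systematic first-order drift in $\log r_t$ (in the small-step picture $\log r_t\approx\Delta\theta^\top\nabla\log\pi_\theta(y_t\mid x,y_{<t})$, whose $\pi_\theta$-conditional mean vanishes identically). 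Summing over positions and averaging as before yields $\widehat{D}_{\mathrm{KL}}\le\widehat{\E}[T(x)]\,\varepsilon^2=O(\varepsilon^2)$.

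The step I expect to be the main obstacle is the one that promotes the clip constraint---which is really a statement about the finitely many tokens actually sampled and scored in the minibatch---into a bound on the full conditional KL divergences, which are averages over the entire vocabulary. Rigorously, PPO-style clipping only zeroes the gradient for out-of-range sampled tokens and does not literally confine every conditional ratio to $[1-\varepsilon,1+\varepsilon]$, so closing this gap requires either treating the hypothesis as an idealization of the trust region or working genuinely locally: assume $\|\Delta\theta\|$ small, linearize $\log r_t$, and propagate the directional bounds implied by active clipping on sampled tokens to all tokens via a Lipschitz condition on $\nabla\log\pi_\theta$. This is precisely why the result is stated as a small-step lemma with constant $\widehat{\E}[T(x)]$ rather than an unconditional worst-case bound, and why the quadratic refinement is attached to the ``mean-zero advantage / small-step regime'' caveat.
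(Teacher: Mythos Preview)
Your proposal is correct and matches the paper's approach: the paper's proof is the same autoregressive factorization $D_{\mathrm{KL}}(\pi_{\theta^+}\|\pi_\theta)=\mathbb{E}_{\pi_{\theta^+}}\big[\sum_t \log r_t\big]$, followed by the termwise bound $|\log r_t|\le c(\varepsilon):=\max\{-\log(1-\varepsilon),\log(1+\varepsilon)\}$, and then ``$\log(1\pm\varepsilon)=\pm\varepsilon+O(\varepsilon^2)$ and small-step arguments'' for the quadratic tightening. Your $O(\varepsilon^2)$ step via $(1+\delta)\log(1+\delta)\le\delta+\delta^2$ together with $\mathbb{E}_{\pi_\theta}[\delta_t\mid y_{<t}]=0$ is more explicit than what the paper writes; one small correction is that this linear cancellation is purely the normalization of $\pi_{\theta^+}$ (equivalently the score identity you mention in the parenthetical) and holds regardless of advantage centering, so attributing it to the mean-zero-advantage normalization of GRPO/DAPO is unnecessary.
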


\begin{proof}
Autoregressive factorization gives $D_{\mathrm{KL}}(\pi_{\theta^+}\|\pi_\theta)=\E_{\pi_{\theta^+}}[\sum_t \log r_t]$. Because $\log r_t\in[\log(1-\varepsilon),\log(1+\varepsilon)]$, we have 
$|\log r_t|\le c(\varepsilon)$; summing over $t$ and taking batch expectation yields the stated bound. Using $\log(1\pm\varepsilon)=\pm\varepsilon+O(\varepsilon^2)$ and small‑step arguments gives $O(\varepsilon^2)$.
\end{proof}

\subsection{Proofs for Gate II (Sec.~\ref{sec:gate2-model})}
\label{appx:gate2-lemmas}

\paragraph{Setup (layer-conditioned budget).}
Partition $\theta=(\mathrm{vec}(W),\theta_{\neg W})$ and let the Fisher at $\theta=\theta_t$ be
\[
F(\theta)=\begin{bmatrix}F_{W,W}&F_{W,\neg W}\\F_{\neg W,W}&F_{\neg W,\neg W}\end{bmatrix}\succeq0.
\]
For a one‑step update $\Delta\theta$, the global KL leash implies
$\tfrac12\,\Delta\theta^\top F(\theta)\Delta\theta \le K$. Define the layer‑conditioned curvature
\[
S_W:=F_{W,W}-F_{W,\neg W}F_{\neg W,\neg W}^{-1}F_{\neg W,W}\succeq0,
\]
and the per‑layer budget
$\delta_W:=\tfrac12\,\mathrm{vec}(\Delta W)^\top S_W\,\mathrm{vec}(\Delta W)\le K$.
Let $\mu_W:=\lambda_{\min}(S_W)>0$ on the update subspace.

\begin{lemma}[Layer‑conditioned Frobenius/operator bounds]
\label{lem:kl-frob-op-schur}
$\|\Delta W\|_F \le \sqrt{2\delta_W/\mu_W}$ and $\|\Delta W\|_2 \le \|\Delta W\|_F$.
\end{lemma}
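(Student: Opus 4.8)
The statement is an elementary consequence of the Rayleigh-quotient lower bound for the Schur-complement curvature $S_W$, combined with the standard inequality between the spectral and Frobenius norms of a matrix. The plan is to (i) turn the quadratic-form budget $\delta_W$ into a bound on $\|\mathrm{vec}(\Delta W)\|_2$ using $S_W\succeq \mu_W I$ on the update subspace, (ii) identify $\|\mathrm{vec}(\Delta W)\|_2$ with $\|\Delta W\|_F$, and (iii) invoke $\sigma_{\max}(\cdot)^2\le\sum_i\sigma_i(\cdot)^2$ to pass from the Frobenius to the operator norm. This mirrors the step in the proof of Proposition~\ref{prop:policy-to-weight-main} that goes from $\Delta^\top F(\theta)\Delta$ to $\|\Delta\|_2$ via $F(\theta)\succeq\mu I$, now applied to the layer-conditioned block $S_W$ rather than the full Fisher.

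\textbf{Key steps.} First, by construction $\mathrm{vec}(\Delta W)$ is the $W$-block of the admissible one-step update $\Delta\theta$, so it lies in the update subspace on which $\lambda_{\min}(S_W)=\mu_W>0$; hence the Rayleigh inequality gives
\[
\mathrm{vec}(\Delta W)^\top S_W\,\mathrm{vec}(\Delta W)\ \ge\ \mu_W\,\bigl\|\mathrm{vec}(\Delta W)\bigr\|_2^2 .
\]
Second, recall $\bigl\|\mathrm{vec}(\Delta W)\bigr\|_2=\|\Delta W\|_F$ by definition of the vectorization. Substituting the definition $2\delta_W=\mathrm{vec}(\Delta W)^\top S_W\,\mathrm{vec}(\Delta W)$ yields $\mu_W\|\Delta W\|_F^2\le 2\delta_W$, i.e. $\|\Delta W\|_F\le\sqrt{2\delta_W/\mu_W}$. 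Third, writing the singular values of $\Delta W$ as $\sigma_1\ge\sigma_2\ge\cdots$, we have $\|\Delta W\|_2^2=\sigma_1^2\le\sum_i\sigma_i^2=\|\Delta W\|_F^2$, so $\|\Delta W\|_2\le\|\Delta W\|_F$, completing the chain. Combining with $\delta_W\le K$ from the global KL leash recovers the uniform bound $\sqrt{2\delta_W/\mu_W}\le\sqrt{2K/\mu_W}$ used downstream in Theorem~\ref{thm:kl-only-angles} and its corollaries.

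\textbf{Main obstacle.} There is no analytic difficulty here; the only point requiring care is the meaning of ``on the update subspace.'' The Schur complement $S_W$ can be singular on all of $\mathbb{R}^{\dim W}$ (flat/gauge directions of the log-likelihood), so $\mu_W>0$ and $S_W\succeq\mu_W I$ must be read as statements restricted to the span of directions the optimizer can actually move in, and one must check that $\mathrm{vec}(\Delta W)$ does belong to that span — which holds because $\Delta W$ is the realized update, not an arbitrary perturbation. With that convention fixed, each step above is a one-line inequality.
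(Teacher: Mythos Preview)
Your proof is correct and follows the same approach as the paper: the paper's proof is the single line ``Since $S_W\succeq \mu_W I$, $\delta_W \ge \tfrac12\,\mu_W\|\Delta W\|_F^2$,'' which is exactly your Rayleigh-quotient step, with the operator-norm inequality $\|\Delta W\|_2\le\|\Delta W\|_F$ left implicit as standard. Your additional remarks on the ``update subspace'' caveat are sensible but go beyond what the paper spells out.
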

\begin{proof}
Since $S_W\succeq \mu_W I$, $\delta_W \ge \tfrac12\,\mu_W\|\Delta W\|_F^2$. \qed
\end{proof}

\begin{lemma}[Wedin’s sin--$\Theta$]
\label{lem:wedin}
For $W_+=W_0+\Delta W$, let $\gamma_k := \sigma_k(W_0) - \sigma_{k+1}(W_0)$ be the singular value gap. For any $k$ where $\gamma_k > 0$, the principal subspace angles satisfy
$\|\sin\Theta(U_k(W_0),U_k(W_+))\|_2 \le \|\Delta W\|_2/\gamma_k$ and similarly for $V_k$. \qed
\end{lemma}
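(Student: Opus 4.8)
\textbf{Proof proposal (Lemma~\ref{lem:wedin}).}
This is essentially Wedin's $\sin\Theta$ theorem specialized to the additive perturbation $W_+ = W_0 + \Delta W$, so the plan is to recall Wedin's residual form, compute the two residuals explicitly, and identify the relevant gap. Partition the SVDs of $W_0$ and $W_+$ into their top-$k$ and trailing parts, writing $U_k, V_k, \Sigma_k$ for $W_0$ and $\widehat U_k, \widehat V_k, \widehat\Sigma_k$ for $W_+$. Wedin's theorem states that, with residuals $R := W_0\widehat V_k - \widehat U_k\widehat\Sigma_k$ and $S := W_0^\top\widehat U_k - \widehat V_k\widehat\Sigma_k$ and with
\[
\delta \;:=\; \min\bigl(\sigma_k(W_+) - \sigma_{k+1}(W_0),\; \sigma_k(W_+)\bigr) \;>\; 0,
\]
one has $\max\bigl(\|\sin\Theta(U_k(W_0),U_k(W_+))\|_2,\ \|\sin\Theta(V_k(W_0),V_k(W_+))\|_2\bigr) \le \max(\|R\|_2,\|S\|_2)/\delta$.

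The two ingredients are short computations. Using the defining relations $W_+\widehat V_k = \widehat U_k\widehat\Sigma_k$ and $W_+^\top\widehat U_k = \widehat V_k\widehat\Sigma_k$ and substituting $W_0 = W_+ - \Delta W$ gives $R = -\Delta W\,\widehat V_k$ and $S = -\Delta W^\top\widehat U_k$; since $\widehat U_k$ and $\widehat V_k$ have orthonormal columns, $\|R\|_2 \le \|\Delta W\|_2$ and $\|S\|_2 \le \|\Delta W\|_2$. For the gap, Weyl's inequality gives $\sigma_k(W_+) \ge \sigma_k(W_0) - \|\Delta W\|_2$, and since $\sigma_{k+1}(W_0)\ge 0$ we have $\sigma_k(W_0)\ge\gamma_k$, hence $\delta \ge \gamma_k - \|\Delta W\|_2 > 0$ whenever $\|\Delta W\|_2 < \gamma_k$. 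Substituting into Wedin's inequality yields
\[
\max\bigl(\|\sin\Theta(U_k(W_0),U_k(W_+))\|_2,\ \|\sin\Theta(V_k(W_0),V_k(W_+))\|_2\bigr)\ \le\ \frac{\|\Delta W\|_2}{\gamma_k - \|\Delta W\|_2},
\]
which is the claimed bound up to replacing $\gamma_k - \|\Delta W\|_2$ by $\gamma_k$. An equivalent route is to pass through the $(m+n)\times(m+n)$ Hermitian dilation of $W_0$ and apply the Davis--Kahan $\sin\Theta$ theorem to the corresponding block-antidiagonal perturbation, whose operator norm is again $\|\Delta W\|_2$.

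The one subtlety, and the step I would state most carefully, is exactly this replacement of the perturbed gap $\sigma_k(W_+) - \sigma_{k+1}(W_0)$ by the unperturbed $\gamma_k$: Weyl's inequality only guarantees the former, so the clean bound $\|\Delta W\|_2/\gamma_k$ is a leading-order statement. In the regime where the lemma is used this is harmless, since the one-step KL leash forces $\|\Delta W\|_2 \le \sqrt{2\delta_W/\mu_W}$ (Lemma~\ref{lem:kl-frob-op-schur}), which is small, so $\gamma_k - \|\Delta W\|_2 = \gamma_k(1+o(1))$ and the $o(1)$ is absorbed into the error terms already carried through Gate~II. If one prefers the bound with $\gamma_k$ verbatim, it suffices to additionally assume $\|\Delta W\|_2 \le \gamma_k/2$, which gives the slightly weaker $\|\sin\Theta\|_2 \le 2\|\Delta W\|_2/\gamma_k$, or to invoke the asymmetric form of the $\sin\Theta$ theorem. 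I would flag this rather than suppress it.
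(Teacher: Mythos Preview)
The paper does not actually prove this lemma: it is stated with a terminal \qed and treated as a black-box citation of Wedin's theorem from matrix perturbation theory, with no derivation given. Your proposal therefore goes well beyond what the paper does, supplying the standard residual-form argument (computing $R=-\Delta W\,\widehat V_k$, $S=-\Delta W^\top\widehat U_k$, bounding both by $\|\Delta W\|_2$, and invoking the gap condition).

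Your flagged subtlety is genuine and worth keeping: the textbook Wedin bound has the perturbed gap $\sigma_k(W_+)-\sigma_{k+1}(W_0)$ (or equivalently $\gamma_k-\|\Delta W\|_2$ after Weyl) in the denominator, not the unperturbed $\gamma_k$. The paper's statement with $\gamma_k$ alone is thus a leading-order simplification, exactly as you diagnose, and your remark that this is harmless in the small-$\|\Delta W\|_2$ regime enforced by Gate~I/II is the right justification. So your treatment is both correct and more careful than the paper's; the only difference in ``approach'' is that the paper simply asserts the result while you derive it.
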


\begin{lemma}[Weyl/Mirsky and Hoffman--Wielandt]
\label{lem:weyl-hw}
$|\sigma_k(W_+)-\sigma_k(W_0)|\le \|\Delta W\|_2$ and
$\sum_i(\sigma_i(W_+)-\sigma_i(W_0))^2 \le \|\Delta W\|_F^2$. \qed
\end{lemma}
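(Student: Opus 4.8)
The plan is to prove both inequalities of Lemma~\ref{lem:weyl-hw} as specializations of classical singular-value perturbation bounds, so that the lemma reduces to an assembly of standard facts (in the same spirit as the Wedin and Weyl/Mirsky results invoked just above, and consistent with the perturbation-theory reference~\citep{stewart1998perturbation} already cited for Gate~II). For the operator-norm bound $|\sigma_k(W_+)-\sigma_k(W_0)|\le\|\Delta W\|_2$ I would start from the Courant--Fischer min--max characterization $\sigma_k(M)=\min_{\dim\mathcal S = n-k+1}\,\max_{v\in\mathcal S,\,\|v\|_2=1}\|Mv\|_2$. Taking $\mathcal S_0$ to be the minimizing subspace for $W_0$ and using $\|(W_0+\Delta W)v\|_2\le\|W_0 v\|_2+\|\Delta W\|_2$ on $\mathcal S_0$ yields $\sigma_k(W_+)\le\sigma_k(W_0)+\|\Delta W\|_2$; rerunning the argument with $W_0$ and $W_+$ exchanged (i.e.\ $W_0=W_+-\Delta W$) gives the matching lower bound. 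This is exactly Weyl's inequality for singular values, equivalently Mirsky's theorem specialized to the spectral norm.

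For the Frobenius bound $\sum_i(\sigma_i(W_+)-\sigma_i(W_0))^2\le\|\Delta W\|_F^2$ I would reduce to the Hermitian case via the Jordan--Wielandt dilation: for a matrix $M$, let $\widetilde M$ be the symmetric matrix with zero diagonal blocks and $M$, $M^\top$ in the off-diagonal blocks; its eigenvalues are $\{\pm\sigma_i(M)\}$ together with a block of zeros, and $\|\widetilde M\|_F^2 = 2\|M\|_F^2$. Writing $\widetilde W_+ = \widetilde W_0 + \widetilde{\Delta W}$ and applying the Hermitian Hoffman--Wielandt inequality with the eigenvalues on both sides listed in the same (decreasing) order, the matched differences are precisely $\pm(\sigma_i(W_+)-\sigma_i(W_0))$ while the zero blocks contribute nothing, so $2\sum_i(\sigma_i(W_+)-\sigma_i(W_0))^2 \le 2\|\Delta W\|_F^2$, which is the claim after dividing by two.

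The only genuinely nontrivial ingredient is the Hermitian Hoffman--Wielandt inequality itself. A self-contained argument would diagonalize $\widetilde W_0 = U\Lambda U^\top$ and $\widetilde W_+ = V \Lambda' V^\top$, expand $\|\widetilde{\Delta W}\|_F^2 = \|\Lambda - (U^\top V)\Lambda'(U^\top V)^\top\|_F^2$, and minimize the resulting linear functional in the entrywise-squared magnitudes of $U^\top V$ over the Birkhoff polytope of doubly stochastic matrices, whose extreme points are permutations. Since this is a textbook result and the paper elsewhere freely invokes standard matrix perturbation theory, I expect simply to cite it rather than reprove it; in that case the lemma is a one-line corollary of Mirsky's and Hoffman--Wielandt's theorems and there is no real obstacle. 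If full self-containment were demanded, that Birkhoff-polytope minimization is where the work would sit.
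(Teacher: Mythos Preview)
Your proposal is correct, and the paper does exactly what you anticipated in your final paragraph: the lemma is stated with an immediate \qed and no argument whatsoever, treating both bounds as standard matrix-perturbation facts (the paper already invokes \citep{stewart1998perturbation} for Gate~II). Your Courant--Fischer derivation of the Weyl/Mirsky operator-norm bound and your Jordan--Wielandt dilation plus Hermitian Hoffman--Wielandt route to the Frobenius bound are both standard and sound, so there is nothing to reconcile---you simply supplied more detail than the paper chose to include.
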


\begin{corollary}[Projection stability]
\label{cor:proj-stability}
With the same assumptions (including $\gamma_k > 0$),
\[
\big\|U_k(W_0)U_k(W_0)^\top - U_k(W_+)U_k(W_+)^\top\big\|_2
\;=\; \big\|\sin\Theta\!\big(U_k(W_0),U_k(W_+)\big)\big\|_2
\;\le\; \frac{\sqrt{2\delta_W/\mu_W}}{\gamma_k}.
\]
The analogous bound holds for the right subspaces with $V_k$.
\emph{Interpretation.} The leading invariant subspaces rotate by at most
$O\!\big(\sqrt{\delta_W/\mu_W}/\gamma_k\big)$; when the gap is moderate, the rotation is small.
\qed
\end{corollary}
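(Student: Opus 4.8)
The plan is to chain three ingredients that are already in hand: the classical identity relating the operator norm of a difference of equidimensional orthogonal projectors to the sine of the largest principal angle, Wedin's $\sin\Theta$ bound (Lemma~\ref{lem:wedin}), and the layer-conditioned norm bound (Lemma~\ref{lem:kl-frob-op-schur}). No fresh estimate is required; the whole content is in assembling these and tracking the norm and constant.

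First I would fix notation and establish the projection identity. Write $P_0 = U_k(W_0)U_k(W_0)^\top$ and $P_+ = U_k(W_+)U_k(W_+)^\top$ for the orthogonal projectors onto the two $k$-dimensional left leading singular subspaces; these are well defined precisely because $\gamma_k = \sigma_k(W_0) - \sigma_{k+1}(W_0) > 0$ cleanly separates the top-$k$ block (the basis $U_k$ is determined only up to a $k\times k$ orthogonal change inside the block, which does not affect the projector, and the same holds for $W_+$ once $\|\Delta W\|_2 < \gamma_k$). A standard fact from matrix perturbation theory (Stewart--Sun; cf.~\citep{stewart1998perturbation}) is that for two orthogonal projectors of equal rank, $\|P_0 - P_+\|_2 = \|\sin\Theta(U_k(W_0),U_k(W_+))\|_2$, the sine of the largest principal angle. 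One proves it directly: taking the SVD of $U_k(W_0)^\top U_k(W_+)$ exposes the $\cos\theta_i$ as its singular values, and a short block computation shows that the nonzero singular values of $P_0 - P_+$ are exactly the $\sin\theta_i$; taking the max gives the identity. This yields the equality asserted in the statement.

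Second, I would invoke Wedin's theorem (Lemma~\ref{lem:wedin}), applicable under the hypothesis $\gamma_k > 0$, to get $\|\sin\Theta(U_k(W_0),U_k(W_+))\|_2 \le \|\Delta W\|_2/\gamma_k$, and then substitute the layer-conditioned bound $\|\Delta W\|_2 \le \|\Delta W\|_F \le \sqrt{2\delta_W/\mu_W}$ from Lemma~\ref{lem:kl-frob-op-schur}. Composing the two gives $\|P_0 - P_+\|_2 \le \sqrt{2\delta_W/\mu_W}\,/\,\gamma_k$, which is the claim. The right-subspace version is verbatim with $V_k$ in place of $U_k$: Wedin's theorem supplies the same bound for $V_k$ under the same gap condition, and the projection identity holds identically.

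The step needing the most care — though it is still routine — is the projection identity and its bookkeeping: one must be sure the two subspaces genuinely have the same dimension $k$ (true here), and that it is the \emph{operator} norm, not the Frobenius norm, that equals $\sin\theta_{\max}$; the Frobenius analogue would instead be $\sqrt{2}\,\|\sin\Theta\|_F$, so the choice of norm and the absence of an extra constant must be confirmed. Everything else is direct substitution, so I would keep the written proof to a few lines.
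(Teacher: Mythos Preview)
Your proposal is correct and matches the paper's approach: the corollary is stated with only a \qed in the paper, i.e., it is treated as immediate from the projection--$\sin\Theta$ identity together with Lemmas~\ref{lem:wedin} and~\ref{lem:kl-frob-op-schur}, which is precisely the chain you assemble. Your extra care about the operator-norm (versus Frobenius) form of the identity and the equal-rank requirement is appropriate and only sharpens what the paper leaves implicit.
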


\section{More Visualization}

\subsection{Jaccard matrix}
RL updates are highly consistent across independent training runs. 
Fig.~\ref{fig:jaccard} shows the pair-wise Jaccard similarity between the final update masks from five RLVR runs on different data and algorithms. 
The high similarity scores demonstrate that the optimization process consistently targets the same subset of parameters, providing strong evidence for a deterministic, non-random optimization bias.

\begin{figure}[t]
    \centering
    \includegraphics[width=0.8\linewidth]{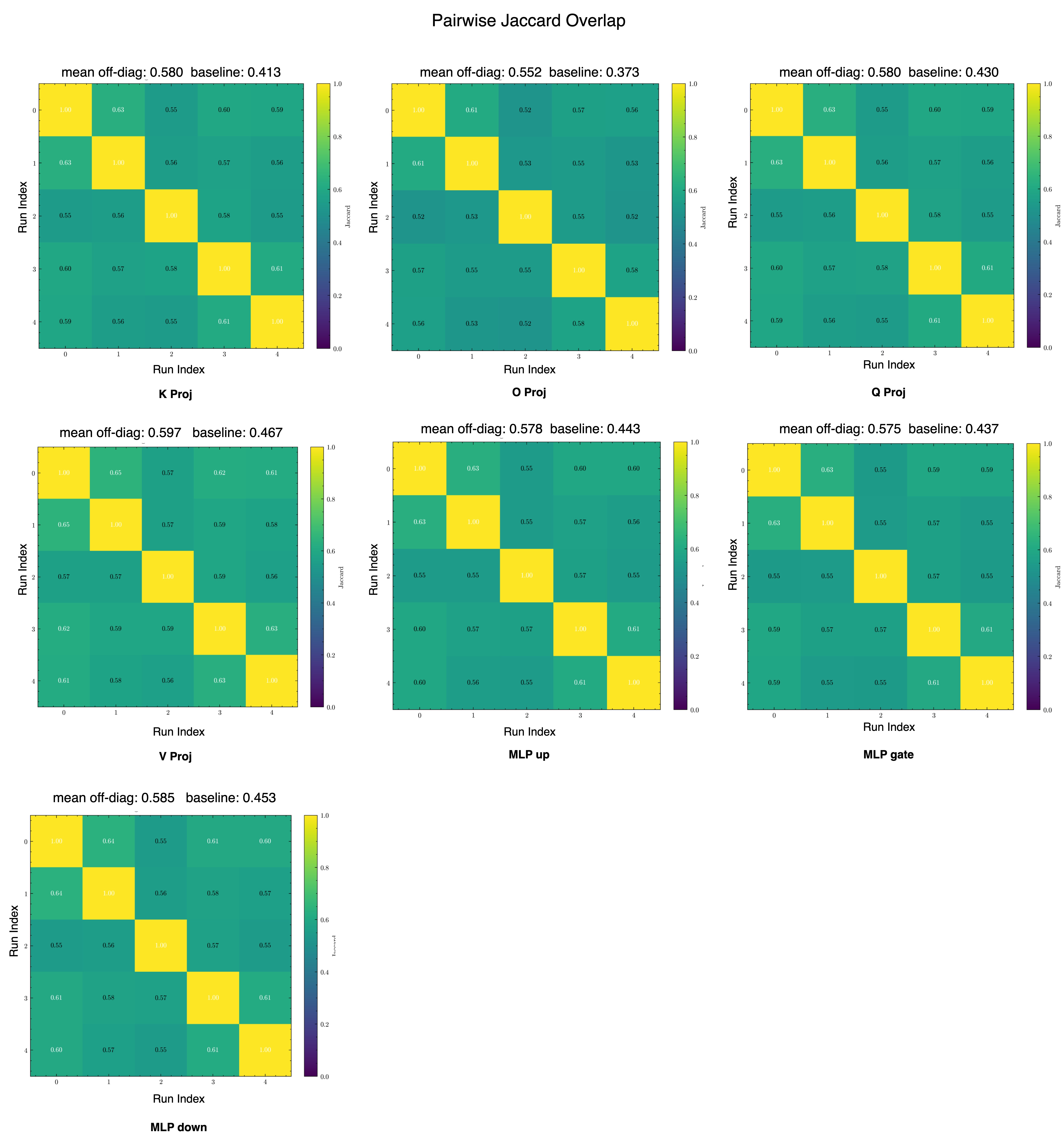}
    \vspace{-10pt}
    \caption{Pair-wise Jaccard similarity of update masks from five independent RLVR runs on Layer 13 of the \texttt{DS-Distill-Qwen-1.5B} model.
    }
    \label{fig:jaccard}
\end{figure}

\begin{figure}[h]
    \centering
    \includegraphics[width=0.75\linewidth]{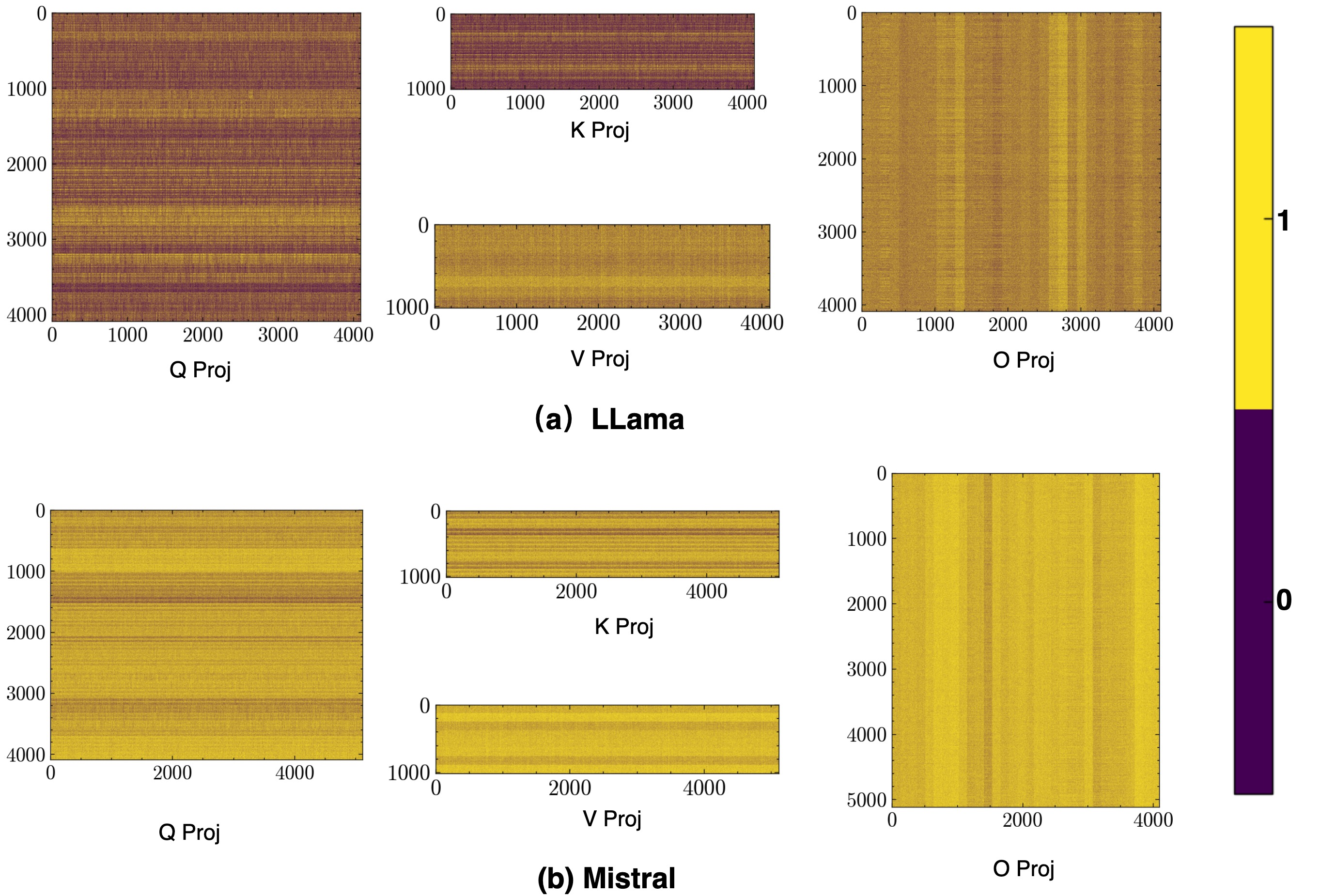}
    \caption{ 
    Structured Update observed on Llama(Llama-3.1-8B) and Mistral (Mistral-Small-24B) models. Here we plot the weight update mask using the zero-RL checkpoints from~\cite{zeng2025simplerlzooinvestigatingtamingzero}.
    }
    \label{fig:extra_llama}
\end{figure}

\begin{figure}[h]
    \vspace{-10pt}
    \centering
    \includegraphics[width=\linewidth]{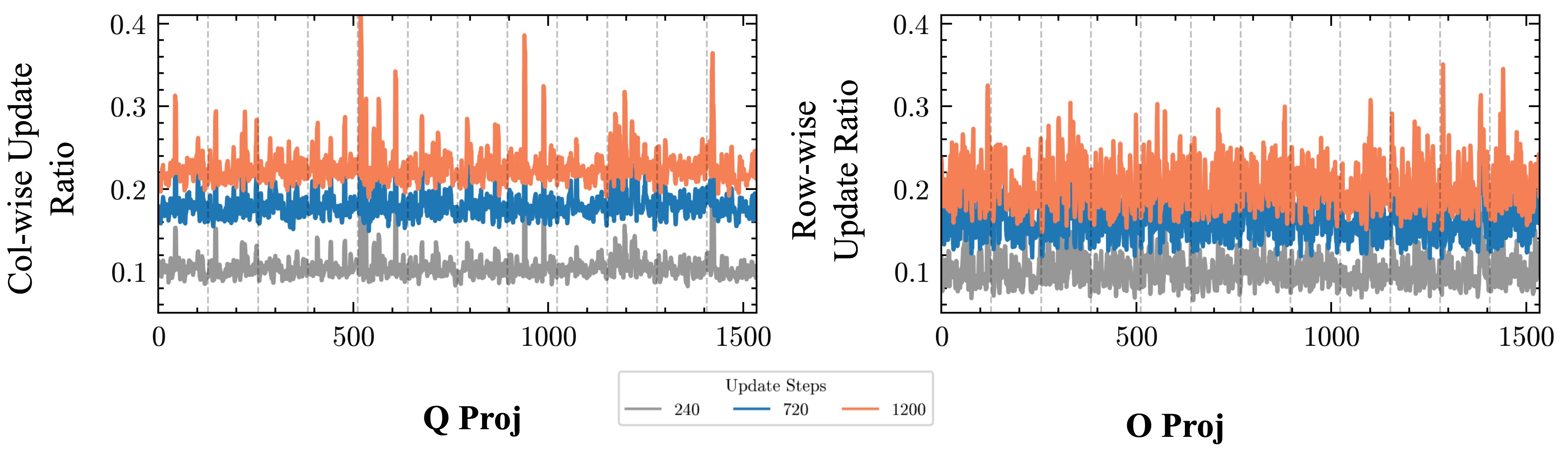}
    \vspace{-15pt}
    \caption{Temporal emergence of the optimization bias with row and column-wise update ratios for the 13th attention block across gradient update steps ($t\!\in\!\{240,720,1200\}$), smoothed with a 3-step window. \textit{The column-wise (Q) and row-wise (O) update ratios show a much weaker bias.}
    }
    \label{fig:rowwise_dynamics_comp}
    \vspace{-10pt}
\end{figure}

\subsection{Spectrum shift for DS-1.5B and Qwen3-1}
We also show the spectrum shift for DS-1.5B and Qwen3-14B here.

\label{appx:spec_more}

\begin{table}[!t]
  \centering
  \small
  \caption{
  Performance of DS-Qwen-1.5B with different masking strategies at 320 steps. Parameter counts shown are for linear layers only, excluding the embedding and head layers. Detailed evaluation settings are available in Appendix~\ref{appx:eva}.  \textbf{\textit{We observe that training only on principal weights $M_{princ}$ results in a clear accuracy gap compared to both the dense baseline and its complement $M_{princ}^c$.
  The models using the $M_{low}$ and $M_{princ}^c \cup M_{lowest}$ masks achieve performance closest to the dense baseline.}}}
  \label{tab:benchmarks_combined}
  \resizebox{\textwidth}{!}{%
    \begin{tabular}{l|c|cccc|c|c}
      \toprule
      \textbf{Model}       & \textbf{Mask}                & \textbf{Math500} & \textbf{AMC23} & \textbf{AIME24} & \textbf{AIME25} & \textbf{Average} & \textbf{\#params} \\
      \midrule
      \multirow{6}{*}{DS-Qwen-1.5B} & Dense & 84.20 & \textbf{81.56} & \textbf{36.98} & \textbf{27.03} & \textbf{57.44} & 100\% \\
       & $M_{princ}$ & 83.60 & 77.19 & 30.16 & 24.32 & 53.82 & 50\%  \\
       & $M_{princ}^c$ & 82.70 & 78.90 & 34.28 & 25.73 & 55.40 & 50\% \\
       & $M_{low}$ & 84.50 & 80.08 & 35.62 & 26.56 & 56.69 & 58.59\% \\
       & $M_{princ}^c \cup M_{low}$ & \textbf{85.20} & 78.83 & 34.74 & 26.20 & 56.24 & 74.02\% \\
       & Random-$M_{princ}^c \cup M_{low}$ & 84.50 & 77.35 & 34.48 & 25.01 & 55.34 & 74.02\% \\
      \bottomrule
    \end{tabular}%
  }
\end{table}

\begin{table}[!t]
  \centering
  \small
  \caption{
  Performance of DS-Qwen-1.5B with different masking strategies with a extended training window to 500 steps. Parameter counts shown are for linear layers only, excluding the embedding and head layers. Detailed evaluation settings are available in Appendix~\ref{appx:eva}.  \textbf{\textit{We observe that training only on principal weights $M_{princ}$ results in a clear accuracy gap compared to both the dense baseline and its complement $M_{princ}^c$.
  The models using the $M_{low}$ and $M_{princ}^c \cup M_{lowest}$ masks achieve performance closest to the dense baseline.}}}
  \label{tab:benchmarks_combined_500}
  \resizebox{\textwidth}{!}{%
    \begin{tabular}{l|c|cccc|c|c}
      \toprule
      \textbf{Model}       & \textbf{Mask}                & \textbf{Math500} & \textbf{AMC23} & \textbf{AIME24} & \textbf{AIME25} & \textbf{Average} & \textbf{\#params} \\
      \midrule
      \multirow{6}{*}{DS-Qwen-1.5B} & Dense & \textbf{84.5} & \textbf{83.52} & 38.28 & \textbf{28.075} & \textbf{58.59} & 100\% \\
       & $M_{princ}$ & 83.60 & 78.83 & 34.06 & 25.63 & 55.44 & 50\%  \\
       & $M_{princ}^c$ & 84.0 & 77.97 & 38.64 & 27.81 & 56.90 & 50\% \\
       & $M_{low}$ & 83.8 & 82.42 & 37.03 & 27.82 & 57.77 & 58.59\% \\
       & $M_{princ}^c \cup M_{low}$ & 84.10 & 81.41 & \textbf{40.30}  & 27.70 & 58.37 & 74.02\% \\
       & Random-$M_{princ}^c \cup M_{low}$ & 84.10 & 81.72 & 34.69 & 27.34 & 56.89 & 74.02\% \\
      \bottomrule
    \end{tabular}%
  }
\end{table}

\begin{figure}[h]
    \centering
    \includegraphics[width=0.5\linewidth]{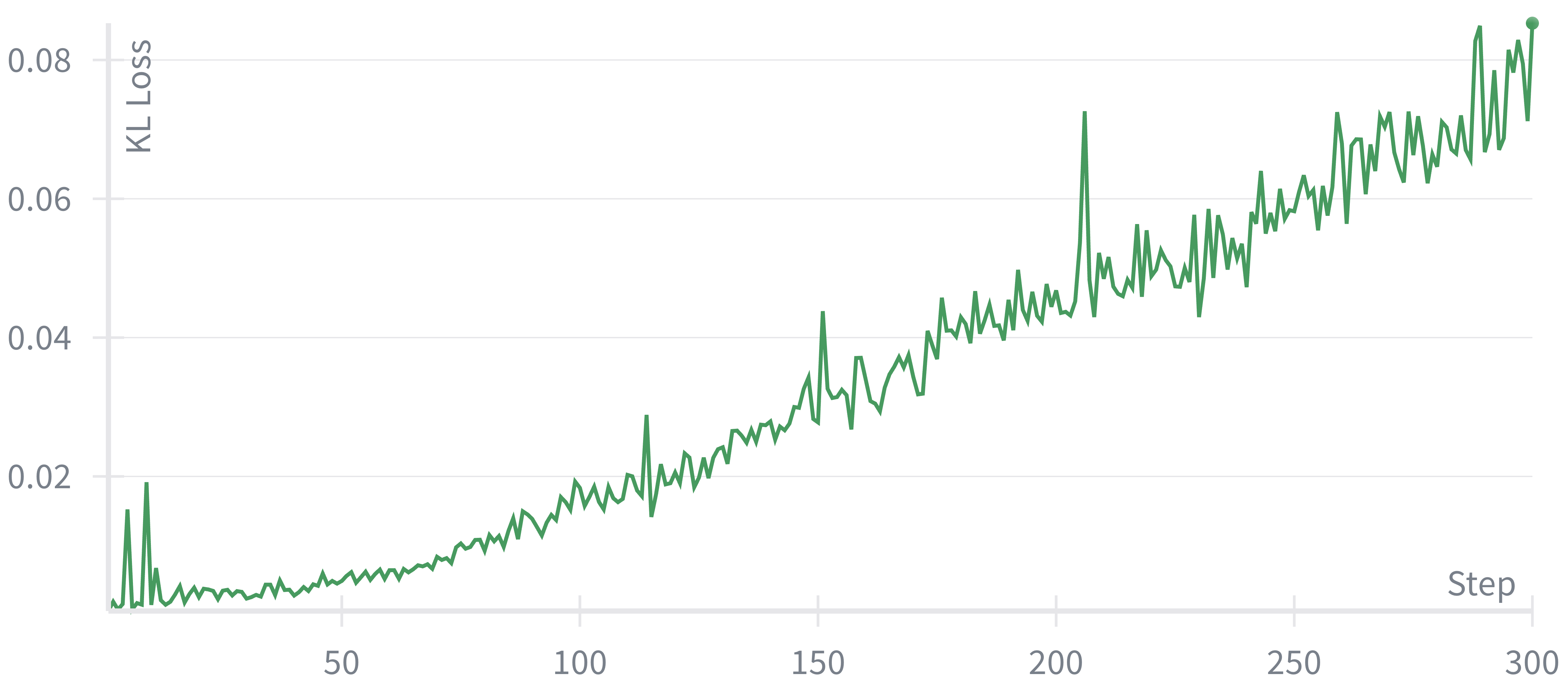}
    \caption{Token-wise KL loss. We show the token-wise KL loss during a DAPO run without a KL loss penalty, which shows a steadily increasing KL loss instead of being unconstrained.}
    \label{fig:dapo_loss}
\end{figure}

\newpage
\begin{figure}[h]
    \centering
    \includegraphics[width=0.9\linewidth]{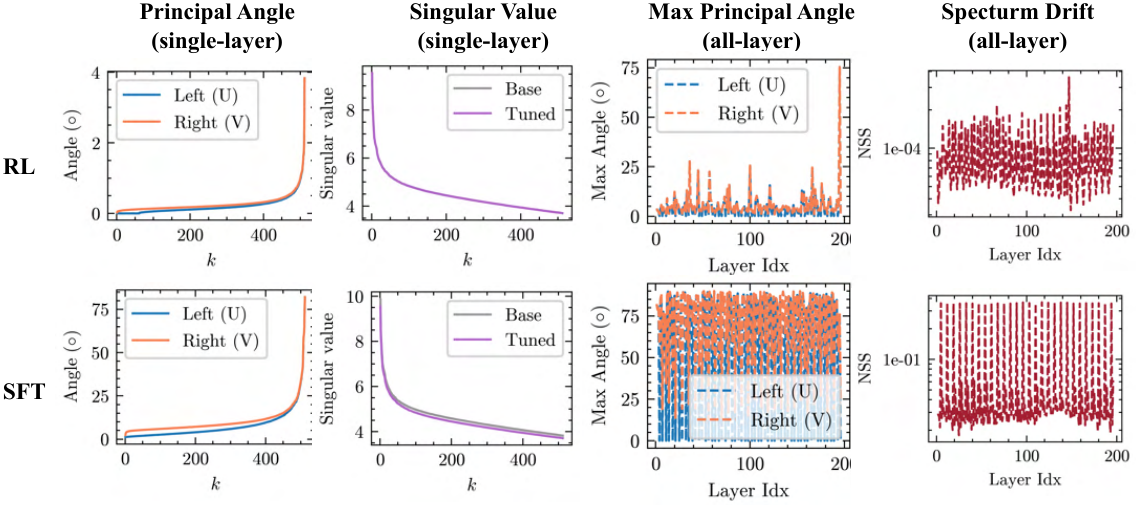}
    \vspace{-10pt}
    \caption{The spectrum probe results on the RL and SFT version on the \texttt{DS-Distill-Qwen-1.5B}~\cite{liu2025prorl}. RLVR shows surprisingly stable top-k spectrum with minimal subspace rotation and top-k eigenvalue changes.
    }
    \label{fig:rl-spec-1.5b}
    \vspace{-10pt}
\end{figure}

\begin{figure}[h]
    \centering
    \includegraphics[width=0.9\linewidth]{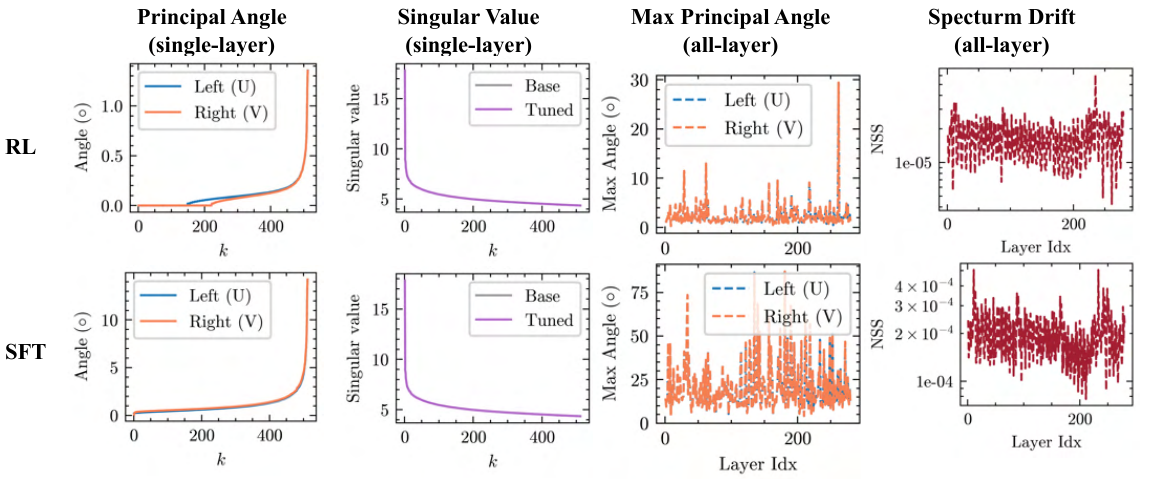}
    \vspace{-10pt}
    \caption{The spectrum probe results on the RL and SFT version on the Qwen3-14B~\cite{huan2025does}. RLVR shows surprisingly stable top-k spectrum with minimal subspace rotation and top-k eigenvalue changes.
    }
    \label{fig:rl-spec-14b}
    \vspace{-10pt}
\end{figure}

\begin{figure}
    \centering
    \includegraphics[width=0.9\linewidth]{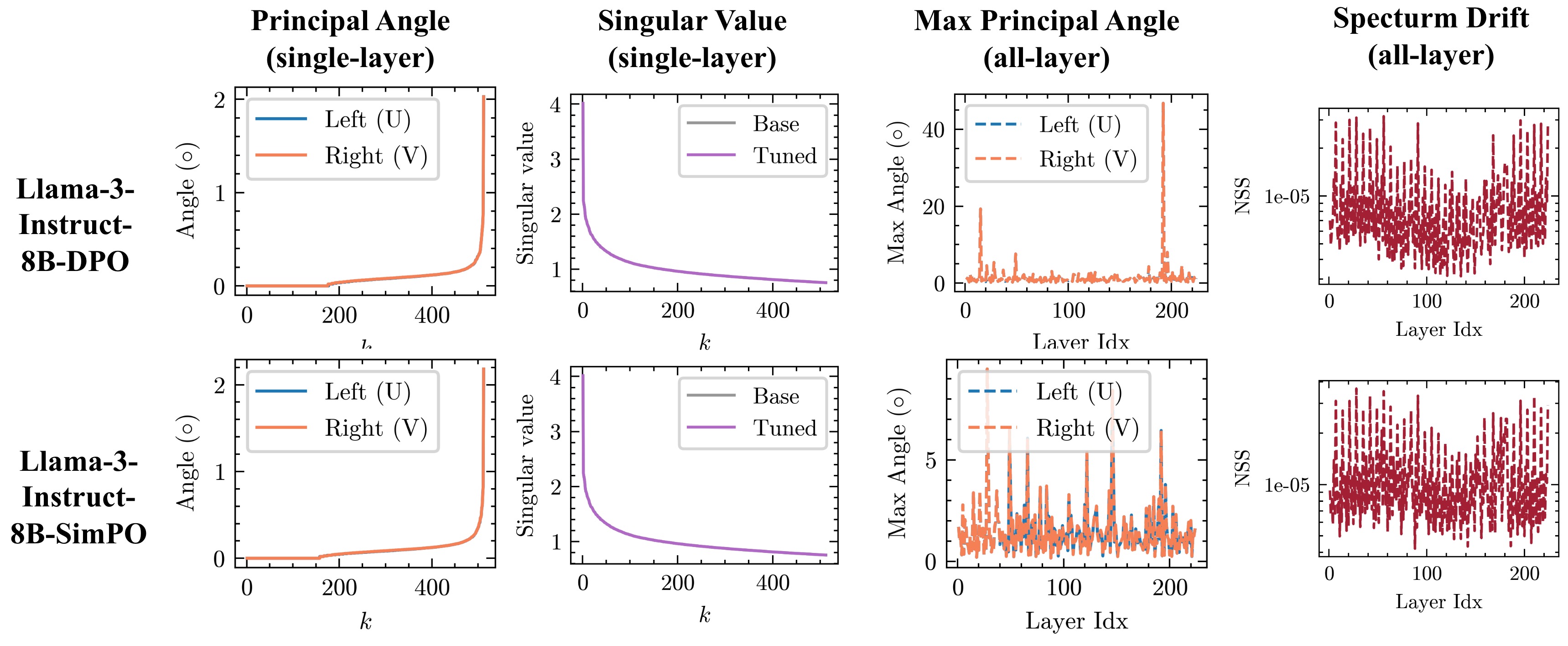}
    \vspace{-10pt}
    \caption{\small \textbf{Spectral geometry under RLHF setting~\cite{meng2024simpo}.}
Across RLHF checkpoints, RL training preserves layer spectra and induces only minor rotation of the top-$k$ subspaces, consistent with the RLVR regime.}
    \label{fig:rl-spec-rlhf}
    \vspace{-10pt}
\end{figure}

\end{document}